\documentclass{article} 
\usepackage{iclr2026_conference,times}

\usepackage{amsmath,amsfonts,bm}

\def\eqref#1{equation~\ref{#1}}

\def\1{\bm{1}}

\DeclareMathAlphabet{\mathsfit}{\encodingdefault}{\sfdefault}{m}{sl}
\SetMathAlphabet{\mathsfit}{bold}{\encodingdefault}{\sfdefault}{bx}{n}

\usepackage{hyperref}
\usepackage{url}
\usepackage{amsmath}
\usepackage{amssymb}
\usepackage{mathtools}
\usepackage{amsthm}
\usepackage{microtype}
\usepackage{graphicx}
\usepackage{subfigure}
\usepackage{wrapfig}
\usepackage{xcolor}
\usepackage{booktabs} 
\usepackage{adjustbox}
\usepackage{multirow}
\usepackage{subcaption}
\usepackage{bm}
\usepackage{algorithm}
\usepackage{algorithmic}
\usepackage{booktabs}
\usepackage{multirow}
\usepackage{xcolor}
\usepackage{tcolorbox}

\newtheorem{theorem}{Theorem}[section]

\definecolor{uclablue}{rgb}{0.15, 0.45, 0.68}
\hypersetup{
    breaklinks,
    citecolor=uclablue,
    colorlinks=true,
}

\title{Enhancing Multivariate Time Series Forecasting with Global Temporal Retrieval}

\author{Fanpu Cao$^\text{1}$, Lu Dai$^\text{1,2}$, Jindong Han$^\text{3}$, Hui Xiong$^\text{1,2}$\thanks{Corresponding author.} \\
$^\text{1}$The Hong Kong University of Science and Technology (Guangzhou), Guangzhou, China\\
$^\text{2}$The Hong Kong University of Science and Technology, Hong Kong SAR, China\\
$^\text{3}$Shandong University, Jinan, China\\
\texttt{fcao628@connect.hkust-gz.edu.cn}; \texttt{ldaiae@connect.ust.hk};\\
\texttt{jindong.han@sdu.edu.cn}; \texttt{xionghui@ust.hk}\\
}

\definecolor{mybrickred}{RGB}{203,65,84} 

\iclrfinalcopy 
\begin{document}

\maketitle

\begin{abstract}
Multivariate time series forecasting (MTSF) plays a vital role in numerous real-world applications, yet existing models remain constrained by their reliance on a limited historical context. This limitation prevents them from effectively capturing global periodic patterns that often span cycles significantly longer than the input horizon—despite such patterns carrying strong predictive signals. Naïve solutions, such as extending the historical window, lead to severe drawbacks, including overfitting, prohibitive computational costs, and redundant information processing. To address these challenges, we introduce the Global Temporal Retriever (GTR), a lightweight and plug-and-play module designed to extend any forecasting model’s temporal awareness beyond the immediate historical context. GTR maintains an adaptive global temporal embedding of the entire cycle and dynamically retrieves and aligns relevant global segments with the input sequence. By jointly modeling local and global dependencies through a 2D convolution and residual fusion, GTR effectively bridges short-term observations with long-term periodicity without altering the host model architecture. Extensive experiments on six real-world datasets demonstrate that GTR consistently delivers state-of-the-art performance across both short-term and long-term forecasting scenarios, while incurring minimal parameter and computational overhead. These results highlight GTR as an efficient and general solution for enhancing global periodicity modeling in MTSF tasks. Code is available at this repository: \href{https://github.com/macovaseas/GTR}{\texttt{https://github.com/macovaseas/GTR}}.
\end{abstract}

\section{Introduction}

Multivariate time series forecasting (MTSF) is a critical task with widespread applications in numerous domains, including energy grid management~\citep{energy}, climate modeling~\citep{climate}, macroeconomic planning~\citep{economic} and traffic flow management~\citep{trafficflow}. Accurate forecasting in these areas is essential for resource optimization, strategic planning, and risk mitigation. In recent years, deep learning-based methods have achieved state-of-the-art performance in MTSF tasks, with a variety of architectures including MLPs~\citep{tide}, RNNs~\citep{sgernn}, CNNs~\citep{SCINet}, Transformers~\citep{informer} and Mambas~\citep{spacetime} demonstrating strong performance.

\begin{wrapfigure}{R}{0.55\textwidth}
    \centering
    \includegraphics[width=\linewidth]{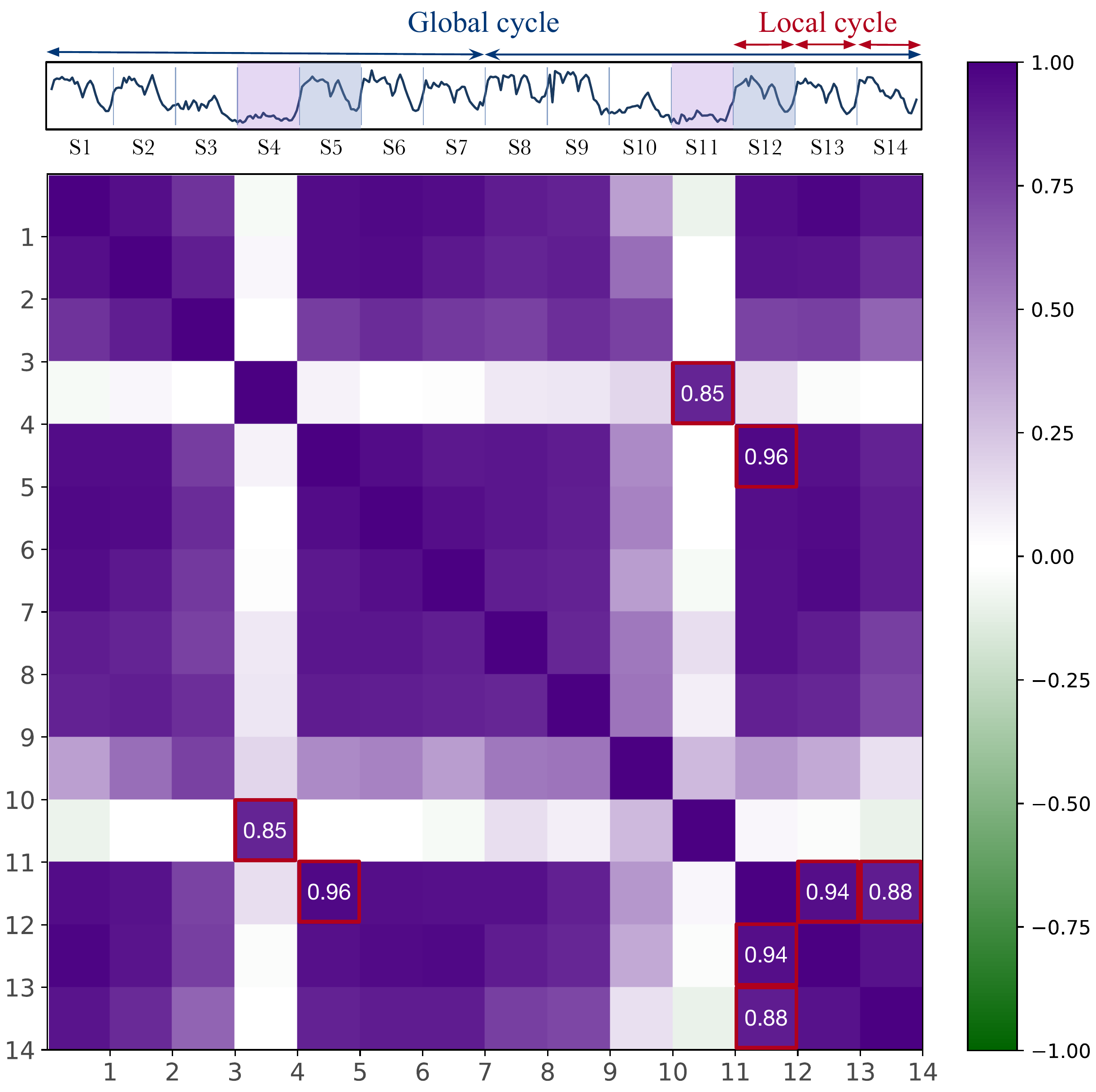}
    \caption{Pearson correlation matrix of time series segments from the Electricity dataset. We divided the series into several sub-series by the local-cycle length. The series demonstrates both the local-cycle pattern (e.g., $Corr(S_{12}, S_{13})=0.94,Corr(S_{12}, S_{14})=0.88$) and global-cycle pattern (e.g., $Corr(S_{12}, S_{5})=0.96$). Critically, the global-cycle pattern is stronger than local-cycle pattern (e.g., $Corr(S_{12}, S_{5}) > Corr(S_{12}, S_{13}), Corr(S_{12}, S_{14})$). }
    \label{fig:motivation}
\end{wrapfigure}

A fundamental challenge in time series forecasting lies in effectively modeling periodicity. Real-world time series data often exhibit complex periodic patterns at multiple scales, which can be broadly categorized into \textit{local} and \textit{global} cycles~\citep{TimeMixer}. Local periodic patterns, such as daily fluctuations, are characterized by their high frequency and distinct, repetitive nature, making them easy for models to capture from a limited historical window. In contrast, global periodic patterns—like weekly, monthly, or seasonal trends—present a more significant challenge. These patterns gradually unfold over long time spans, occur less frequently within look-back window, and are often obscured by non-stationary phenomena such as extreme values, missing data, and noisy perturbations~\citep{Non-station}. Despite these difficulties, effectively capturing global periodic information is crucial, as the predictive signal from a global cycle can be much stronger than that from local, adjacent patterns. As illustrated in Figure \ref{fig:motivation}, the correlation between a time segment and its distant counterpart in the global cycle is often higher than its correlation with its immediate neighbors. This demonstrates that global dependencies can be more informative for forecasting than adjacent temporal data, making them essential to harness for achieving high-accuracy predictions.
 
To capture complex periodic dynamics, researchers have explored several strategies. One prominent line of work employs seasonal-trend decomposition, isolating periodic components from the series for specialized modeling~\citep{autoformer,fedformer,DLinear,TimeMixer,timemixer++}. Another strategy operates in the frequency domain, using tools such as the Fast Fourier Transform (FFT)~\citep{fft} to capture cyclical signals more explicitly~\citep{film,FITS,filternet}. A third stream focuses on reshaping data representations; for instance, TimesNet~\citep{TimesNet} transforms 1D sequences into 2D tensors to better capture both intra-period and inter-period variations. These approaches have led to notable progress, but they all share a fundamental limitation: they operate strictly within a fixed look-back window. As a result, when the true cycle length extends far beyond the observed history, global periodic patterns remain invisible to the model.

A straightforward remedy is to simply extend the look-back window, but this approach quickly proves impractical for several interconnected reasons~\citep{IRPA}. First, as the input length grows, models risk overfitting to spurious noise, which undermines forecast stability rather than improving it. Second, longer input windows substantially inflate both computational and memory costs, often exceeding practical resource budgets. Finally, even with abundant history available, extracting truly relevant signals from overwhelming amounts of data remains inherently difficult, as models must distinguish meaningful long-term patterns from redundant or irrelevant information. Thus, capturing global periodicity requires a more principled solution than brute-force window extension.

To address this issue, we introduce the Global Temporal Retriever (GTR), a lightweight and plug-and-play module designed to extend a forecaster's temporal awareness beyond the immediate historical context. At its core, GTR maintains an adaptive global temporal embedding that encodes the long-range periodic structures across the entire cycle. For each input sequence, GTR first identifies its absolute position within the global cycle and retrieves the corresponding temporal segment from the learned global embedding. The retrieved segment is then aligned with the local input and fused through a lightweight 2D convolution, enabling joint modeling of both local-cycle dynamics and global-cycle dependencies. The resulting enriched representation is finally integrated back into the forecasting backbone via a residual connection, enhancing predictive power without altering its original architecture. Extensive experiments demonstrate that combining GTR technique with a simple MLP backbone achieves state-of-the-art performance on 6 real-world multivariate datasets by effectively capturing long-cycle dependencies. Importantly, GTR is highly efficient, introducing negligible parameter and runtime overhead, and can be seamlessly integrated into a wide range of forecasting backbones.

The primary contributions of this work are as follows:

\begin{itemize}
    \item We pinpoint a core limitation in current MTSF methods: fixed look-back windows often obscure global periodic patterns whose cycles exceed the observed history.

    \item We present the Global Temporal Retriever (GTR), a lightweight, plug-and-play, model-agnostic module that uses absolute temporal indexing to retrieve from an adaptive global cycle embedding and fuses local and global cues via 2D convolution—requiring no changes to the host forecaster.

    \item Across six benchmarks and both long- and short-term settings, GTR consistently improves diverse forecasting backbones and achieves overall state-of-the-art accuracy with minimal parameter and runtime overhead.
\end{itemize}

\section{Related Work}

Multivariate time series forecasting (MTSF) has widespread applications in various domains. In recent years, deep learning-based methods have achieved great success in MTSF tasks, including RNN-based methods~\citep{sgernn, sutranets, TFFM}, CNN-based methods~\citep{SCINet, TimesNet, moderntcn}, Linear \& MLP-based methods~\citep{DLinear, tide, CycleNet}, Transformer-based methods~\citep{PatchTST, iTransformer, TimeXer} and Mamba-based methods~\citep{spacetime, S-Mamba, timepro}. Among these methods, \textit{explicit periodicity modeling} and \textit{2D structural transformation} have rapidly evolved as dominant paradigms, with the former focusing on extracting multi-scale temporal patterns and the latter leveraging 2D representations to capture complex patterns beyond conventional sequential modeling.

\textbf{Modeling Periodicity of Time Series.}
Modeling temporal periodicity has long been used to boost forecasting accuracy. Recent methods such as Autoformer~\citep{autoformer}, FEDformer~\citep{fedformer}, DLinear~\citep{DLinear}, and the TimeMixer family~\citep{TimeMixer, timemixer++} perform seasonal–trend decomposition to better capture periodic components in the original time series. In parallel, frequency-domain modeling has been increasingly integrated into deep networks: FiLM, FITS, and FilterNet use FFT-based representations to capture patterns that are hard to express in the time domain~\citep{film, FITS, filternet, fft}. CycleNet further learns recurrent cycles as explicit periodic structures~\citep{CycleNet}. These methods effectively exploit multi-scale periodicity but are often bounded by the observed window or by assumptions of stationary cycles, limiting mining of information in very long cycles.

\textbf{2D-Variations of Time Series Modeling.}
Transforming 1D time series data into a 2D representation allows models to capture more complex features that are difficult for traditional sequential methods to extract. TimesNet~\citep{TimesNet} captures the temporal 2D-variations in 2D space by CNN-based vision backbones for the first time. LightTS~\citep{lightts} leverages two distinct sampling strategies to structure the 2D time-series data and employs MLPs for efficient feature extraction. MDCNet~\citep{mdcnet} employs multi-scale 2D convolutional networks on variational mode-decomposed time series components to extract multi-frequency patterns. Times2D~\citep{Times2D} converts 1D time series into 2D tensors via frequency-domain periodic decomposition and derivative heatmaps, leveraging 2D convolutions to capture short- and long-term dependencies. Clustering-enhanced modeling such as DUET introduces dual clustering on temporal and channel dimensions so as to capture both intra-series patterns and inter-channel dependencies~\citep{duet}. Despite stronger representations, these approaches infer periodic structure indirectly from the re-layout of data and still lack a consistent mechanism to align with global cycles.

\textbf{Retrieval-/Memory-Augmented Forecasting.}
Retrieval-augmented methods enlarge the effective context by fetching similar segments from the full history or external repositories and fusing them into the predictor~\citep{radt, RAFT, tsrag, timerag}. Such designs improve generalization and few-shot forecasting by exposing the model to recurring patterns beyond the current window. However, they rely heavily on similarity search quality and do not by themselves provide a compact, time-aligned representation of very long cycles, motivating approaches that directly encode and align global periodicity while remaining backbone-agnostic.

\section{Method}

\subsection{Problem Statement and Overview}
In multivariate time series forecasting, given historical observations \(\bm{X} = \{x_1, \ldots, x_T\} \in \mathbb{R}^{T \times N}\), where \(T\) represents the number of time steps and \(N\) represents the number of variables, our target is predicting the future \(S\) time steps \(\bm{Y} = \{x_{T+1}, \ldots, x_{T+S}\} \in \mathbb{R}^{S \times N}\). Critically, we focus on scenarios where the global cycle length significantly exceeds the historical input length $T$ — a common yet challenging setting in real-world time series forecasting.

\textbf{Method Overview.} In this paper, we propose the \textit{Global Temporal Retriever (GTR)} — a lightweight, plug-and-play module designed to extend a model’s temporal receptive field beyond the immediate input window. As illustrated in Figure~\ref{fig:overall}, the proposed method operates in two phases: (1) The GTR module enhances global cyclic patterns by dynamically retrieving periodic information from the global temporal embedding, then fusing them with the input series through a linear transformation and 2D convolution (c.f. Section 3.2). (2) The enhanced representation is subsequently processed by the backbone model (a multi-layer perceptron in this work, c.f. Section 3.3.) for final forecasting.

\begin{figure}[t]
  \centering
  \includegraphics[width=0.95\textwidth]{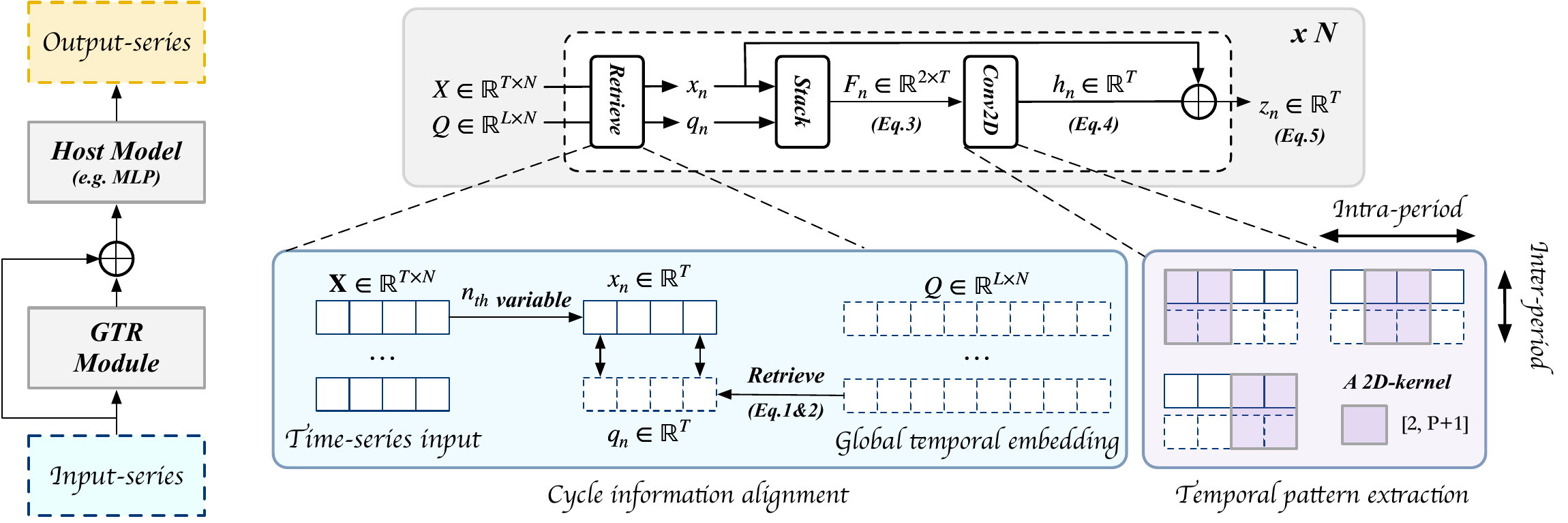}
  \caption{Overview of the Global Temporal Retriever (GTR): a plug-and-play module compatible with any MTSF forecaster. GTR operates in three stages: (1) retrieves corresponding segments from global temporal embedding; (2) aligns them with the input and uses 2D convolution to jointly model local and global periodicity; (3) fuses the result with the original input via residual connection.}
  \label{fig:overall}
\end{figure}

\subsection{Global Temporal Retriever}
\label{sec:gtr}
To address the fundamental limitation of lookback window constraints in capturing global periodic patterns, we propose the Global Temporal Retriever (GTR), a lightweight and plug-and-play module that enables models to access temporal information beyond the immediate historical context.

The core innovation of GTR lies in maintaining an adaptive global temporal embedding that encodes the entire cycle pattern of the time series. Specifically, we introduce a global parameter matrix $\bm{Q} \in \mathbb{R}^{L \times N}$, where $L$ denotes the global cycle length and $N$ represents the number of variables. This parameter is initialized to zero and automatically learns to capture the global periodic patterns inherent in each variable in the time series during training.

For notational convenience, we consider the univariate case. Given the $n$-th variable sequence $\mathbf{x}_n \in \mathbb{R}^{T}$ with lookback length $T$, the GTR module operates through two principled stages to bridge local observations with global temporal structure:

\noindent\textbf{Cycle Information Alignment.} 
To establish continuity with the global temporal structure, we define a cycle index vector $\bm{i} \in \mathbb{N}_0^T$ that precisely locates each position within the global cycle. For a sequence starting at absolute time $t_0$, the index vector is computed as:
\begin{equation}
\bm{i} = \big[(t_0 \bmod L) + \tau\big] \bmod L \quad \text{for} \quad \tau = 0, 1, \ldots, T-1.
\end{equation}
The cycle index enables positional alignment of the input sequence within the global periodic range. Using the index vector, we retrieve corresponding temporal references from the global cycle representation:
\begin{equation}
\bm{q}_n = \text{Linear}(\bm{Q}[\bm{i}, n]) \in \mathbb{R}^{T},
\end{equation}
where $\bm{q}_n$ represents the retrieved global temporal information corresponding to the current sequence's position within the cycle. The linear mapping is utilized to enhance the representational capacity of the temporal reference. We then stack the original input sequence and the aligned global query to facilitate interaction between local patterns and global cycle information:
\begin{equation}
\bm{F}_n = \begin{bmatrix} \bm{x}_n \\ \bm{q}_n \end{bmatrix} \in \mathbb{R}^{2 \times T}
\label{eq:alignment}
\end{equation}

\textbf{Temporal Pattern Extraction.} 
We apply a 2D convolution operation to extract temporal patterns that span both local and global scales:
\begin{equation}
\mathbf{h}_n = \mathcal{C}(\bm{F}_n; \kappa=(2, 1 + 2\lfloor P/2 \rfloor)) \in \mathbb{R}^{T}
\end{equation}
where $\mathcal{C}$ denotes the convolution operator, and $\kappa$ is designed with $P$ representing the dominant high-frequency period length (e.g., daily patterns in hourly data), capturing interactions across the two temporal scales while preserving the periodic structure. Finally, the extracted features are integrated with the original sequence through a residual connection with dropout:

\begin{equation}
\mathbf{z}_n = \mathbf{x}_n + \text{Dropout}(\mathbf{h}_n) \in \mathbb{R}^{T}
\end{equation}

The key advantage of GTR is its ability to enable the model to retrieve and utilize periodic information from the entire cycle, regardless of the lookback window length. This is particularly beneficial for time series with long-term periodicities (\textit{e.g.}, monthly or yearly patterns) that cannot be captured within typical short lookback windows.

\subsection{Backbone and Projection Structure}

Many state-of-the-art methods in multivariate time series forecasting (MTSF) are built upon Multi-Layer Perceptrons (MLPs)~\citep{tide, CycleNet}, demonstrating strong empirical performance and robustness. Inspired by these successes, we adopt a lightweight MLP-based backbone to capture temporal dependencies in the sequence.

\textbf{Input Projection.} Let $\bm{Z} \in \mathbb{R}^{T \times N}$ be the representation derived from the GTR Technique, we first project the representation into a higher-dimensional latent space via a linear transformation:

\[
\bm{Z} = \text{Linear}(\bm{Z}) \in \mathbb{R}^{D \times N}.
\]

\textbf{Multi-Layer Perceptron.} The MLP backbone consists of two linear layers with GeLU activation functions~\citep{gelu} and a residual connection. Formally:

\[
\begin{aligned}
\bm{G}_1 &= \text{GeLU}(\text{Linear}(\bm{Z})) \in \mathbb{R}^{D \times N}, \\
\bm{G}_2 &= \text{GeLU}(\text{Linear}(\bm{G}_1)) \in \mathbb{R}^{D \times N}, \\
\bm{Z}_{\text{out}} &= \bm{G}_2 + \bm{Z} \in \mathbb{R}^{D \times N}.
\end{aligned}
\]

\textbf{Output Projection.} The extracted features are then mapped to the forecast horizon via a linear output layer, applied after a dropout layer for regularization:

\[
\hat{\bm{Y}} = \text{Linear}(\text{Dropout}(\bm{Z}_{\text{out}})) \in \mathbb{R}^{S \times N}.
\]

\subsection{Method Details}

\textbf{Instance Normalization.} Distribution shift between training and testing data is a prevalent challenge in time series forecasting \citep{Non-station}. Reversible Instance Normalization (RevIN) \citep{Revin} has been demonstrated to effectively alleviate this issue by applying instance normalization and denormalization to the inputs and outputs of the model. The module normalizes the input batch by removing instance-specific mean and variance, and subsequently reverses this operation on the model outputs. We adopt RevIN to stabilize the non-stationarity of the input time series.

\textbf{Seamless Compatibility.} GTR preserves input dimensionality while integrating global periodic patterns, enabling plug-and-play integration with any forecasting backbone without architectural modifications. It processes raw input series to generate enhanced feature representations, directly fed into the backbone for end-to-end training.

\textbf{Complexity Analysis.} Reversible instance normalization operates at $O(NT)$ complexity. Series embedding requires $O(NTd)$ operations due to projection into the hidden dimension. The GTR module's linear mapping dominates with $O(NT^2)$ complexity, while the 2D convolution is negligible since cycle length $P$ is a small constant. The backbone and predictor exhibit $O(Nd^2)$ and $O(NSd)$ complexity, respectively. The total complexity is $O(NT^2 + Nd^2 + NTd + NSd)$, which is linear to $N$, and $S$. When $T$ is significantly smaller than $d$, the complexity is dominated by the $O(Nd^2)$ term.

\section{Experiment}

\subsection{Experiment Setup}

\textbf{Dataset.} We evaluate the proposed method on six widely used real-world datasets, including the ETT series~\citep{informer}, PEMS series~\citep{iTransformer}, Electricity, Traffic, Solar-Energy and Weather datasets~\citep{autoformer}. A detailed description of this part is provided in Appendix \ref{app:dataset}.

\textbf{Evaluation.} We use Mean Squared Error (MSE) and Mean Absolute Error (MAE) as the core metrics for the evaluation. To fairly compare the forecasting performance, we follow the same evaluation protocol, where the length of the historical horizon is set as $T$ = 96 for all models. We set the prediction lengths $S$ to \{12, 24, 48, 96\} for PEMS dataset and \{96, 192, 336, 720\} for others. 

\textbf{Baselines.} To evaluate the performance of the proposed method, we compare it against several recently well-acknowledged forecasting models, including RAFT~\citep{RAFT}, S-Mamba~\citep{S-Mamba}, TQNet~\citep{TQNet}, TimeXer~\citep{TimeXer}, CycleNet~\citep{CycleNet}, SOFTS~\citep{SOFTS}, TimeMixer~\citep{TimeMixer}, iTransformer~\citep{iTransformer}, PatchTST~\citep{PatchTST} and DLinear~\citep{DLinear}.

\textbf{Implementation Details.} All experiments are implemented in PyTorch~\citep{pytorch} and conducted on a single NVIDIA RTX 3090 24GB GPU. We use the Adam optimizer~\citep{adam} with a learning rate selected from \{1e-3, 3e-3, 5e-4\}. The hidden dim for MLP backbone $D$ is set to 512. For additional details on hyperparameters and settings, please refer to the Appendix \ref{app:experimentdetails}.

\subsection{Experiments Results}

\subsubsection{Long-term Forecasting}

\begin{table*}[!htb]
\centering
\caption{Long-term forecasting results. The look-back length \(T\) is fixed at 96. All results are averaged across four different forecasting horizons \(S \in \{96,192,336,720\}\). 
The best results are highlighted in \textbf{bold}, while the second-best results are \underline{underlined}. Count row counts the number of times each model ranks in the top 2. See Table \ref{tab:main_full} in Appendix \ref{sec:full_results} for the full results.}
\label{tab:avg_long-term}
\begin{adjustbox}{max width=\linewidth}
\begin{tabular}{@{}c|cc|cc|cc|cc|cc|cc|cc|cc|cc|cc|cc@{}}
\toprule
Model & \multicolumn{2}{c|}{\begin{tabular}[c]{@{}c@{}}GTR\\ (Ours)\end{tabular}} & \multicolumn{2}{c|}{\begin{tabular}[c]{@{}c@{}}RAFT\\ \citeyearpar{RAFT}\end{tabular}} & \multicolumn{2}{c|}{\begin{tabular}[c]{@{}c@{}}S-Mamba\\ \citeyearpar{S-Mamba}\end{tabular}} & \multicolumn{2}{c|}{\begin{tabular}[c]{@{}c@{}}TQNet\\ \citeyearpar{TQNet}\end{tabular}} & \multicolumn{2}{c|}{\begin{tabular}[c]{@{}c@{}}TimeXer\\ \citeyearpar{TimeXer}\end{tabular}} & \multicolumn{2}{c|}{\begin{tabular}[c]{@{}c@{}}CycleNet\\ \citeyearpar{CycleNet}\end{tabular}} & \multicolumn{2}{c|}{\begin{tabular}[c]{@{}c@{}}SOFTS\\ \citeyearpar{SOFTS}\end{tabular}} & \multicolumn{2}{c|}{\begin{tabular}[c]{@{}c@{}}TimeMixer\\ \citeyearpar{TimeMixer}\end{tabular}} & \multicolumn{2}{c|}{\begin{tabular}[c]{@{}c@{}}iTransformer\\ \citeyearpar{iTransformer}\end{tabular}} & \multicolumn{2}{c|}{\begin{tabular}[c]{@{}c@{}}PatchTST\\ \citeyearpar{PatchTST}\end{tabular}} & \multicolumn{2}{c}{\begin{tabular}[c]{@{}c@{}}DLinear\\ \citeyearpar{DLinear}\end{tabular}} \\ \midrule
Metric & MSE & MAE & MSE & MAE & MSE & MAE & MSE & MAE & MSE & MAE & MSE & MAE & MSE & MAE & MSE & MAE & MSE & MAE & MSE & MAE & MSE & MAE \\ \midrule
ETTh1 & 0.439 & \underline{0.434} & \textbf{0.428} & \textbf{0.433} & 0.455 & 0.450 & 0.441 & 0.434 & \underline{0.437} & 0.437 & 0.457 & 0.441 & 0.449 & 0.443 & 0.447 & 0.440 & 0.454 & 0.448 & 0.469 & 0.455 & 0.456 & 0.452 \\ \midrule
ETTh2 & 0.372 & 0.400 & 0.382 & 0.410 & 0.381 & 0.405 & 0.378 & 0.402 & \underline{0.368} & \underline{0.396} & 0.388 & 0.409 & 0.373 & 0.400 & \textbf{0.365} & \textbf{0.395} & 0.383 & 0.407 & 0.387 & 0.407 & 0.559 & 0.515 \\ \midrule
ETTm1 & \textbf{0.367} & \textbf{0.389} & 0.381 & 0.400 & 0.398 & 0.405 & \underline{0.377} & \underline{0.393} & 0.382 & 0.397 & 0.379 & 0.396 & 0.393 & 0.402 & 0.381 & 0.396 & 0.407 & 0.410 & 0.387 & 0.400 & 0.403 & 0.407 \\ \midrule
ETTm2 & \underline{0.268} & \underline{0.315} & 0.281 & 0.330 & 0.288 & 0.332 & 0.277 & 0.323 & 0.274 & 0.322 & \textbf{0.266} & \textbf{0.314} & 0.287 & 0.330 & 0.275 & 0.323 & 0.288 & 0.332 & 0.281 & 0.326 & 0.350 & 0.401 \\ \midrule
Electricity & \underline{0.166} & 0.260 & 0.175 & 0.272 & 0.170 & 0.265 & \textbf{0.164} & \textbf{0.259} & 0.171 & 0.270 & 0.168 & \underline{0.259} & 0.174 & 0.264 & 0.182 & 0.273 & 0.178 & 0.270 & 0.205 & 0.290 & 0.212 & 0.300 \\ \midrule
Solar & \textbf{0.194} & \textbf{0.245} & 0.301 & 0.303 & 0.240 & 0.273 & \underline{0.198} & \underline{0.256} & 0.237 & 0.302 & 0.210 & 0.261 & 0.229 & 0.256 & 0.216 & 0.280 & 0.233 & 0.262 & 0.270 & 0.307 & 0.330 & 0.401 \\ \midrule
Traffic & 0.470 & 0.280 & \underline{0.414} & 0.284 & 0.414 & \underline{0.276} & 0.445 & 0.276 & 0.466 & 0.287 & 0.472 & 0.301 & \textbf{0.409} & \textbf{0.267} & 0.485 & 0.298 & 0.428 & 0.282 & 0.481 & 0.300 & 0.625 & 0.383 \\ \midrule
Weather & \textbf{0.239} & \textbf{0.268} & 0.270 & 0.309 & 0.251 & 0.276 & 0.242 & \underline{0.269} & 0.241 & 0.271 & 0.243 & 0.271 & 0.255 & 0.278 & \underline{0.240} & 0.272 & 0.258 & 0.278 & 0.259 & 0.273 & 0.265 & 0.317 \\ \midrule 
{Count} & \textbf{5} & \textbf{5} & 2 & 1 & 0 & 1 & 3 & 4 & 2 & 1 & 1 & 2 & 1 & 1 & 2 & 1 & 0 & 0 & 0 & 0 & 0 & 0\\
\bottomrule
\end{tabular}
\end{adjustbox}
\end{table*}

\textbf{Results.} Table \ref{tab:avg_long-term} shows GTR outperforms other models in long-term forecasting across various datasets. Across 16 prediction tasks, GTR achieved top-2 performance in 10 of them. Due to the traffic dataset's strong spatiotemporal dependencies and temporal lag effects, GTR exhibits higher MSE than S-Mamba, SOFTS and iTransformer, as these models better capture critical inter-variable relationships. GTR is good at handling complex high-dimensional time series. For example, on the challenging Solar-Energy dataset, it exceeds the CycleNet by 8.2\% in MSE and 6.5\% in MAE.

\subsubsection{Short-term Forecasting}

\begin{table*}[!htb]
\centering
\caption{Short-term forecasting results. The look-back length \(T\) is fixed at 96. All results are averaged across four different forecasting horizons \(S \in \{12,24,48,96\}\). 
The best results are highlighted in \textbf{bold}, while the second-best results are \underline{underlined}. Count row counts the number of times each model ranks in the top 2. See Table \ref{tab:main_full} in Appendix \ref{sec:full_results} for the full results.}
\label{tab:avg_short-term}
\begin{adjustbox}{max width=\linewidth}
\begin{tabular}{@{}c|cc|cc|cc|cc|cc|cc|cc|cc|cc|cc|cc@{}}
\toprule
Model & \multicolumn{2}{c|}{\begin{tabular}[c]{@{}c@{}}GTR\\ (Ours)\end{tabular}} & \multicolumn{2}{c|}{\begin{tabular}[c]{@{}c@{}}RAFT\\ \citeyearpar{RAFT}\end{tabular}} & \multicolumn{2}{c|}{\begin{tabular}[c]{@{}c@{}}S-Mamba\\ \citeyearpar{S-Mamba}\end{tabular}} & \multicolumn{2}{c|}{\begin{tabular}[c]{@{}c@{}}TQNet\\ \citeyearpar{TQNet}\end{tabular}} & \multicolumn{2}{c|}{\begin{tabular}[c]{@{}c@{}}TimeXer\\ \citeyearpar{TimeXer}\end{tabular}} & \multicolumn{2}{c|}{\begin{tabular}[c]{@{}c@{}}CycleNet\\ \citeyearpar{CycleNet}\end{tabular}} & \multicolumn{2}{c|}{\begin{tabular}[c]{@{}c@{}}SOFTS\\ \citeyearpar{SOFTS}\end{tabular}} & \multicolumn{2}{c|}{\begin{tabular}[c]{@{}c@{}}TimeMixer\\ \citeyearpar{TimeMixer}\end{tabular}} & \multicolumn{2}{c|}{\begin{tabular}[c]{@{}c@{}}iTransformer\\ \citeyearpar{iTransformer}\end{tabular}} & \multicolumn{2}{c|}{\begin{tabular}[c]{@{}c@{}}PatchTST\\ \citeyearpar{PatchTST}\end{tabular}} & \multicolumn{2}{c}{\begin{tabular}[c]{@{}c@{}}DLinear\\ \citeyearpar{DLinear}\end{tabular}} \\ \midrule
Metric & MSE & MAE & MSE & MAE & MSE & MAE & MSE & MAE & MSE & MAE & MSE & MAE & MSE & MAE & MSE & MAE & MSE & MAE & MSE & MAE & MSE & MAE \\ \midrule
PEMS03 & \textbf{0.087} & \textbf{0.189} & 0.144 & 0.230 & 0.122 & 0.228 & \underline{0.097} & \underline{0.203} & 0.112 & 0.214 & 0.118 & 0.226 & 0.104 & 0.210 & 0.154 & 0.278 & 0.113 & 0.222 & 0.180 & 0.291 & 0.278 & 0.375 \\ \midrule
PEMS04 & \textbf{0.087} & \textbf{0.189} & 0.104 & 0.210 & 0.103 & 0.211 & \underline{0.091} & \underline{0.197} & 0.105 & 0.209 & 0.119 & 0.232 & 0.102 & 0.208 & 0.156 & 0.282 & 0.111 & 0.221 & 0.195 & 0.307 & 0.295 & 0.388 \\ \midrule
PEMS07 & \underline{0.076} & \textbf{0.169} & 0.094 & 0.193 & 0.089 & 0.188 & \textbf{0.075} & \underline{0.171} & 0.085 & 0.182 & 0.113 & 0.214 & 0.086 & 0.184 & 0.143 & 0.259 & 0.101 & 0.204 & 0.211 & 0.303 & 0.329 & 0.396 \\ \midrule
PEMS08 & \underline{0.142} & \underline{0.222} & 0.151 & 0.234 & 0.148 & 0.224 & \underline{0.142} & 0.230 & 0.175 & 0.250 & 0.150 & 0.246 & \textbf{0.138} & \textbf{0.220} & 0.253 & 0.336 & 0.150 & 0.226 & 0.280 & 0.321 & 0.379 & 0.416 \\ \midrule
{Count} & \textbf{4} & \textbf{4} & 0 & 0 & 0 & 0 & \textbf{4} & \underline{3} & 0 & 0 & 0 & 0 & 1 & 1 & 0 & 0 & 0 & 0 & 0 & 0 & 0 & 0\\
\bottomrule
\end{tabular}
\end{adjustbox}
\end{table*}

\textbf{Results.} Table \ref{tab:avg_short-term} shows GTR outperforms state-of-the-art models across all metrics. Across 8 prediction tasks, GTR achieved top-2 performance in all of them. Compared to iTransformer, GTR reduces MSE by 18.7\% and MAE by 12.1\% on average across all datasets and horizons; compared to S-Mamba, it achieves 15.7\% MSE and 9.6\% MAE reductions, with the most significant gains observed in PEMS03, achieving 28.7\% MSE and 20.6\% MAE reduction. 

\subsection{Ablation Studies and Analysis}

\textbf{Effectiveness of GTR.} To evaluate this, we conducted comprehensive ablation studies across three highly periodic benchmark datasets. Table \ref{tab:ablation_gtr} systematically quantifies both the intrinsic contribution of GTR within our framework and its cross-model generalization capability when integrated into diverse state-of-the-art architectures. 

\begin{table*}[!htb]
\centering
\caption{Ablation studies of the GTR technique. Left: Generalization performance of GTR technique on different models. Right: Ablation study revealing the effectiveness of the GTR technique. The look-back length is fixed at 96 for all the experiments.}
\label{tab:ablation_gtr}
\begin{adjustbox}{max width=\linewidth}
\begin{tabular}{@{}cc|cccccc|cccccc|cccccc||cccccc@{}}
\toprule
\multicolumn{2}{c|}{Model} & \multicolumn{6}{c|}{iTransformer \citeyearpar{iTransformer}} & \multicolumn{6}{c|}{PatchTST \citeyearpar{PatchTST}} & \multicolumn{6}{c||}{DLinear \citeyearpar{DLinear}} & \multicolumn{6}{c}{MLP-Layer (Ours)} \\ \midrule
\multicolumn{2}{c|}{Setup} & \multicolumn{2}{c|}{Original} & \multicolumn{2}{c|}{+ GTR Tech.} & \multicolumn{2}{c|}{Improve $\uparrow$} & \multicolumn{2}{c|}{Original} & \multicolumn{2}{c|}{+ GTR Tech.} & \multicolumn{2}{c|}{Improve $\uparrow$} & \multicolumn{2}{c|}{Original} & \multicolumn{2}{c|}{+ GTR Tech.} & \multicolumn{2}{c||}{Improve $\uparrow$} & \multicolumn{2}{c|}{Original} & \multicolumn{2}{c|}{+ GTR Tech.} & \multicolumn{2}{c}{Improve $\uparrow$} \\ \midrule
\multicolumn{2}{c|}{Metric} & MSE & \multicolumn{1}{c|}{MAE} & MSE & \multicolumn{1}{c|}{MAE} & MSE & \multicolumn{1}{c|}{MAE} & MSE & \multicolumn{1}{c|}{MAE} & MSE & \multicolumn{1}{c|}{MAE} & MSE & \multicolumn{1}{c|}{MAE} & MSE & \multicolumn{1}{c|}{MAE} & MSE & \multicolumn{1}{c|}{MAE} & MSE & \multicolumn{1}{c||}{MAE} & MSE & \multicolumn{1}{c|}{MAE} & MSE & \multicolumn{1}{c|}{MAE} & MSE & \multicolumn{1}{c}{MAE} \\ \midrule
\multicolumn{1}{c|}{\multirow{5}{*}{\rotatebox{90}{Electricity}}} & 96 & 0.150 & \multicolumn{1}{c|}{0.241} & \textbf{0.136} & \multicolumn{1}{c|}{\textbf{0.231}} & \textcolor{red}{10.2\%} & \textcolor{red}{4.3\%} & 0.164 & \multicolumn{1}{c|}{0.254} & \textbf{0.135} & \multicolumn{1}{c|}{\textbf{0.233}} & \textcolor{red}{21.4\%} & \textcolor{red}{9.0\%} & 0.195 & \multicolumn{1}{c|}{0.277} & \textbf{0.141} & \multicolumn{1}{c|}{\textbf{0.240}} & \textcolor{red}{38.2\%} & \textcolor{red}{15.4\%} & 0.165 &  \multicolumn{1}{c|}{0.252} & \textbf{0.134} & \multicolumn{1}{c|}{\textbf{0.229}} & \textcolor{red}{23.1\%} & \textcolor{red}{10.0\%}\\
\multicolumn{1}{c|}{} & 192 & 0.164 & \multicolumn{1}{c|}{0.255} & \textbf{0.155} & \multicolumn{1}{c|}{\textbf{0.250}} & \textcolor{red}{5.8\%} & \textcolor{red}{2.0\%} & 0.174 & \multicolumn{1}{c|}{0.265} & \textbf{0.153} & \multicolumn{1}{c|}{\textbf{0.249}} & \textcolor{red}{13.7\%} & \textcolor{red}{6.4\%} & 0.194 & \multicolumn{1}{c|}{0.279} & \textbf{0.159} & \multicolumn{1}{c|}{\textbf{0.257}} & \textcolor{red}{22.0\%} & \textcolor{red}{8.5\%} & 0.173 &  \multicolumn{1}{c|}{0.260} & \textbf{0.152} & \multicolumn{1}{c|}{\textbf{0.245}} & \textcolor{red}{13.8\%} & \textcolor{red}{6.1\%}\\
\multicolumn{1}{c|}{} & 336 & 0.178 & \multicolumn{1}{c|}{0.272} & \textbf{0.166} & \multicolumn{1}{c|}{\textbf{0.263}} & \textcolor{red}{7.2\%} & \textcolor{red}{3.4\%} & 0.193 & \multicolumn{1}{c|}{0.285} & \textbf{0.172} & \multicolumn{1}{c|}{\textbf{0.270}} & \textcolor{red}{12.2\%} & \textcolor{red}{5.5\%} & 0.207 & \multicolumn{1}{c|}{0.295} & \textbf{0.173} & \multicolumn{1}{c|}{\textbf{0.274}} & \textcolor{red}{19.6\%} & \textcolor{red}{7.6\%} & 0.190 &  \multicolumn{1}{c|}{0.277} & \textbf{0.171} & \multicolumn{1}{c|}{\textbf{0.264}} & \textcolor{red}{11.1\%} & \textcolor{red}{4.9\%}\\
\multicolumn{1}{c|}{} & 720 & 0.210 & \multicolumn{1}{c|}{0.300} & \textbf{0.192} & \multicolumn{1}{c|}{\textbf{0.288}} & \textcolor{red}{9.3\%} & \textcolor{red}{4.1\%} & 0.232 & \multicolumn{1}{c|}{0.320} & \textbf{0.211} & \multicolumn{1}{c|}{\textbf{0.306}} & \textcolor{red}{9.9\%} & \textcolor{red}{4.5\%} & 0.244 & \multicolumn{1}{c|}{0.330} & \textbf{0.206} & \multicolumn{1}{c|}{\textbf{0.305}} & \textcolor{red}{18.4\%} & \textcolor{red}{8.1\%} & 0.230 &  \multicolumn{1}{c|}{0.312} & \textbf{0.208} & \multicolumn{1}{c|}{\textbf{0.300}} & \textcolor{red}{10.5\%} & \textcolor{red}{4.0\%}\\ \cmidrule(l){2-26} 
\multicolumn{1}{c|}{} & Avg & 0.175 & \multicolumn{1}{c|}{0.267} & \textbf{0.162} & \multicolumn{1}{c|}{\textbf{0.258}} & \textcolor{red}{8.0\%} & \textcolor{red}{3.5\%} & 0.191 & \multicolumn{1}{c|}{0.281} & \textbf{0.167} & \multicolumn{1}{c|}{\textbf{0.264}} & \textcolor{red}{14.3\%} & \textcolor{red}{6.4\%} & 0.210 & \multicolumn{1}{c|}{0.295} & \textbf{0.169} & \multicolumn{1}{c|}{\textbf{0.269}} & \textcolor{red}{24.2\%} & \textcolor{red}{9.6\%} & 0.190 & \multicolumn{1}{c|}{0.276} & \textbf{0.166} & \multicolumn{1}{c|}{\textbf{0.260}} & \textcolor{red}{14.4\%} & \textcolor{red}{6.1\%}\\ \midrule
\multicolumn{1}{c|}{\multirow{5}{*}{\rotatebox{90}{PEMS03}}} & 12 & 0.106 & \multicolumn{1}{c|}{0.219} & \textbf{0.067} & \multicolumn{1}{c|}{\textbf{0.172}} & \textcolor{red}{58.2\%} & \textcolor{red}{27.3\%} & 0.072 & \multicolumn{1}{c|}{0.179} & \textbf{0.058} & \multicolumn{1}{c|}{\textbf{0.160}} & \textcolor{red}{24.1\%} & \textcolor{red}{11.8\%} & 0.105 & \multicolumn{1}{c|}{0.221} & \textbf{0.072} & \multicolumn{1}{c|}{\textbf{0.180}} & \textcolor{red}{45.8\%} & \textcolor{red}{22.8\%} & 0.071 & \multicolumn{1}{c|}{0.177} & \textbf{0.057} & \multicolumn{1}{c|}{\textbf{0.156}} & \textcolor{red}{24.5\%} & \textcolor{red}{13.4\%}\\
\multicolumn{1}{c|}{} & 24 & 0.090 & \multicolumn{1}{c|}{0.199} & \textbf{0.074} & \multicolumn{1}{c|}{\textbf{0.178}} & \textcolor{red}{21.6\%} & \textcolor{red}{11.8\%} & 0.102 & \multicolumn{1}{c|}{0.213} & \textbf{0.073} & \multicolumn{1}{c|}{\textbf{0.178}} & \textcolor{red}{39.7\%} & \textcolor{red}{19.6\%} & 0.183 & \multicolumn{1}{c|}{0.299} & \textbf{0.104} & \multicolumn{1}{c|}{\textbf{0.214}} & \textcolor{red}{75.9\%} & \textcolor{red}{39.7\%} & 0.103 & \multicolumn{1}{c|}{0.212} & \textbf{0.070} & \multicolumn{1}{c|}{\textbf{0.172}} & \textcolor{red}{47.1\%} & \textcolor{red}{23.2\%}\\
\multicolumn{1}{c|}{} & 48 & 0.199 & \multicolumn{1}{c|}{0.304} & \textbf{0.113} & \multicolumn{1}{c|}{\textbf{0.215}} & \textcolor{red}{76.1\%} & \textcolor{red}{41.3\%} & 0.155 & \multicolumn{1}{c|}{0.263} & \textbf{0.101} & \multicolumn{1}{c|}{\textbf{0.207}} & \textcolor{red}{53.4\%} & \textcolor{red}{16.0\%} & 0.315 & \multicolumn{1}{c|}{0.407} & \textbf{0.155} & \multicolumn{1}{c|}{\textbf{0.258}} & \textcolor{red}{103.2\%} & \textcolor{red}{57.7\%} & 0.158 & \multicolumn{1}{c|}{0.265} & \textbf{0.094} & \multicolumn{1}{c|}{\textbf{0.198}} & \textcolor{red}{68.1\%} & \textcolor{red}{33.8\%}\\
\multicolumn{1}{c|}{} & 96 & 0.242 & \multicolumn{1}{c|}{0.348} & \textbf{0.141} & \multicolumn{1}{c|}{\textbf{0.238}} & \textcolor{red}{71.6\%} & \textcolor{red}{46.2\%} & 0.204 & \multicolumn{1}{c|}{0.305} & \textbf{0.131} & \multicolumn{1}{c|}{\textbf{0.236}} & \textcolor{red}{55.7\%} & \textcolor{red}{29.2\%} & 0.455 & \multicolumn{1}{c|}{0.508} & \textbf{0.199} & \multicolumn{1}{c|}{\textbf{0.295}} & \textcolor{red}{128.6\%} & \textcolor{red}{72.2\%} & 0.208 & \multicolumn{1}{c|}{0.309} & \textbf{0.127} & \multicolumn{1}{c|}{\textbf{0.228}} & \textcolor{red}{63.7\%} & \textcolor{red}{27.5\%}\\ \cmidrule(l){2-26} 
\multicolumn{1}{c|}{} & Avg & 0.159 & \multicolumn{1}{c|}{0.268} & \textbf{0.098} & \multicolumn{1}{c|}{\textbf{0.200}} & \textcolor{red}{62.2\%} & \textcolor{red}{34.0\%} & 0.133 & \multicolumn{1}{c|}{0.240} & \textbf{0.090} & \multicolumn{1}{c|}{\textbf{0.195}} & \textcolor{red}{47.7\%} & \textcolor{red}{23.0\%} & 0.265 & \multicolumn{1}{c|}{0.359} & \textbf{0.132} & \multicolumn{1}{c|}{\textbf{0.236}} & \textcolor{red}{100.7\%} & \textcolor{red}{52.1\%} & 0.135 & \multicolumn{1}{c|}{0.241} & \textbf{0.087} & \multicolumn{1}{c|}{\textbf{0.189}} & \textcolor{red}{55.1\%} & \textcolor{red}{27.5\%}\\ \midrule
\multicolumn{1}{c|}{\multirow{5}{*}{\rotatebox{90}{PEMS04}}} & 12 & 0.081 & \multicolumn{1}{c|}{0.189} & \textbf{0.067} & \multicolumn{1}{c|}{\textbf{0.168}} & \textcolor{red}{20.8\%} & \textcolor{red}{12.5\%} & 0.087 & \multicolumn{1}{c|}{0.195} & \textbf{0.069} & \multicolumn{1}{c|}{\textbf{0.171}} & \textcolor{red}{26.0\%} & \textcolor{red}{14.0\%} & 0.114 & \multicolumn{1}{c|}{0.228} & \textbf{0.083} & \multicolumn{1}{c|}{\textbf{0.192}} & \textcolor{red}{37.3\%} & \textcolor{red}{18.7\%} & 0.086 & \multicolumn{1}{c|}{0.193} & \textbf{0.065} & \multicolumn{1}{c|}{\textbf{0.164}} & \textcolor{red}{32.3\%} & \textcolor{red}{17.7\%}\\
\multicolumn{1}{c|}{} & 24 & 0.097 & \multicolumn{1}{c|}{0.207} & \textbf{0.078} & \multicolumn{1}{c|}{\textbf{0.181}} & \textcolor{red}{24.3\%} & \textcolor{red}{14.3\%} & 0.119 & \multicolumn{1}{c|}{0.231} & \textbf{0.081} & \multicolumn{1}{c|}{\textbf{0.187}} & \textcolor{red}{46.9\%} & \textcolor{red}{19.1\%} & 0.187 & \multicolumn{1}{c|}{0.298} & \textbf{0.113} & \multicolumn{1}{c|}{\textbf{0.225}} & \textcolor{red}{65.4\%} & \textcolor{red}{32.4\%} & 0.119 & \multicolumn{1}{c|}{0.229} & \textbf{0.075} & \multicolumn{1}{c|}{\textbf{0.177}} & \textcolor{red}{58.6\%} & \textcolor{red}{29.3\%}\\
\multicolumn{1}{c|}{} & 48 & 0.128 & \multicolumn{1}{c|}{0.241} & \textbf{0.092} & \multicolumn{1}{c|}{\textbf{0.198}} & \textcolor{red}{39.1\%} & \textcolor{red}{21.7\%} & 0.172 & \multicolumn{1}{c|}{0.279} & \textbf{0.101} & \multicolumn{1}{c|}{\textbf{0.211}} & \textcolor{red}{70.2\%} & \textcolor{red}{32.3\%} & 0.319 & \multicolumn{1}{c|}{0.402} & \textbf{0.157} & \multicolumn{1}{c|}{\textbf{0.226}} & \textcolor{red}{103.1\%} & \textcolor{red}{77.8\%} & 0.174 & \multicolumn{1}{c|}{0.283} & \textbf{0.093} & \multicolumn{1}{c|}{\textbf{0.197}} & \textcolor{red}{87.1\%} & \textcolor{red}{43.6\%}\\
\multicolumn{1}{c|}{} & 96 & 0.176 & \multicolumn{1}{c|}{0.284} & \textbf{0.113} & \multicolumn{1}{c|}{\textbf{0.221}} & \textcolor{red}{55.7\%} & \textcolor{red}{28.5\%} & 0.221 & \multicolumn{1}{c|}{0.323} & \textbf{0.136} & \multicolumn{1}{c|}{\textbf{0.249}} & \textcolor{red}{62.5\%} & \textcolor{red}{29.7\%} & 0.424 & \multicolumn{1}{c|}{0.481} & \textbf{0.193} & \multicolumn{1}{c|}{\textbf{0.298}} & \textcolor{red}{119.6\%} & \textcolor{red}{61.4\%} & 0.224 & \multicolumn{1}{c|}{0.328} & \textbf{0.114} & \multicolumn{1}{c|}{\textbf{0.219}} & \textcolor{red}{96.5\%} & \textcolor{red}{49.8\%}\\ \cmidrule(l){2-26} 
\multicolumn{1}{c|}{} & Avg & 0.120 & \multicolumn{1}{c|}{0.230} & \textbf{0.087} & \multicolumn{1}{c|}{\textbf{0.192}} & \textcolor{red}{37.9\%} & \textcolor{red}{19.8\%} & 0.150 & \multicolumn{1}{c|}{0.257} & \textbf{0.096} & \multicolumn{1}{c|}{\textbf{0.204}} & \textcolor{red}{56.2\%} & \textcolor{red}{30.0\%} & 0.261 & \multicolumn{1}{c|}{0.352} & \textbf{0.136} & \multicolumn{1}{c|}{\textbf{0.235}} & \textcolor{red}{91.9\%} & \textcolor{red}{49.8\%} & 0.151 & \multicolumn{1}{c|}{0.258} & \textbf{0.087} & \multicolumn{1}{c|}{\textbf{0.189}} & \textcolor{red}{73.5\%} & \textcolor{red}{36.5\%}\\ \bottomrule
\end{tabular}
\end{adjustbox}
\end{table*}

Key findings are twofold. First, GTR significantly enhances our MLP backbone, reducing average MSE by 14.4\% (Electricity), 55.1\% (PEMS03), and 73.5\% (PEMS04). Second, GTR consistently improves all evaluated models across all datasets and prediction horizons, with particularly pronounced gains in traffic forecasting. For instance, on PEMS04, GTR reduces MSE by 91.9\% for DLinear and 56.2\% for PatchTST, underscoring its robustness in complex temporal contexts. While the GTR module itself is designed with a channel-independent architecture, when integrated with the iTransformer, which is specifically built to capture relationships between different variables, the GTR module still delivered significant improvements, reducing MSE by 62.2\% on PEMS03 and 37.9\% on PEMS04. These improvements are achieved through trivial integration during training, demonstrating GTR's capability with diverse architectures.

\begin{figure}[ht]
    \centering
    \includegraphics[width=1\linewidth]{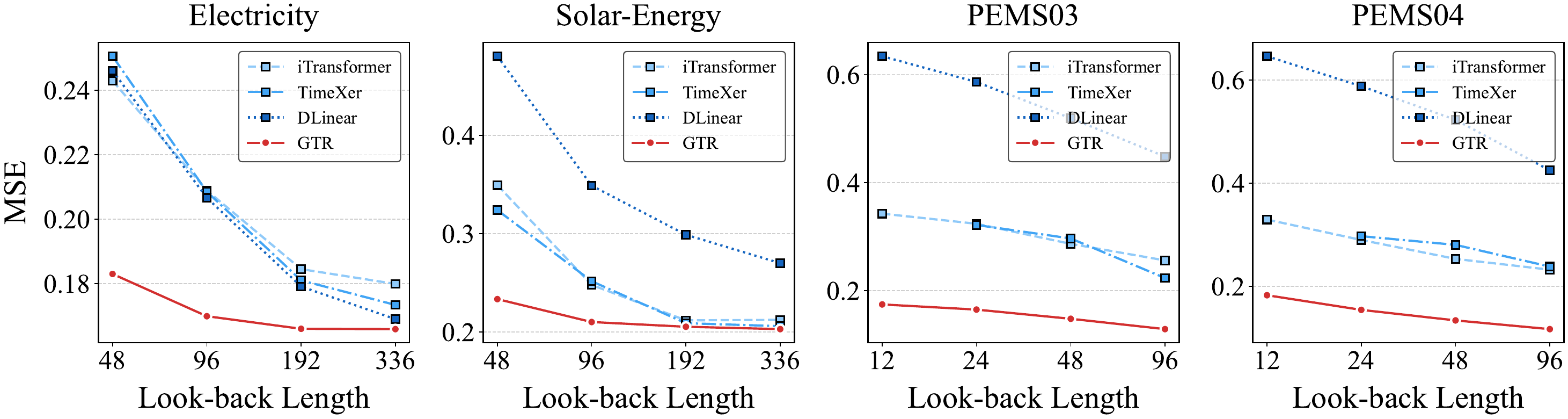}
    \caption{Influence of look-back length. Forecasting horizons are fixed at 336 for Electricity and Solar-Energy, and 96 for PEMS03 and PEMS04. GTR consistently outperforms other models across varying look-back lengths, particularly at the shortest window length.}
    \label{fig:lookback}
\end{figure}

\textbf{Influence of look-back length.} Figure~\ref{fig:lookback} demonstrates GTR's performance across varying look-back window lengths on four benchmark datasets. Two critical observations emerge: First, GTR exhibits consistent performance improvement as the look-back length increases, demonstrating effective utilization of additional temporal context. More significantly, GTR substantially outperforms all baseline methods across all window lengths, with the largest gains occurring at the shortest input horizons. Notably, on Electricity and Solar-Energy datasets, baseline models experience exponential error growth as window length decreases, while GTR maintains remarkable stability with only marginal performance degradation. This resilience in ultra-short input scenarios directly validates our core innovation—by establishing temporal continuity beyond the immediate observation window, GTR successfully leverages global periodic information that remains inaccessible to baseline methods when cycle lengths exceed the input horizon. The consistent superiority across all window lengths confirms that GTR effectively bridges local pattern extraction with global temporal structure, making it particularly valuable for practical applications where historical data may be severely limited.

\begin{wraptable}{R}{0.5\textwidth}
\centering
\caption{Efficiency comparison between GTR and other models on the Electricity dataset with look-back length \(T=96\) and forecast horizon \(S=720\). Training Time denotes the average time required per epoch for the model.}
\label{efficiency}
\begin{adjustbox}{max width=\linewidth}
\begin{tabular}{@{}c|ccc@{}}
\toprule
Model & Parameters & MACs & Training Time(s) \\ \midrule
Informer~\citeyearpar{informer} & 12.53M & 3.97G & 70.1 \\
Autoformer~\citeyearpar{autoformer} & 12.22M & 4.41G & 107.7 \\
FEDformer~\citeyearpar{fedformer} & 17.98M & 4.41G & 238.7 \\
DLinear~\citeyearpar{DLinear} & 139.6K & 44.91M & 18.1 \\
PatchTST~\citeyearpar{PatchTST} & 10.74M & 25.87G & 129.5 \\
iTransformer~\citeyearpar{iTransformer} & 5.15M & 1.65G & 35.1 \\ \midrule
GTR Tech. & 40.1K & 4.50M & - \\ 
+ MLP-Layer & 0.98M & 306.91M & 22.3 \\
\bottomrule
\end{tabular}
\end{adjustbox}
\end{wraptable}

\textbf{Model Efficiency.} The proposed GTR technique operates as an efficient plug-and-play module with minimal computational overhead. As shown in Table~\ref{efficiency}, the GTR technique alone consumes merely 40.1K parameters and 4.50M MACs. When integrated with a pure MLP backbone for complete forecasting capability, the combined system maintains remarkable efficiency with just 0.98M parameters and 306.91M MACs, representing only 19.0\% of iTransformer's parameter count while achieving comparable or superior prediction accuracy as demonstrated in our main results. With a training time of 22.3 seconds per epoch, the combined system is surpassed only by DLinear (18.1s), highlighting its strong efficiency while maintaining superior modeling capability. This makes GTR Technique particularly suitable for resource-constrained forecasting applications while maintaining the capacity to capture both short-term patterns and long-range dependencies.

\textbf{Correlation Alignment.} To further investigate how the GTR module models global temporal dependencies, we conducted a visualization experiment using the Pearson correlation coefficient to analyze multivariate correlations. As shown in Figure \ref{fig:correlations}, we selected test sequences from 4 different datasets. The visualization compares input and GTR-enhanced output against the ground-truth correlation computed across the entire dataset. The results clearly demonstrate that, after applying the GTR module, the learned multivariate correlations align significantly more closely with the global temporal structure. This improvement is particularly meaningful in real-world scenarios, where time series may be distorted by non-stationary disturbances. The GTR module enhances the model’s robustness to such perturbations, enabling more reliable and structure-preserving representations even under imperfect input conditions.

\begin{figure}[ht]
    \centering
    \includegraphics[width=1\linewidth]{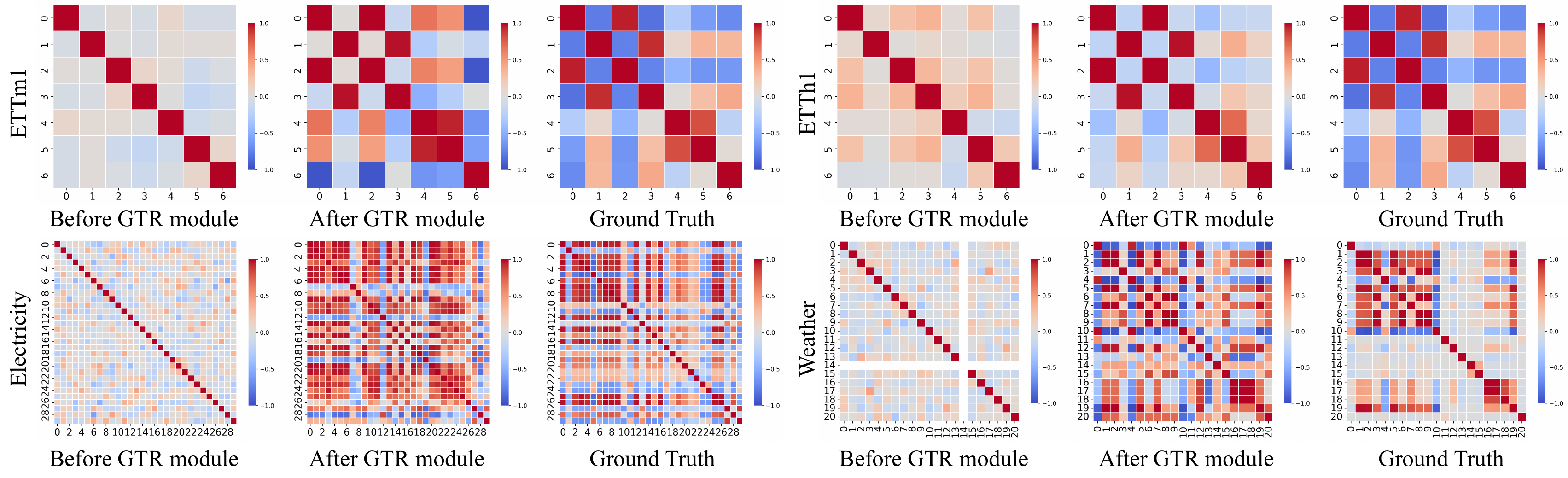}
    \caption{Visualization for multivariate correlation analysis on 4 datasets. The visualization is implemented based on the Pearson Correlation Coefficient. The Ground Truth denotes the correlation computed across the entire dataset. The two columns on the left denote the correlation between the variables before and after the GTR module, respectively. It shows that the model drives the learned multivariate correlations closer to the global correlation structures through the GTR module.}
    \label{fig:correlations}
\end{figure}

\section{Limitations and Future Work}

While our proposed GTR module offers a flexible, plug-and-play solution for enhancing long-term time series forecasting through explicit periodic pattern modeling, it exhibits several inherent limitations stemming from its architectural assumptions and computational design.

\textbf{Fixed Cycle-Length Assumption.} GTR assumes a single, fixed cycle length for the entire input sequence. This assumption may result in suboptimal performance on datasets with time-varying periodicity, such as physiological signals whose dominant frequencies shift over time. Moreover, GTR enforces a shared cycle length across all input channels, which is ill-suited for multivariate time series in which different variables may exhibit heterogeneous periodic patterns. Although this limitation could be mitigated through channel-specific modeling or preprocessing, such solutions introduce additional architectural complexity or data-handling overhead. GTR further assumes that time series simultaneously exhibit both local and global periodic structures. Consequently, it may underperform on time series characterized by a single dominant periodicity.

\textbf{Challenges in Modeling Long-Range Cycles:} Although GTR can theoretically accommodate long cycles by adjusting the cycle-length parameter, its practical applicability is often constrained by data scarcity. Accurately learning yearly or multi-year periodic patterns typically requires decades of continuous, high-quality historical data—a condition rarely satisfied in real-world settings. Moreover, when the base cycle length $P$ becomes large, the width of the 2D convolution kernel increases linearly with $P$. Consequently, the convolution operation incurs a computational complexity of $O(NTP)$, where $T$ denotes the sequence length. For large-scale time series and ultra-long cycles, such complexity becomes computationally prohibitive. Additionally, the number of kernel parameters also scales linearly with $P$, increasing memory overhead and exacerbating the risk of overfitting in data-scarce scenarios. Future work may investigate hierarchical or memory-augmented mechanisms to better extrapolate long-range periodic structures from limited observations.

\textbf{Scalability with Long Input Sequences.} GTR preserves the input sequence length and employs a linear projection layer with size $T \times T$. This results in a computational complexity of $O(NT^2)$ for the linear operation. When $T$ is large, the quadratic complexity becomes prohibitive, as the number of operations scales with the square of the sequence length. Although the GTR module is designed for short look-back windows, it becomes inefficient for long sequences.

These limitations highlight specific scenarios where the GTR module may not be the optimal solution. They also motivate future research into developing adaptive cycle modeling, handling cross-channel periodic heterogeneity, and improving the efficient learning of extreme long-horizon periodicity.

\section{Conclusion}
In this work, we address the challenge that existing multivariate time series forecasting (MTSF) models are limited by their historical context, which prevents them from effectively capturing global periodic patterns. To overcome this, we introduce the Global Temporal Retriever (GTR), a lightweight and plug-and-play module that allows any forecasting model to access and utilize temporal information from the entire cycle. Extensive experiments on six real-world datasets demonstrate that GTR consistently delivers state-of-the-art performance in both short-term and long-term forecasting scenarios, highlighting GTR as an efficient and general solution for MTSF tasks.

\clearpage
\newpage

\section{Ethics Statement}
Our work only focuses on the time series forecasting problem, so there is no potential ethical risk.

\section{Reproducibility Statement}
In the main text, we have strictly formalized the model architecture with equations. All the implementation details are included in the Appendix, including dataset descriptions, metrics, model, and experiment configurations. The code has been made public.

\section{Acknowledgments}
This work was supported in part by the National Key R\&D Program of China (Grant No. 2023YFF0725001), in part by the National Natural Science Foundation of China (Grant No. 92370204), in part by the Guangdong Basic and Applied Basic Research Foundation (Grant No. 2023B1515120057), in part by the Key-Area Special Project of Guangdong Provincial Ordinary Universities (Grant No. 2024ZDZX1007), and in part by the Red Bird MPhil Program at the Hong Kong University of Science and Technology (Guangzhou).

\bibliography{iclr2026_conference}
\bibliographystyle{iclr2026_conference}

\newpage

\appendix
\section{Implementation Details}\label{app:implmentation}

\subsection{Datasets Details}\label{app:dataset}

We conduct extensive experiments on five widely-used time series datasets for long-term forecasting and PEMS datasets for short-term forecasting. We report the statistics in Table \ref{tab:dataset}. Detailed descriptions of these datasets are as follows:

\begin{enumerate}
\item [(1)] \textbf{ETT} (Electricity Transformer Temperature) dataset \citep{informer} encompasses temperature and power load data from electricity transformers in two regions of China, spanning from 2016 to 2018. This dataset has two granularity levels: ETTh (hourly) and ETTm (15 minutes).
\item [(2)] \textbf{Weather} dataset \citep{TimesNet} captures 21 distinct meteorological indicators in Germany, meticulously recorded at 10-minute intervals throughout 2020. Key indicators in this dataset include air temperature, visibility, among others, offering a comprehensive view of the weather dynamics.
\item [(3)] \textbf{Electricity} dataset \citep{TimesNet} features hourly electricity consumption records in kilowatt-hours (kWh) for 321 clients. Sourced from the UCL Machine Learning Repository, this dataset covers the period from 2012 to 2014, providing valuable insights into consumer electricity usage patterns.
\item [(4)] \textbf{Traffic} dataset \citep{TimesNet} includes data on hourly road occupancy rates, gathered by 862 detectors across the freeways of the San Francisco Bay area. This dataset, covering the years 2015 to 2016, offers a detailed snapshot of traffic flow and congestion.
\item [(5)] \textbf{Solar-Energy} dataset \citep{CycleNet} contains solar power production data recorded every 10 minutes throughout 2006 from 137 photovoltaic (PV) plants in Alabama. 
\item [(6)] \textbf{PEMS} dataset \citep{SCINet} comprises four public traffic network datasets (PEMS03, PEMS04, PEMS07, and PEMS08), constructed from the Caltrans Performance Measurement System (PeMS) across four districts in California. The data is aggregated into 5-minute intervals, resulting in 12 data points per hour and 288 data points per day.
\end{enumerate}

\begin{table}[htb]
  \centering
  \caption{Dataset detailed descriptions. ``Dataset Size'' denotes the total number of time points in (Train, Validation, Test) split respectively. ``Prediction Length'' denotes the future time points to be predicted. ``Frequency'' denotes the sampling interval of time points.}
   \resizebox{0.9\textwidth}{!}{
  \renewcommand{\multirowsetup}{\centering}
  \setlength{\tabcolsep}{3pt}
  \begin{tabular}{c|l|c|c|c|c}
    \toprule
    Tasks & Dataset & Dim & Prediction Length & Dataset Size & Frequency \\ 
    \toprule
     & ETTm1 & $7$ & \scalebox{0.8}{$\{96, 192, 336, 720\}$} & $(34465, 11521, 11521)$  & $15$ min \\ 
    \cmidrule{2-6}
    & ETTm2 & $7$ & \scalebox{0.8}{$\{96, 192, 336, 720\}$} & $(34465, 11521, 11521)$  & $15$ min \\ 
    \cmidrule{2-6}
     & ETTh1 & $7$ & \scalebox{0.8}{$\{96, 192, 336, 720\}$} & $(8545, 2881, 2881)$ & $15$ min \\ 
    \cmidrule{2-6}
     Long-term & ETTh2 & $7$ & \scalebox{0.8}{$\{96, 192, 336, 720\}$} & $(8545, 2881, 2881)$ & $15$ min \\ 
    \cmidrule{2-6}
    Forecasting & Weather & $21$ & \scalebox{0.8}{$\{96, 192, 336, 720\}$} & $(36792, 5271, 10540)$  & $10$ min \\ 
    \cmidrule{2-6}
     & Electricity & $321$ & \scalebox{0.8}{$\{96, 192, 336, 720\}$} & $(18317, 2633, 5261)$ & $1$ hour \\ 
    \cmidrule{2-6}
     & Traffic & $862$ & \scalebox{0.8}{$\{96, 192, 336, 720\}$} & $(12185, 1757, 3509)$ & $1$ hour \\ 
    \cmidrule{2-6}
    & Solar-Energy & $137$  & \scalebox{0.8}{$\{96, 192, 336, 720\}$}  & $(36601, 5161, 10417)$ & $10$ min \\ 
    \midrule

    & PEMS03 & $358$ & \scalebox{0.8}{$\{12, 24, 48, 96\}$} & $(15617, 5135, 5135)$ & $5$ min \\ 
    \cmidrule{2-6}
    Short-term & PEMS04 & $307$ & \scalebox{0.8}{$\{12, 24, 48, 96\}$} & $(10172, 3375, 3375)$ & $5$ min \\ 
    \cmidrule{2-6}
    Forecasting & PEMS07 & $883$ & \scalebox{0.8}{$\{12, 24, 48, 96\}$} & $(16911, 5622, 5622)$ & $5$ min \\ 
    \cmidrule{2-6}
    & PEMS08 & $170$ & \scalebox{0.8}{$\{12, 24, 48, 96\}$} & $(10690, 3548, 265)$ & $5$ min \\ 
    \bottomrule
    \end{tabular}
  }
\label{tab:dataset}
\end{table}

\subsection{Metric Details}\label{app:metric}
Regarding metrics, we utilize the mean square error (MSE) and mean absolute error (MAE) for long-term forecasting. The calculations of these metrics are:
$$
MSE=\frac{1}{T}\sum_{i=1}^{T}(\hat{y}_{i}-y_i)^2\ ,\ \ \ \ \ \ \ 
MAE=\frac{1}{T}\sum_{i=1}^{T}|\hat{y}_{i}-y_i|
$$

\subsection{Experiment details}\label{app:experimentdetails}

All experiments are implemented in PyTorch~\citep{pytorch} and conducted on a single NVIDIA RTX 3090 24GB GPU. We use the Adam optimizer~\citep{adam} with a learning rate selected from \{1e-3, 3e-3, 5e-4\}. The hidden dim for MLP backbone is set to 512. The dominant high-frequency period length $P$ is set to 24 for all datasets. Table \ref{tab:hyperparams} provides detailed hyperparameter settings for each dataset. During data preprocessing, we set the timestamp ($t_0$) of the initial observation in the entire dataset to 0 as the reference point. In main experiment, for all baseline methods, we adopted the hyperparameter settings from their respective original publications. 
For the ablation experiment of the GTR module in Table \ref{tab:ablation_gtr}, the baseline methods utilized their original hyperparameter settings as specified in their respective publications for an input length of $T$=96. The GTR module was then directly integrated, and the entire model was trained end-to-end without any further hyperparameter optimization.

\renewcommand{\arraystretch}{1.0}
\begin{table}[ht]
    \centering
    \caption{Hyperparameter settings for different datasets. ``lr'' denotes the learning rate. ``Cycle\_len'' denotes the length of global cycle.}
    \begin{tabular}{c|l|c|c|c|c|c}
        \toprule
        Tasks & Dataset & Cycle\_len ($L$) & lr & Batchsize & Epochs & Use\_revin\\
        \midrule
        & ETTm1 & $96$ & $1e$-$3$ & $256$ & $30$ & $True$\\
        & ETTm2 & $96$ & $1e$-$3$ & $256$ & $30$ & $True$ \\
        & ETTh1 & $24$ & $1e$-$3$ & $256$ & $30$ & $True$ \\
        Long-term & ETTh2 & $24$ & $1e$-$3$ & $256$ & $30$ & $True$ \\
        Forecasting & Weather & $144$ & $1e$-$3$ & $64$ & $30$ & $True$ \\
        & Traffic & $168$ & $3e$-$3$ & $16$ & $30$ & $True$ \\
        & Electricity & $168$ & $3e$-$3$ & $32$ & $30$ & $True$ \\
        & Solar-Energy & $144$ & $3e$-$3$ & $64$ & $30$ & $False$ \\
        \midrule
        & PEMS03 & $288$ & $3e$-$3$ & $32$ & $30$ & $False$ \\
        Short-term  & PEMS04 & $288$ & $3e$-$3$ & $32$ & $30$ & $False$ \\
        Forecasting & PEMS07 & $288$ & $3e$-$3$ & $32$ & $30$ & $False$ \\
        & PEMS08 & $288$ & $3e$-$3$ & $32$ & $30$ & $True$ \\
        \bottomrule
    \end{tabular}
    \label{tab:hyperparams}
\end{table}
\renewcommand{\arraystretch}{1}

The hyperparameter $L$, which denotes the global cycle length of the entire dataset, is established via two primary methods. The first relies on \textit{domain-specific knowledge} of intrinsic periodic patterns and the \textit{sampling intervals} of data, as suggested by prior work~\citep{TimesNet, CycleNet}. The selected $L$ values are summarized in Table \ref{tab:hyperparams}.

Alternatively, $L$ can be estimated using computational techniques, most notably the autocorrelation function (ACF)~\citep{acf}. The ACF measures the linear dependence between a time series and a lagged version of itself. For a discrete time series $y_t$ of length $T$ with sample mean $\bar{y}$, the sample autocorrelation $r_k$ at lag $k$ is formally defined as the ratio of the sample autocovariance $\gamma_k$ to the sample variance $\gamma_0$:

$$
r_k = \frac{\gamma_k}{\gamma_0} = \frac{\sum_{t=1}^{T-k} (y_t - \bar{y})(y_{t+k} - \bar{y})}{\sum_{t=1}^{T} (y_t - \bar{y})^2}
$$

In this context, a significant peak in the ACF plot at a specific lag $k$ suggests a strong periodic component, justifying the selection of $L=k$.

\section{More details of GTR} 

\begin{algorithm}
\caption{Overall Pseudocode of GTR}
\label{alg}
\begin{algorithmic}[1]
\REQUIRE Historical look-back input \( \bm{X} \in \mathbb{R}^{T \times N} \), starting absolute position $t_0$.
\ENSURE Forecasting output \( \bm{Y} \in \mathbb{R}^{S \times N} \)
\STATE Initialize \textbf{learnable} parameters \(\bm{Q} \in \mathbb{R}^{L \times N} \gets  0\) 
\IF {Instance Normalization is applied}
    \STATE \(\mu, \sigma \gets \text{Mean}(\bm{X}), \text{STD}(\bm{X})\)
    \STATE \(\bm{X} \gets \frac{\bm{X} - \mu}{\sqrt{\sigma^2 + \epsilon}}\)
\ENDIF
\STATE \emph{// \textbf{GTR part.}}
\STATE \(\bm{i} \in \mathbb{N}^{T}_0 \gets \big[(t_0 \bmod L) + \tau\big] \bmod L \quad \text{for} \quad \tau = 0, 1, \ldots, T-1.\)
\STATE \(\bar{\bm{Q}} \in \mathbb{R}^{T \times N} \gets \text{Linear}(\bm{Q}[\bm{i}, :])\)
\STATE \(\bm{F} \in \mathbb{R}^{2 \times T \times N} \gets \begin{bmatrix} \bm{X} \\ \bar{\bm{Q}} \end{bmatrix} \)
\STATE \(\bm{H} \in \mathbb{R}^{T \times N} \gets Conv2D(\bm{F})\)
\STATE \(\bm{Z} \in \mathbb{R}^{T \times N} \gets \bm{H} + \bm{X}\)
\STATE \emph{// \textbf{Host model part.}}
\STATE \(\bm{Z} \in \mathbb{R}^{D \times N} \gets \text{Linear}(\bm{Z})\)
\STATE \(\bm{Z} \in \mathbb{R}^{D \times N} \gets \text{Multi-Layer Perceptron}(\bm{Z}) + \bm{Z}\)
\STATE \(\bar{\bm{Y}} \in \mathbb{R}^{C \times H} \gets \text{Linear}(\text{Dropout}(\bm{Z}))\)

\IF {Instance Normalization is applied} 
    \STATE \(\bm{Y} \gets \bar{\bm{Y}} \times \sqrt{\sigma^2 + \epsilon} + \mu\)
\ENDIF
\end{algorithmic}
\end{algorithm}

The GTR framework processes temporal sequences through a structured pipeline that integrates cyclic pattern modeling with residual learning (Algorithm \ref{alg}). Initially, optional instance normalization standardizes the input $\bm{X} \in \mathbb{R}^{T \times N}$ using per-sequence statistics $\mu$ and $\sigma$. The core innovation lies in the cyclic indexing mechanism: a learnable embedding matrix $\bm{Q} \in \mathbb{R}^{L \times N}$ is dynamically indexed via $\bm{i} = \big[(t_0 \bmod L) + \tau\big] \bmod L$ for $\tau = 0, \ldots, T-1$, generating position-aware queries $\bar{\bm{Q}}$ through a linear transformation. This indexed query is concatenated with the normalized input along the channel dimension to form $\bm{F} \in \mathbb{R}^{2 \times T \times N}$, which is processed by a $2$D convolutional layer to extract spatiotemporal features $\bm{H}$. A residual connection $\bm{Z} = \bm{H} + \bm{X}$ preserves input integrity while enhancing representational capacity. The resulting features $\bm{Z}$ are then projected to the host model's input dimension, enabling seamless transition to downstream forecasting components.

Critically, GTR is designed as a plug-and-play module that requires \textit{no architectural modifications} to the host model. As delineated in the pseudocode, the separation between the \textit{GTR part} and \textit{Host model part} ensures compatibility with any forecasting architecture (e.g., Transformers, RNNs, or MLPs). The host model receives $\bm{Z}$ as a direct replacement for its original input, eliminating the need for re-engineering existing layers or attention mechanisms. During training, all components—including $\bm{Q}$, the convolutional layer, and host model parameters—are jointly optimized end-to-end, allowing GTR to adaptively refine cyclic representations while the host model leverages these enriched features. This integration strategy significantly boosts forecasting accuracy by injecting explicit periodicity awareness without compromising the host model's inductive biases or increasing inference complexity, as evidenced by our empirical results across diverse benchmarks.

\section{More Experiment Results}

\subsection{Full Forecasting Results}
\label{sec:full_results}
The full multivariate forecasting results are provided in this section due to the space limitation of the main text. We extensively evaluate competitive counterparts on challenging forecasting tasks. Table \ref{tab:main_full} contains the detailed results of all prediction lengths of the 12 well-acknowledged forecasting benchmarks. GTR achieves comprehensive state-of-the-art in real-world forecasting applications.

\begin{table*}[!htb]
\centering
\caption{Full multivariate time series forecasting results for all prediction horizons. The look-back length \(T\) is fixed at 96. The best results are highlighted in \textbf{bold}, while the second-best results are \underline{underlined}. \textit{Avg} means the average results from all four prediction lengths.}
\label{tab:main_full}
\begin{adjustbox}{max width=\linewidth}
\begin{tabular}{@{}cc|cc|cc|cc|cc|cc|cc|cc|cc|cc|cc|cc@{}}
\toprule
\multicolumn{2}{c|}{Model} & \multicolumn{2}{c|}{\begin{tabular}[c]{@{}c@{}}GTR\\ (Ours)\end{tabular}} & \multicolumn{2}{c|}{\begin{tabular}[c]{@{}c@{}}RAFT\\ \citeyearpar{RAFT}\end{tabular}} & \multicolumn{2}{c|}{\begin{tabular}[c]{@{}c@{}}S-Mamba\\ \citeyearpar{S-Mamba}\end{tabular}} & \multicolumn{2}{c|}{\begin{tabular}[c]{@{}c@{}}TQNet\\ \citeyearpar{TQNet}\end{tabular}} & \multicolumn{2}{c|}{\begin{tabular}[c]{@{}c@{}}TimeXer\\ \citeyearpar{TimeXer}\end{tabular}} & \multicolumn{2}{c|}{\begin{tabular}[c]{@{}c@{}}CycleNet\\ \citeyearpar{CycleNet}\end{tabular}} & \multicolumn{2}{c|}{\begin{tabular}[c]{@{}c@{}}SOFTS\\ \citeyearpar{SOFTS}\end{tabular}} & \multicolumn{2}{c|}{\begin{tabular}[c]{@{}c@{}}TimeMixer\\ \citeyearpar{TimeMixer}\end{tabular}} & \multicolumn{2}{c|}{\begin{tabular}[c]{@{}c@{}}iTransformer\\ \citeyearpar{iTransformer}\end{tabular}} & \multicolumn{2}{c|}{\begin{tabular}[c]{@{}c@{}}PatchTST\\ \citeyearpar{PatchTST}\end{tabular}} & \multicolumn{2}{c}{\begin{tabular}[c]{@{}c@{}}DLinear\\ \citeyearpar{DLinear}\end{tabular}} \\ \midrule
\multicolumn{2}{c|}{Metric} & MSE & MAE & MSE & MAE & MSE & MAE & MSE & MAE & MSE & MAE & MSE & MAE & MSE & MAE & MSE & MAE & MSE & MAE & MSE & MAE & MSE & MAE \\ \midrule
\multicolumn{1}{c|}{\multirow{5}{*}{\rotatebox{90}{ETTh1}}} & 96 & \textbf{0.367} & \textbf{0.391} & 0.376 & 0.401 & 0.386 & 0.405 & \underline{0.371} & \underline{0.393} & 0.382 & 0.403 & 0.375 & 0.395 & 0.381 & 0.399 & 0.375 & 0.400 & 0.386 & 0.405 & 0.414 & 0.419 & 0.386 & 0.400 \\
\multicolumn{1}{c|}{} & 192 & \underline{0.420} & \textbf{0.421} & \textbf{0.417} & 0.424 & 0.443 & 0.437 & 0.428 & 0.426 & 0.429 & 0.435 & 0.436 & 0.428 & 0.435 & 0.431 & 0.429 & \underline{0.421} & 0.441 & 0.436 & 0.460 & 0.445 & 0.437 & 0.432 \\
\multicolumn{1}{c|}{} & 336 & 0.476 & 0.447 & \textbf{0.454} & \textbf{0.441} & 0.489 & 0.468 & 0.476 & \underline{0.446} & \underline{0.468} & 0.448 & 0.496 & 0.455 & 0.480 & 0.452 & 0.484 & 0.458 & 0.487 & 0.458 & 0.501 & 0.466 & 0.481 & 0.459 \\
\multicolumn{1}{c|}{} & 720 & 0.493 & 0.476 & \textbf{0.465} & \underline{0.465} & 0.502 & 0.489 & 0.487 & 0.470 & \underline{0.469} & \textbf{0.461} & 0.520 & 0.484 & 0.499 & 0.488 & 0.498 & 0.482 & 0.503 & 0.491 & 0.500 & 0.488 & 0.519 & 0.516 \\
\cmidrule(l){2-24}
\multicolumn{1}{c|}{} & Avg & 0.439 & \underline{0.434} & \textbf{0.428} & \textbf{0.433} & 0.455 & 0.450 & 0.441 & 0.434 & \underline{0.437} & 0.437 & 0.457 & 0.441 & 0.449 & 0.443 & 0.447 & 0.440 & 0.454 & 0.448 & 0.469 & 0.455 & 0.456 & 0.452 \\ \midrule
\multicolumn{1}{c|}{\multirow{5}{*}{\rotatebox{90}{ETTh2}}} & 96 & 0.290 & 0.342 & \underline{0.289} & 0.342 & 0.296 & 0.348 & 0.295 & 0.343 & \textbf{0.286} & \textbf{0.338} & 0.298 & 0.344 & 0.297 & 0.347 & 0.289 & \underline{0.341} & 0.297 & 0.349 & 0.302 & 0.348 & 0.333 & 0.387 \\
\multicolumn{1}{c|}{} & 192 & \textbf{0.362} & \textbf{0.389} & 0.382 & 0.400 & 0.376 & 0.396 & 0.367 & 0.393 & \underline{0.363} & \underline{0.389} & 0.372 & 0.396 & 0.373 & 0.394 & 0.372 & 0.392 & 0.380 & 0.400 & 0.388 & 0.400 & 0.477 & 0.476 \\
\multicolumn{1}{c|}{} & 336 & 0.414 & 0.429 & 0.424 & 0.439 & 0.424 & 0.431 & 0.417 & 0.427 & 0.414 & \underline{0.423} & 0.431 & 0.439 & \underline{0.410} & 0.426 & \textbf{0.386} & \textbf{0.414} & 0.428 & 0.432 & 0.426 & 0.433 & 0.594 & 0.541 \\
\multicolumn{1}{c|}{} & 720 & 0.423 & 0.442 & 0.433 & 0.457 & 0.426 & 0.444 & 0.433 & 0.446 & \textbf{0.408} & \textbf{0.432} & 0.450 & 0.458 & \underline{0.411} & \underline{0.433} & 0.412 & 0.434 & 0.427 & 0.445 & 0.431 & 0.446 & 0.831 & 0.657 \\
\cmidrule(l){2-24}
\multicolumn{1}{c|}{} & Avg & 0.372 & 0.400 & 0.382 & 0.410 & 0.381 & 0.405 & 0.378 & 0.402 & \underline{0.368} & \underline{0.396} & 0.388 & 0.409 & 0.373 & 0.400 & \textbf{0.365} & \textbf{0.395} & 0.383 & 0.407 & 0.387 & 0.407 & 0.559 & 0.515 \\ \midrule
\multicolumn{1}{c|}{\multirow{5}{*}{\rotatebox{90}{ETTm1}}} & 96 & \textbf{0.305} & \textbf{0.349} & 0.325 & 0.369 & 0.333 & 0.368 & \underline{0.311} & \underline{0.353} & 0.318 & 0.356 & 0.319 & 0.360 & 0.325 & 0.361 & 0.320 & 0.357 & 0.334 & 0.368 & 0.329 & 0.367 & 0.345 & 0.372 \\
\multicolumn{1}{c|}{} & 192 & \textbf{0.349} & \textbf{0.375} & 0.364 & 0.388 & 0.376 & 0.390 & \underline{0.356} & \underline{0.378} & 0.362 & 0.383 & 0.360 & 0.381 & 0.375 & 0.389 & 0.361 & 0.381 & 0.377 & 0.391 & 0.367 & 0.385 & 0.380 & 0.389 \\
\multicolumn{1}{c|}{} & 336 & \textbf{0.380} & \textbf{0.398} & 0.391 & 0.407 & 0.408 & 0.413 & 0.390 & \underline{0.401} & 0.395 & 0.407 & \underline{0.389} & 0.403 & 0.405 & 0.412 & 0.390 & 0.404 & 0.426 & 0.420 & 0.399 & 0.410 & 0.413 & 0.413 \\
\multicolumn{1}{c|}{} & 720 & \textbf{0.435} & \textbf{0.436} & \underline{0.444} & \underline{0.438} & 0.475 & 0.448 & 0.452 & 0.440 & 0.452 & 0.441 & 0.447 & 0.441 & 0.466 & 0.447 & 0.454 & 0.441 & 0.491 & 0.459 & 0.454 & 0.439 & 0.474 & 0.453 \\
\cmidrule(l){2-24}
\multicolumn{1}{c|}{} & Avg & \textbf{0.367} & \textbf{0.389} & 0.381 & 0.400 & 0.398 & 0.405 & \underline{0.377} & \underline{0.393} & 0.382 & 0.397 & 0.379 & 0.396 & 0.393 & 0.402 & 0.381 & 0.396 & 0.407 & 0.410 & 0.387 & 0.400 & 0.403 & 0.407 \\ \midrule
\multicolumn{1}{c|}{\multirow{5}{*}{\rotatebox{90}{ETTm2}}} & 96 & \underline{0.168} & \underline{0.249} & 0.176 & 0.264 & 0.179 & 0.263 & 0.173 & 0.256 & 0.171 & 0.256 & \textbf{0.163} & \textbf{0.246} & 0.180 & 0.261 & 0.175 & 0.258 & 0.180 & 0.264 & 0.175 & 0.259 & 0.193 & 0.292 \\
\multicolumn{1}{c|}{} & 192 & \underline{0.232} & \underline{0.292} & 0.241 & 0.306 & 0.250 & 0.309 & 0.238 & 0.298 & 0.237 & 0.299 & \textbf{0.229} & \textbf{0.290} & 0.246 & 0.306 & 0.237 & 0.299 & 0.250 & 0.309 & 0.241 & 0.302 & 0.284 & 0.362 \\
\multicolumn{1}{c|}{} & 336 & \underline{0.287} & \underline{0.330} & 0.302 & 0.345 & 0.312 & 0.349 & 0.301 & 0.340 & 0.296 & 0.338 & \textbf{0.284} & \textbf{0.327} & 0.319 & 0.352 & 0.298 & 0.340 & 0.311 & 0.348 & 0.305 & 0.343 & 0.369 & 0.427 \\
\multicolumn{1}{c|}{} & 720 & \textbf{0.386} & \textbf{0.390} & 0.404 & 0.405 & 0.411 & 0.406 & 0.397 & 0.396 & 0.392 & 0.394 & \underline{0.389} & \underline{0.391} & 0.405 & 0.401 & 0.391 & 0.396 & 0.412 & 0.407 & 0.402 & 0.400 & 0.554 & 0.522 \\
\cmidrule(l){2-24}
\multicolumn{1}{c|}{} & Avg & \underline{0.268} & \underline{0.315} & 0.281 & 0.330 & 0.288 & 0.332 & 0.277 & 0.323 & 0.274 & 0.322 & \textbf{0.266} & \textbf{0.314} & 0.287 & 0.330 & 0.275 & 0.323 & 0.288 & 0.332 & 0.281 & 0.326 & 0.350 & 0.401 \\ \midrule
\multicolumn{1}{c|}{\multirow{5}{*}{\rotatebox{90}{Electricity}}} & 96 & \textbf{0.134} & \textbf{0.229} & 0.152 & 0.257 & 0.139 & 0.235 & \underline{0.134} & \underline{0.229} & 0.140 & 0.242 & 0.136 & 0.229 & 0.143 & 0.233 & 0.153 & 0.247 & 0.148 & 0.240 & 0.181 & 0.270 & 0.197 & 0.282 \\
\multicolumn{1}{c|}{} & 192 & \textbf{0.152} & \underline{0.245} & 0.156 & 0.257 & 0.159 & 0.255 & 0.154 & 0.247 & 0.157 & 0.256 & \underline{0.152} & \textbf{0.244} & 0.158 & 0.248 & 0.166 & 0.256 & 0.162 & 0.253 & 0.188 & 0.274 & 0.196 & 0.285 \\
\multicolumn{1}{c|}{} & 336 & 0.171 & \textbf{0.264} & 0.171 & 0.272 & 0.176 & 0.272 & \textbf{0.169} & \underline{0.264} & 0.176 & 0.275 & \underline{0.170} & 0.264 & 0.178 & 0.269 & 0.185 & 0.277 & 0.178 & 0.269 & 0.204 & 0.293 & 0.209 & 0.301 \\
\multicolumn{1}{c|}{} & 720 & 0.208 & 0.300 & 0.221 & 0.300 & \underline{0.204} & \underline{0.298} & \textbf{0.201} & \textbf{0.294} & 0.211 & 0.306 & 0.212 & 0.299 & 0.218 & 0.305 & 0.225 & 0.310 & 0.225 & 0.317 & 0.246 & 0.324 & 0.245 & 0.333 \\
\cmidrule(l){2-24}
\multicolumn{1}{c|}{} & Avg & \underline{0.166} & 0.260 & 0.175 & 0.272 & 0.170 & 0.265 & \textbf{0.164} & \textbf{0.259} & 0.171 & 0.270 & 0.168 & \underline{0.259} & 0.174 & 0.264 & 0.182 & 0.273 & 0.178 & 0.270 & 0.205 & 0.290 & 0.212 & 0.300 \\ \midrule
\multicolumn{1}{c|}{\multirow{5}{*}{\rotatebox{90}{Solar-Energy}}} & 96 & \underline{0.176} & 0.235 & 0.278 & 0.278 & 0.205 & 0.244 & \textbf{0.173} & \underline{0.233} & 0.215 & 0.295 & 0.190 & 0.247 & 0.200 & \textbf{0.230} & 0.189 & 0.259 & 0.203 & 0.237 & 0.234 & 0.286 & 0.290 & 0.378 \\
\multicolumn{1}{c|}{} & 192 & \textbf{0.193} & \textbf{0.244} & 0.297 & 0.297 & 0.237 & 0.270 & \underline{0.199} & 0.257 & 0.236 & 0.301 & 0.210 & 0.266 & 0.229 & \underline{0.253} & 0.222 & 0.283 & 0.233 & 0.261 & 0.267 & 0.310 & 0.320 & 0.398 \\
\multicolumn{1}{c|}{} & 336 & \textbf{0.201} & \textbf{0.250} & 0.311 & 0.311 & 0.258 & 0.288 & \underline{0.211} & \underline{0.263} & 0.252 & 0.307 & 0.217 & 0.266 & 0.243 & 0.269 & 0.231 & 0.292 & 0.248 & 0.273 & 0.290 & 0.315 & 0.353 & 0.415 \\
\multicolumn{1}{c|}{} & 720 & \textbf{0.205} & \textbf{0.251} & 0.318 & 0.325 & 0.260 & 0.288 & \underline{0.209} & 0.270 & 0.244 & 0.305 & 0.223 & \underline{0.266} & 0.245 & 0.272 & 0.223 & 0.285 & 0.249 & 0.275 & 0.289 & 0.317 & 0.356 & 0.413 \\
\cmidrule(l){2-24}
\multicolumn{1}{c|}{} & Avg & \textbf{0.194} & \textbf{0.245} & 0.301 & 0.303 & 0.240 & 0.273 & \underline{0.198} & \underline{0.256} & 0.237 & 0.302 & 0.210 & 0.261 & 0.229 & 0.256 & 0.216 & 0.280 & 0.233 & 0.262 & 0.270 & 0.307 & 0.330 & 0.401 \\ \midrule
\multicolumn{1}{c|}{\multirow{5}{*}{\rotatebox{90}{Traffic}}} & 96 & 0.440 & 0.263 & 0.388 & 0.265 & \underline{0.382} & \underline{0.261} & 0.413 & 0.261 & 0.428 & 0.271 & 0.458 & 0.296 & \textbf{0.376} & \textbf{0.251} & 0.462 & 0.285 & 0.395 & 0.268 & 0.462 & 0.290 & 0.650 & 0.396 \\
\multicolumn{1}{c|}{} & 192 & 0.454 & 0.274 & 0.400 & 0.278 & \textbf{0.396} & \underline{0.267} & 0.432 & 0.271 & 0.448 & 0.282 & 0.457 & 0.294 & \underline{0.398} & \textbf{0.261} & 0.473 & 0.296 & 0.417 & 0.276 & 0.466 & 0.290 & 0.598 & 0.370 \\
\multicolumn{1}{c|}{} & 336 & 0.472 & 0.282 & \textbf{0.411} & 0.287 & 0.417 & \underline{0.276} & 0.450 & 0.277 & 0.473 & 0.289 & 0.470 & 0.299 & \underline{0.415} & \textbf{0.269} & 0.498 & 0.296 & 0.433 & 0.283 & 0.482 & 0.300 & 0.605 & 0.373 \\
\multicolumn{1}{c|}{} & 720 & 0.514 & 0.301 & \underline{0.456} & 0.305 & 0.460 & 0.300 & 0.486 & \underline{0.295} & 0.516 & 0.307 & 0.502 & 0.314 & \textbf{0.447} & \textbf{0.287} & 0.506 & 0.313 & 0.467 & 0.302 & 0.514 & 0.320 & 0.645 & 0.394 \\
\cmidrule(l){2-24}
\multicolumn{1}{c|}{} & Avg & 0.470 & 0.280 & \underline{0.414} & 0.284 & 0.414 & \underline{0.276} & 0.445 & 0.276 & 0.466 & 0.287 & 0.472 & 0.301 & \textbf{0.409} & \textbf{0.267} & 0.485 & 0.298 & 0.428 & 0.282 & 0.481 & 0.300 & 0.625 & 0.383 \\ \midrule
\multicolumn{1}{c|}{\multirow{5}{*}{\rotatebox{90}{Weather}}} & 96 & \textbf{0.154} & \textbf{0.200} & 0.185 & 0.241 & 0.165 & 0.210 & \underline{0.157} & \underline{0.200} & 0.157 & 0.205 & 0.158 & 0.203 & 0.166 & 0.208 & 0.163 & 0.209 & 0.174 & 0.214 & 0.177 & 0.210 & 0.196 & 0.255 \\
\multicolumn{1}{c|}{} & 192 & \textbf{0.202} & \textbf{0.244} & 0.237 & 0.287 & 0.214 & 0.252 & 0.206 & \underline{0.245} & \underline{0.204} & 0.247 & 0.207 & 0.247 & 0.217 & 0.253 & 0.208 & 0.250 & 0.221 & 0.254 & 0.225 & 0.250 & 0.237 & 0.296 \\
\multicolumn{1}{c|}{} & 336 & \underline{0.259} & \textbf{0.287} & 0.290 & 0.327 & 0.274 & 0.297 & 0.262 & \underline{0.287} & 0.261 & 0.290 & 0.262 & 0.289 & 0.282 & 0.300 & \textbf{0.251} & 0.287 & 0.278 & 0.296 & 0.278 & 0.290 & 0.283 & 0.335 \\
\multicolumn{1}{c|}{} & 720 & 0.341 & 0.342 & 0.367 & 0.380 & 0.350 & 0.345 & 0.344 & 0.342 & \underline{0.340} & \underline{0.341} & 0.344 & 0.344 & 0.356 & 0.351 & \textbf{0.339} & 0.341 & 0.358 & 0.349 & 0.354 & \textbf{0.340} & 0.345 & 0.381 \\
\cmidrule(l){2-24}
\multicolumn{1}{c|}{} & Avg & \textbf{0.239} & \textbf{0.268} & 0.270 & 0.309 & 0.251 & 0.276 & 0.242 & \underline{0.269} & 0.241 & 0.271 & 0.243 & 0.271 & 0.255 & 0.278 & \underline{0.240} & 0.272 & 0.258 & 0.278 & 0.259 & 0.273 & 0.265 & 0.317 \\ \midrule
\multicolumn{1}{c|}{\multirow{5}{*}{\rotatebox{90}{PEMS03}}} & 12 & \textbf{0.057} & \textbf{0.156} & 0.071 & 0.175 & 0.065 & 0.169 & \underline{0.060} & \underline{0.161} & 0.070 & 0.173 & 0.066 & 0.172 & 0.064 & 0.165 & 0.091 & 0.215 & 0.071 & 0.174 & 0.099 & 0.216 & 0.122 & 0.243 \\
\multicolumn{1}{c|}{} & 24 & \textbf{0.070} & \textbf{0.172} & 0.099 & 0.201 & 0.087 & 0.196 & \underline{0.077} & \underline{0.182} & 0.092 & 0.194 & 0.089 & 0.201 & 0.083 & 0.188 & 0.115 & 0.242 & 0.093 & 0.201 & 0.142 & 0.259 & 0.201 & 0.317 \\
\multicolumn{1}{c|}{} & 48 & \textbf{0.094} & \textbf{0.198} & 0.153 & 0.243 & 0.133 & 0.243 & \underline{0.104} & \underline{0.215} & 0.129 & 0.229 & 0.136 & 0.247 & 0.114 & 0.223 & 0.173 & 0.302 & 0.125 & 0.236 & 0.211 & 0.319 & 0.333 & 0.425 \\
\multicolumn{1}{c|}{} & 96 & \textbf{0.127} & \textbf{0.228} & 0.253 & 0.301 & 0.201 & 0.305 & \underline{0.148} & \underline{0.253} & 0.157 & 0.261 & 0.182 & 0.282 & 0.156 & 0.264 & 0.238 & 0.355 & 0.164 & 0.275 & 0.269 & 0.370 & 0.457 & 0.515 \\
\cmidrule(l){2-24}
\multicolumn{1}{c|}{} & Avg & \textbf{0.087} & \textbf{0.189} & 0.144 & 0.230 & 0.122 & 0.228 & \underline{0.097} & \underline{0.203} & 0.112 & 0.214 & 0.118 & 0.226 & 0.104 & 0.210 & 0.154 & 0.278 & 0.113 & 0.222 & 0.180 & 0.291 & 0.278 & 0.375 \\ \midrule
\multicolumn{1}{c|}{\multirow{5}{*}{\rotatebox{90}{PEMS04}}} & 12 & \textbf{0.065} & \textbf{0.164} & 0.077 & 0.179 & 0.076 & 0.180 & \underline{0.067} & \underline{0.166} & 0.074 & 0.178 & 0.078 & 0.186 & 0.074 & 0.176 & 0.103 & 0.228 & 0.078 & 0.183 & 0.105 & 0.224 & 0.148 & 0.272 \\
\multicolumn{1}{c|}{} & 24 & \textbf{0.075} & \textbf{0.177} & 0.091 & 0.195 & 0.084 & 0.193 & \underline{0.077} & \underline{0.181} & 0.087 & 0.195 & 0.099 & 0.212 & 0.088 & 0.194 & 0.122 & 0.249 & 0.095 & 0.205 & 0.153 & 0.275 & 0.224 & 0.340 \\
\multicolumn{1}{c|}{} & 48 & \textbf{0.093} & \textbf{0.197} & 0.111 & 0.218 & 0.115 & 0.224 & \underline{0.097} & \underline{0.206} & 0.110 & 0.214 & 0.133 & 0.248 & 0.110 & 0.219 & 0.167 & 0.298 & 0.120 & 0.233 & 0.229 & 0.339 & 0.355 & 0.437 \\
\multicolumn{1}{c|}{} & 96 & \textbf{0.114} & \textbf{0.219} & 0.137 & 0.247 & 0.137 & 0.248 & \underline{0.123} & \underline{0.233} & 0.148 & 0.251 & 0.167 & 0.281 & 0.135 & 0.244 & 0.231 & 0.353 & 0.150 & 0.262 & 0.291 & 0.389 & 0.452 & 0.504 \\
\cmidrule(l){2-24}
\multicolumn{1}{c|}{} & Avg & \textbf{0.087} & \textbf{0.189} & 0.104 & 0.210 & 0.103 & 0.211 & \underline{0.091} & \underline{0.197} & 0.105 & 0.209 & 0.119 & 0.232 & 0.102 & 0.208 & 0.156 & 0.282 & 0.111 & 0.221 & 0.195 & 0.307 & 0.295 & 0.388 \\ \midrule
\multicolumn{1}{c|}{\multirow{5}{*}{\rotatebox{90}{PEMS07}}} & 12 & \textbf{0.051} & \textbf{0.142} & 0.062 & 0.159 & 0.063 & 0.159 & \underline{0.051} & \underline{0.143} & 0.057 & 0.152 & 0.062 & 0.162 & 0.057 & 0.152 & 0.086 & 0.205 & 0.067 & 0.165 & 0.095 & 0.207 & 0.115 & 0.242 \\
\multicolumn{1}{c|}{} & 24 & \textbf{0.063} & \textbf{0.156} & 0.078 & 0.178 & 0.081 & 0.183 & \underline{0.063} & \underline{0.159} & 0.079 & 0.179 & 0.086 & 0.192 & 0.073 & 0.173 & 0.111 & 0.235 & 0.088 & 0.190 & 0.150 & 0.262 & 0.210 & 0.329 \\
\multicolumn{1}{c|}{} & 48 & \underline{0.082} & \textbf{0.178} & 0.102 & 0.203 & 0.093 & 0.192 & \textbf{0.081} & \underline{0.179} & 0.099 & 0.191 & 0.128 & 0.234 & 0.096 & 0.195 & 0.161 & 0.281 & 0.110 & 0.215 & 0.253 & 0.340 & 0.398 & 0.458 \\
\multicolumn{1}{c|}{} & 96 & 0.110 & \textbf{0.202} & 0.135 & 0.231 & 0.117 & 0.217 & \textbf{0.103} & \underline{0.203} & \underline{0.107} & 0.205 & 0.176 & 0.268 & 0.120 & 0.218 & 0.215 & 0.315 & 0.139 & 0.245 & 0.346 & 0.404 & 0.594 & 0.553 \\
\cmidrule(l){2-24}
\multicolumn{1}{c|}{} & Avg & \underline{0.076} & \textbf{0.169} & 0.094 & 0.193 & 0.089 & 0.188 & \textbf{0.075} & \underline{0.171} & 0.085 & 0.182 & 0.113 & 0.214 & 0.086 & 0.184 & 0.143 & 0.259 & 0.101 & 0.204 & 0.211 & 0.303 & 0.329 & 0.396 \\ \midrule
\multicolumn{1}{c|}{\multirow{5}{*}{\rotatebox{90}{PEMS08}}} & 12 & \textbf{0.071} & \textbf{0.168} & 0.078 & 0.181 & 0.076 & 0.178 & \underline{0.071} & \underline{0.170} & 0.075 & 0.176 & 0.082 & 0.185 & 0.074 & 0.171 & 0.089 & 0.201 & 0.079 & 0.182 & 0.168 & 0.232 & 0.154 & 0.276 \\
\multicolumn{1}{c|}{} & 24 & \textbf{0.095} & \textbf{0.192} & 0.102 & 0.205 & 0.104 & 0.209 & \underline{0.096} & \underline{0.196} & 0.102 & 0.201 & 0.117 & 0.226 & 0.104 & 0.201 & 0.166 & 0.283 & 0.115 & 0.219 & 0.224 & 0.281 & 0.248 & 0.353 \\
\multicolumn{1}{c|}{} & 48 & \textbf{0.148} & \underline{0.234} & \underline{0.148} & 0.244 & 0.167 & \textbf{0.228} & 0.149 & 0.244 & 0.158 & 0.248 & 0.169 & 0.268 & 0.164 & 0.253 & 0.270 & 0.369 & 0.186 & 0.235 & 0.321 & 0.354 & 0.440 & 0.470 \\
\multicolumn{1}{c|}{} & 96 & 0.256 & 0.293 & 0.277 & 0.307 & 0.245 & 0.280 & 0.253 & 0.309 & 0.366 & 0.377 & 0.233 & 0.306 & \textbf{0.211} & \textbf{0.253} & 0.486 & 0.493 & \underline{0.221} & \underline{0.267} & 0.408 & 0.417 & 0.674 & 0.565 \\
\cmidrule(l){2-24}
\multicolumn{1}{c|}{} & Avg & \underline{0.142} & \underline{0.222} & 0.151 & 0.234 & 0.148 & 0.224 & \underline{0.142} & 0.230 & 0.175 & 0.250 & 0.150 & 0.246 & \textbf{0.138} & \textbf{0.220} & 0.253 & 0.336 & 0.150 & 0.226 & 0.280 & 0.321 & 0.379 & 0.416 \\ \midrule
\multicolumn{2}{c|}{$1^{st}$ Count} & \textbf{32} & \textbf{36} & 5 & 2 & 1 & 0 & \underline{7} & 2 & 2 & 3 & 4 & 5 & 5 & \underline{8} & 4 & 2 & 0 & 1 & 0 & 0 & 0 & 0 \\
\bottomrule
\end{tabular}
\end{adjustbox}
\end{table*}

\subsection{More Ablation Results}

To further validate the architectural decisions of the Global Temporal Retriever (GTR) and assess the robustness of its components, we conducted a comprehensive ablation study focusing on two key aspects: the impact of Reversible Instance Normalization (RevIN)~\citep{Revin} and the design of the temporal pattern extraction module. The experimental results are summarized in Table~\ref{tab:revin}.

\begin{table*}[!htb]
\centering
\caption{Ablation results of RevIN and Model Variants. The best results in each comparison group are highlighted in \textbf{bold}.}
\label{tab:revin}
\begin{adjustbox}{max width=\linewidth}
\begin{tabular}{@{}cc|cc|cc||cc|cc|cc|cc@{}}
\toprule
\multicolumn{2}{c|}{Model} & \multicolumn{2}{c|}{\begin{tabular}[c]{@{}c@{}}GTR\\ w. RevIN\end{tabular}} & \multicolumn{2}{c||}{\begin{tabular}[c]{@{}c@{}}GTR\\ w/o. RevIN\end{tabular}} & \multicolumn{2}{c|}{\begin{tabular}[c]{@{}c@{}}Original\\ 2D Conv. \end{tabular}} & \multicolumn{2}{c|}{\begin{tabular}[c]{@{}c@{}}Variant 1\\ Concat \end{tabular}} & \multicolumn{2}{c|}{\begin{tabular}[c]{@{}c@{}}Variant 2\\ Inception\end{tabular}} & \multicolumn{2}{c}{\begin{tabular}[c]{@{}c@{}}Variant 3\\ 1D Conv. \end{tabular}} \\ \midrule
\multicolumn{2}{c|}{Metric} & \multicolumn{1}{c}{MSE} & \multicolumn{1}{c|}{MAE} & \multicolumn{1}{c}{MSE} & \multicolumn{1}{c||}{MAE} & \multicolumn{1}{c}{MSE} & \multicolumn{1}{c|}{MAE} & \multicolumn{1}{c}{MSE} & \multicolumn{1}{c|}{MAE} & \multicolumn{1}{c}{MSE} & \multicolumn{1}{c|}{MAE} & \multicolumn{1}{c}{MSE} & \multicolumn{1}{c}{MAE} \\ \midrule
\multicolumn{1}{c|}{\multirow{4}{*}{\rotatebox{90}{ETTh1}}} 
& 96  & \textbf{0.367} & \textbf{0.391} & 0.380 & 0.402 & \textbf{0.367} & \textbf{0.391} & 0.369 & 0.393 & 0.371 & 0.395 & 0.369 & 0.392 \\
\multicolumn{1}{c|}{} 
& 192 & \textbf{0.420} & \textbf{0.421} & 0.433 & 0.437 & \textbf{0.420} & \textbf{0.421} & 0.423 & 0.423 & 0.423 & 0.422 & 0.427 & 0.423 \\
\multicolumn{1}{c|}{} 
& 336 & \textbf{0.476} & \textbf{0.447} & 0.480 & 0.464 & 0.476 & \textbf{0.447} & \textbf{0.472} & 0.448 & 0.481 & 0.451 & 0.479 & 0.448 \\
\multicolumn{1}{c|}{} 
& 720 & \textbf{0.493} & \textbf{0.476} & 0.540 & 0.529 & 0.493 & 0.476 & \textbf{0.490} & \textbf{0.473} & 0.507 & 0.476 & 0.513 & 0.487 \\ \midrule
\multicolumn{1}{c|}{\multirow{4}{*}{\rotatebox{90}{ETTh2}}} 
& 96  & \textbf{0.290} & 0.342 & 0.315 & 0.363 & \textbf{0.290} & \textbf{0.342} & 0.293 & 0.344 & 0.291 & \textbf{0.341} & 0.297 & 0.346 \\
\multicolumn{1}{c|}{} 
& 192 & \textbf{0.362} & \textbf{0.389} & 0.422 & 0.430 & \textbf{0.362} & \textbf{0.389} & 0.375 & 0.394 & 0.369 & 0.391 & 0.372 & 0.393 \\
\multicolumn{1}{c|}{} 
& 336 & \textbf{0.414} & \textbf{0.429} & 0.567 & 0.518 & \textbf{0.414} & 0.429 & 0.416 & \textbf{0.428} & 0.422 & 0.433 & 0.420 & 0.431 \\
\multicolumn{1}{c|}{} 
& 720 & \textbf{0.423} & \textbf{0.442} & 0.872 & 0.649 & \textbf{0.423} & \textbf{0.442} & 0.452 & 0.455 & 0.424 & 0.443 & 0.424 & 0.442 \\ \midrule
\multicolumn{1}{c|}{\multirow{4}{*}{\rotatebox{90}{ETTm1}}} 
& 96  & \textbf{0.305} & \textbf{0.349} & 0.315 & 0.358 & \textbf{0.305} & \textbf{0.349} & 0.306 & 0.351 & 0.313 & 0.358 & 0.307 & 0.351 \\
\multicolumn{1}{c|}{} 
& 192 & \textbf{0.349} & \textbf{0.375} & 0.356 & 0.389 & \textbf{0.349} & \textbf{0.375} & 0.350 & 0.376 & 0.352 & 0.377 & 0.355 & 0.378 \\
\multicolumn{1}{c|}{} 
& 336 & 0.380 & \textbf{0.398} & \textbf{0.378} & 0.403 & 0.380 & 0.398 & \textbf{0.377} & \textbf{0.397} & 0.379 & 0.398 & 0.382 & 0.398 \\
\multicolumn{1}{c|}{} 
& 720 & \textbf{0.435} & \textbf{0.436} & 0.442 & 0.451 & 0.435 & 0.436 & 0.434 & 0.435 & 0.442 & 0.436 & \textbf{0.433} & \textbf{0.434} \\ \midrule
\multicolumn{1}{c|}{\multirow{4}{*}{\rotatebox{90}{ETTm2}}} 
& 96  & \textbf{0.168} & \textbf{0.249} & 0.190 & 0.285 & 0.168 & \textbf{0.249} & 0.169 & 0.250 & \textbf{0.167} & 0.251 & 0.168 & 0.251 \\
\multicolumn{1}{c|}{} 
& 192 & \textbf{0.232} & \textbf{0.292} & 0.285 & 0.355 & \textbf{0.232} & \textbf{0.292} & 0.237 & 0.295 & 0.235 & 0.295 & 0.233 & 0.293 \\
\multicolumn{1}{c|}{} 
& 336 & \textbf{0.287} & \textbf{0.330} & 0.526 & 0.474 & \textbf{0.287} & \textbf{0.330} & 0.290 & 0.333 & 0.291 & 0.332 & 0.289 & 0.333 \\
\multicolumn{1}{c|}{} 
& 720 & \textbf{0.386} & \textbf{0.390} & 0.798 & 0.610 & \textbf{0.386} & \textbf{0.390} & 0.390 & 0.392 & 0.390 & 0.391 & 0.389 & 0.392 \\ \midrule
\multicolumn{1}{c|}{\multirow{4}{*}{\rotatebox{90}{Electricity}}} 
& 96  & 0.134 & \textbf{0.229} & \textbf{0.132} & 0.230 & \textbf{0.134} & 0.229 & 0.136 & \textbf{0.228} & 0.135 & 0.229 & 0.135 & 0.229 \\
\multicolumn{1}{c|}{} 
& 192 & 0.152 & \textbf{0.245} & \textbf{0.151} & 0.248 & \textbf{0.152} & \textbf{0.245} & 0.154 & 0.247 & 0.153 & 0.246 & 0.154 & 0.248 \\
\multicolumn{1}{c|}{} 
& 336 & 0.171 & \textbf{0.264} & \textbf{0.167} & 0.267 & 0.171 & 0.264 & 0.172 & \textbf{0.263} & \textbf{0.169} & 0.264 & 0.170 & 0.264 \\
\multicolumn{1}{c|}{} 
& 720 & \textbf{0.208} & \textbf{0.300} & \textbf{0.208} & 0.305 & 0.208 & 0.300 & 0.209 & 0.300 & \textbf{0.207} & \textbf{0.298} & 0.209 & 0.300 \\ \midrule
\multicolumn{1}{c|}{\multirow{4}{*}{\rotatebox{90}{Solar}}} 
& 96  & 0.181 & 0.237 & \textbf{0.176} & \textbf{0.235} & \textbf{0.176} & 0.235 & 0.182 & \textbf{0.234} & 0.180 & 0.237 & 0.181 & 0.237 \\
\multicolumn{1}{c|}{} 
& 192 & 0.196 & 0.245 & \textbf{0.193} & \textbf{0.244} & 0.193 & 0.244 & \textbf{0.192} & 0.245 & 0.195 & \textbf{0.243} & 0.196 & 0.245 \\
\multicolumn{1}{c|}{} 
& 336 & 0.210 & 0.259 & \textbf{0.201} & \textbf{0.250} & \textbf{0.201} & \textbf{0.250} & 0.207 & 0.256 & 0.205 & 0.252 & 0.210 & 0.259 \\
\multicolumn{1}{c|}{} 
& 720 & 0.232 & 0.277 & \textbf{0.205} & \textbf{0.251} & \textbf{0.205} & \textbf{0.251} & 0.212 & 0.266 & 0.212 & 0.255 & 0.232 & 0.277 \\ \midrule
\multicolumn{1}{c|}{\multirow{4}{*}{\rotatebox{90}{Weather}}} 
& 96  & \textbf{0.154} & \textbf{0.200} & 0.157 & 0.220 & \textbf{0.154} & \textbf{0.200} & 0.156 & 0.201 & 0.156 & 0.201 & 0.156 & 0.201 \\
\multicolumn{1}{c|}{} 
& 192 & \textbf{0.202} & \textbf{0.244} & 0.208 & 0.269 & \textbf{0.202} & \textbf{0.244} & 0.203 & 0.245 & 0.203 & 0.245 & 0.205 & 0.246 \\
\multicolumn{1}{c|}{} 
& 336 & \textbf{0.259} & \textbf{0.287} & 0.263 & 0.318 & \textbf{0.259} & \textbf{0.287} & 0.262 & 0.288 & 0.261 & 0.288 & 0.261 & 0.288 \\
\multicolumn{1}{c|}{} 
& 720 & \textbf{0.341} & \textbf{0.342} & 0.357 & 0.382 & 0.341 & 0.342 & 0.343 & 0.343 & 0.343 & 0.342 & \textbf{0.340} & \textbf{0.341} \\ 
\midrule
\multicolumn{2}{c|}{$1^{st}$ Count} & \textbf{20} & \textbf{23} & 9 & 4 & \textbf{19} & \textbf{18} & 4 & 6 & 3 & 3 & 2 & 2\\
\bottomrule
\end{tabular}
\end{adjustbox}
\end{table*}

\subsubsection{Impact of Reversible Instance Normalization}

We first investigate the contribution of Reversible Instance Normalization (RevIN) to the model's performance. RevIN is designed to mitigate the distribution shift problem in time series forecasting by normalizing the input and denormalizing the output.

As observed in the left part of Table~\ref{tab:revin}, the standard GTR (w. RevIN) significantly outperforms the version without RevIN (w/o RevIN) on the majority of datasets, particularly the ETT series and Weather datasets. For example, on the ETTh2 dataset (prediction horizon 720), removing RevIN results in a catastrophic increase in MSE from 0.423 to 0.872. This confirms that RevIN is crucial for handling data with significant non-stationarity and distribution shifts.

Notably, the Electricity and Solar datasets exhibit a deviation from this trend, where the exclusion of RevIN yields superior performance. This phenomenon can be attributed to the prevalence of continuous zero-value segments (e.g., zero power generation at night), which may bias the mean statistics estimated by RevIN. Despite these specific instances, we retain RevIN as the default configuration to ensure general robustness across diverse forecasting scenarios.

\subsubsection{Analysis of Extraction Variants}
The core of the GTR module is the extraction of temporal patterns by fusing the local observation $\mathbf{x}_n \in \mathbb{R}^T$ and the retrieved global context $\mathbf{q}_n \in \mathbb{R}^T$. Our proposed method utilizes a lightweight 2D convolution. To verify the optimality of this design, we compare it against three distinct variants:

\textbf{Variant 1 (Point-wise Fusion):}
This variant assesses \textit{whether temporal convolution is necessary}. We \textit{concatenate} $\mathbf{x}_n$ and $\mathbf{q}_n$ along the feature dimension and apply a linear projection point-wise at each time step. This removes the receptive field, relying solely on the immediate alignment of local and global values:
\begin{equation}
    \mathbf{h}_n = \text{Linear}(\text{Concat}(\mathbf{x}_n, \mathbf{q}_n))
\end{equation}

\textbf{Variant 2 (Inception Module):}
To test whether \textit{multi-scale temporal patterns} enhance the effectiveness of the feature fusion process, we replace the original single-kernel convolution with an Inception block~\citep{inception}. The Inception operator applies several parallel convolutions with different kernel widths and aggregates their outputs through a $1\times 1$ fusion layer. Formally, we denote the entire multi-branch operation as:
\[
    \mathbf{h}_n = \text{Inception}(\bm{F}_n),
\]
where $\bm{F}_n \in \mathbb{R}^{2 \times T}$ is the stacked local and global features (same as in Equation \ref{eq:alignment}), and $\text{Inception}(\cdot)$ represents the multi-scale convolutional extraction followed by channel-wise fusion.

\textbf{Variant 3 (1D Convolution):}
This variant treats the local observation and global context as two separate channels of a 1D sequence. It applies a 1D convolution over the time dimension:
\begin{equation}
    \mathbf{h}_n = \text{Conv1d}_{\text{in}=2,\;\text{out}=1}(\bm{F}_n).
\end{equation}

The right side of Table \ref{tab:revin} presents the comparison among these variants. The results demonstrate that our \textit{Original 2D Convolution} strategy yields the most consistent and superior performance, achieving the best MSE in 19 out of 40 cases and the best MAE in 18 cases.
\begin{itemize}
    \item \textbf{Necessity of Convolution:} The inferior performance of Variant 1 (Concat) compared to the Original model (e.g., ETTh2 horizon 720 MSE 0.423 vs. 0.452) highlights the importance of the receptive field provided by convolution operations to capture temporal dependencies.
    \item \textbf{Efficiency vs. Complexity:} Surprisingly, the multi-scale Variant 2 (Inception) does not outperform the simpler single-kernel 2D convolution. While theoretically more powerful, the increased parameter count may lead to overfitting on shorter sequences, or it suggests that the dominant periodicity is well-captured by a single, optimally sized kernel.
    \item \textbf{Structure Bias:} The Original 2D Conv outperforms Variant 3 (1D Conv). We hypothesize that the 2D convolution kernel enforces a stronger structural prior for aligning the local and global rows compared to treating them as abstract channels, thereby learning a more robust fusion of the retrieved global information.
\end{itemize}

In conclusion, the ablation studies confirm that the GTR module with 2D convolution and RevIN represents the most effective and robust configuration for multivariate time series forecasting.

\subsubsection{Capability with Inter-channel Modeling}

Despite its strong performance across various datasets, the channel-independent architecture of GTR presents a challenge in domains where data exhibits strong spatio-temporal dependencies. For instance, the Traffic dataset exhibits significant \textit{inter-variable dependencies}, as road network flows are highly correlated. Models that effectively capture these relationships typically outperform channel-independent approaches on such datasets~\citep{iTransformer, TimeXer}. To demonstrate GTR's adaptability and its ability to integrate with inter-channel modeling, we introduce a novel Global Token Aggregation (GTA) module.

The GTA (Global Token Aggregation) module is designed to distill the most salient inter-channel relationships into a single, comprehensive global representation. This is achieved by first transforming the input embedding tensor $\bar{\bm{Q}} \in \mathbb{R}^{T \times N}$ — where $T$ is the sequence length and $N$ is the number of channels — into a global token $g$. This global token is computed via channel-wise weighted aggregation, where the weights are derived from a softmax over the channel dimension. The resulting global context is then broadcast and fused back with the original channels to guide each channel’s prediction with unified global awareness.

Formally, given the input global representation $\bar{\bm{Q}} \in \mathbb{R}^{T \times N}$, the process proceeds as follows:

\begin{enumerate}
    \item \textbf{Channel Aggregation}: Compute channel-wise attention weights using softmax along the channel dimension (dimension 1, assuming 0-indexed axes). Then, aggregate across channels to obtain a global token $g \in \mathbb{R}^{T \times 1}$:
    \[
    \text{weight} = \text{Softmax}(\bar{\bm{Q}}) \in \mathbb{R}^{T \times N}
    \]
    \[
    g = \sum_{n=1}^{N} \bar{\bm{Q}}_{:,n} \cdot \text{weight}_{:,n} \in \mathbb{R}^{T \times 1}
    \]
    Here, $\bar{\bm{Q}}_{:,n}$ denotes the $n$-th channel across all time steps, and the weighted sum produces a single global token per time step.

    \item \textbf{Re-broadcast and Fusion}: The global token $g$ is broadcast across all $N$ channels to form a global guidance tensor, which is then fused (e.g., via addition or concatenation — though your formula suggests replacement) with the original representation:
    \[
    Q_{\text{fused}} = g \cdot \mathbf{1}_{1 \times N} \in \mathbb{R}^{T \times N}
    \]
    where $\mathbf{1}_{1 \times N}$ is a row vector of ones, and broadcasting replicates $g$ across the channel dimension. 
\end{enumerate}

By injecting this globally aggregated token into each channel, the GTA module enables each channel’s prediction to be informed by the collective dynamics of all channels, thereby capturing inter-variable dependencies in a parameter-efficient manner.

The integration of the GTA module significantly boosts GTR's performance on the challenging traffic dataset, as shown in Table \ref{tab:gta_performance}. The aggregated token provides the necessary inter-channel context, resulting in a substantial reduction in both Mean Squared Error (MSE) across all prediction horizons.

\begin{table}[!htb]
\centering
\caption{Performance Comparison on the Traffic Dataset.}
\label{tab:gta_performance}
\begin{tabular}{@{}c|c|cc|cc|cc@{}}
\toprule
\multicolumn{2}{c|}{Setup} & \multicolumn{2}{c|}{GTR} & \multicolumn{2}{c|}{+ GTA module} & \multicolumn{2}{c}{Improvement (\%)$\uparrow$} \\
\cmidrule(r){1-2} \cmidrule(lr){3-4} \cmidrule(lr){5-6} \cmidrule(l){7-8}
$T$ & $S$ & MSE & MAE & MSE & MAE & MSE $\downarrow$ & MAE $\downarrow$ \\
\midrule
96 & 96 & 0.440 & \textbf{0.264} & \textbf{0.399} & 0.265 & 9.3\% & -0.4\% \\
96 & 192 & 0.459 & 0.275 & \textbf{0.418} & \textbf{0.273} & 8.8\% & 0.7\% \\
96 & 336 & 0.473 & 0.283 & \textbf{0.431} & \textbf{0.279} & 8.8\% & 1.4\% \\
96 & 720 & 0.515 & 0.302 & \textbf{0.462} & \textbf{0.298} & 10.4\% & 1.3\% \\
\bottomrule
\end{tabular}
\end{table}

This remarkable improvement confirms that while GTR is designed as a channel-independent model, its core architecture is highly adaptable. When augmented with a mechanism to incorporate inter-variable dependencies, it can achieve state-of-the-art performance on challenging datasets where such relationships are critical.

\subsection{Error Bar Analysis}

To further evaluate the robustness of our method, we conducted a comprehensive error bar analysis by evaluating GTR across multiple random seeds on diverse benchmark datasets. For each dataset, we report the standard deviation of MSE and MAE over 5 independent runs with different initialization seeds. As shown in Table~\ref{tab:error_bar_full} and Figure \ref{fig:seed}, GTR exhibits remarkably low variance (mostly below 0.001) across all datasets and forecasting horizons. This strongly indicates the robustness of GTR.

\begin{figure}[ht]
  \centering
  \includegraphics[width=0.95\textwidth]{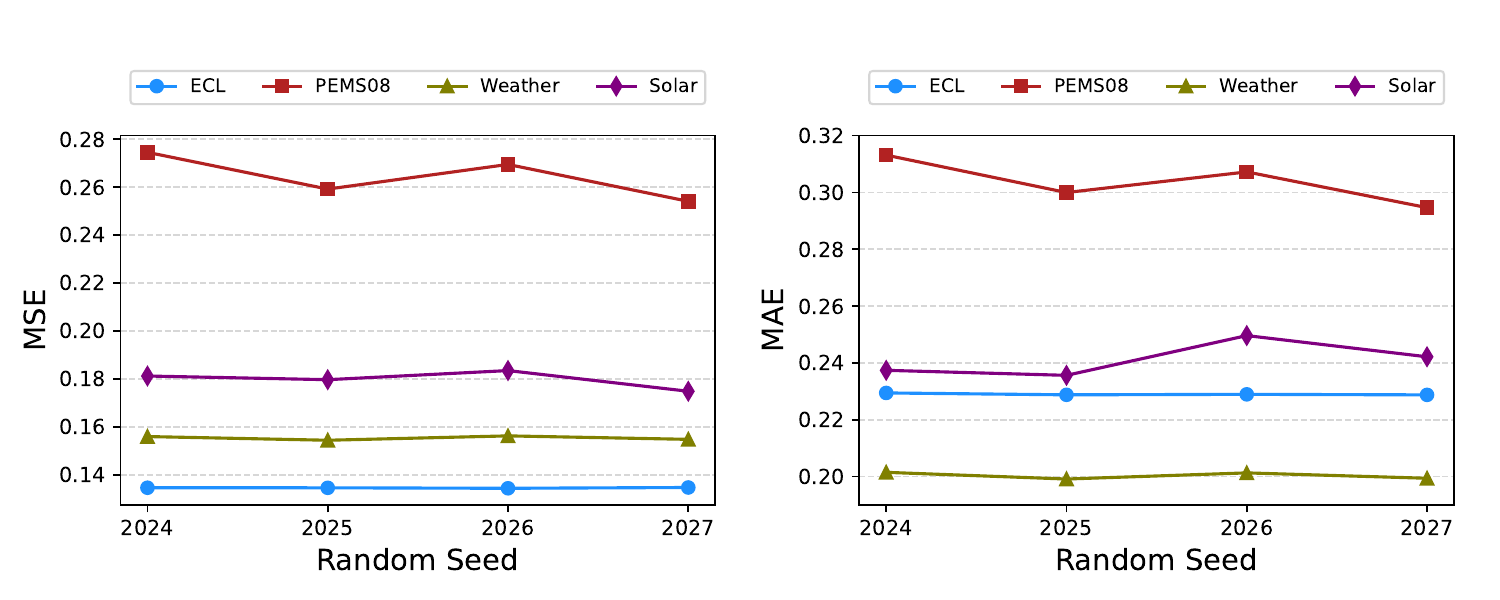}
  \caption{Performance of GTR under different random seeds on several datasets.  The look-back length $T$ and forecasting horizon $S$ are fixed at 96.}
  \label{fig:seed}
\end{figure}

\begin{table*}[!htb]
\centering
\caption{Full results of different models with the look-back length $T$=96. The reported results with standard deviation of GTR are averaged from 5 runs (with different random seeds of \{2025, 2026, 2027, 2028, 2029\}). The best results are highlighted in \textbf{bold}, while the second-best results are \underline{underlined}.}
\label{tab:error_bar_full}
\begin{adjustbox}{max width=0.95\linewidth}
\begin{tabular}{@{}cc|cc|cc|cc|cc|cc@{}}
\toprule
\multicolumn{2}{c|}{Model} & \multicolumn{2}{c|}{\begin{tabular}[c]{@{}c@{}}TimeMixer\\ \citeyearpar{TimeMixer}\end{tabular}} & \multicolumn{2}{c|}{\begin{tabular}[c]{@{}c@{}}iTransformer\\ \citeyearpar{iTransformer}\end{tabular}} & \multicolumn{2}{c|}{\begin{tabular}[c]{@{}c@{}}CycleNet\\ \citeyearpar{CycleNet}\end{tabular}} & \multicolumn{2}{c|}{\begin{tabular}[c]{@{}c@{}}TimeXer\\ \citeyearpar{TimeXer}\end{tabular}} & \multicolumn{2}{c}{\begin{tabular}[c]{@{}c@{}}GTR\\ (Ours)\end{tabular}} \\ \midrule
\multicolumn{2}{c|}{Metric} & MSE & MAE & MSE & MAE & MSE & MAE & MSE & MAE & MSE & MAE \\ \midrule
\multicolumn{1}{c|}{\multirow{5}{*}{\rotatebox{90}{ETTh1}}} & 96 & \underline{0.375} & 0.400 & 0.386 & 0.405 & 0.375 & \underline{0.395} & 0.382 & 0.403 & \textbf{0.367} $\pm$ 0.002 & \textbf{0.391} $\pm$ 0.001 \\
\multicolumn{1}{c|}{} & 192 & \underline{0.429} & \textbf{0.421} & 0.441 & 0.436 & 0.436 & 0.428 & 0.429 & 0.435 & \textbf{0.420} $\pm$ 0.002 & \underline{0.421} $\pm$ 0.001 \\
\multicolumn{1}{c|}{} & 336 & 0.484 & 0.458 & 0.487 & 0.458 & 0.496 & 0.455 & \textbf{0.468} & \underline{0.448} & \underline{0.476} $\pm$ 0.006 & \textbf{0.447} $\pm$ 0.001 \\
\multicolumn{1}{c|}{} & 720 & 0.498 & 0.482 & 0.503 & 0.491 & 0.520 & 0.484 & \textbf{0.469} & \textbf{0.461} & \underline{0.493} $\pm$  0.021 & \underline{0.476} $\pm$ 0.011 \\
\cmidrule(l){2-12}
\multicolumn{1}{c|}{} & Avg & 0.447 & 0.440 & 0.454 & 0.448 & 0.457 & 0.441 & \textbf{0.437} & \underline{0.437} & \underline{0.439} $\pm$ 0.007 & \textbf{0.434} $\pm$ 0.003 \\ \midrule
\multicolumn{1}{c|}{\multirow{5}{*}{\rotatebox{90}{ETTh2}}} & 96 & \underline{0.289} & \underline{0.341} & 0.297 & 0.349 & 0.298 & 0.344 & \textbf{0.286} & \textbf{0.338} & 0.290 $\pm$ 0.002 & 0.342 $\pm$ 0.001 \\
\multicolumn{1}{c|}{} & 192 & 0.372 & 0.392 & 0.380 & 0.400 & 0.372 & 0.396 & \underline{0.363} & \textbf{0.389} & \textbf{0.362} $\pm$ 0.002 & \underline{0.389} $\pm$ 0.001 \\
\multicolumn{1}{c|}{} & 336 & \textbf{0.386} & \textbf{0.414} & 0.428 & 0.432 & 0.431 & 0.439 & \underline{0.414} & \underline{0.423} & 0.414 $\pm$ 0.003 & 0.429 $\pm$ 0.002 \\
\multicolumn{1}{c|}{} & 720 & \underline{0.412} & \underline{0.434} & 0.427 & 0.445 & 0.450 & 0.458 & \textbf{0.408} & \textbf{0.432} & 0.423 $\pm$ 0.008 & 0.442 $\pm$ 0.004 \\
\cmidrule(l){2-12}
\multicolumn{1}{c|}{} & Avg & \textbf{0.365} & \textbf{0.395} & 0.383 & 0.407 & 0.388 & 0.409 & \underline{0.368} & \underline{0.396} & 0.372 $\pm$ 0.003 & 0.400 $\pm$ 0.002 \\ \midrule
\multicolumn{1}{c|}{\multirow{5}{*}{\rotatebox{90}{ETTm1}}} & 96 & 0.320 & 0.357 & 0.334 & 0.368 & 0.319 & 0.360 & \underline{0.318} & \underline{0.356} & \textbf{0.305} $\pm$ 0.001 & \textbf{0.349} $\pm$ 0.001 \\
\multicolumn{1}{c|}{} & 192 & 0.361 & \underline{0.381} & 0.377 & 0.391 & \underline{0.360} & 0.381 & 0.362 & 0.383 & \textbf{0.349} $\pm$ 0.001 & \textbf{0.375} $\pm$ 0.001 \\
\multicolumn{1}{c|}{} & 336 & 0.390 & 0.404 & 0.426 & 0.420 & \underline{0.389} & \underline{0.403} & 0.395 & 0.407 & \textbf{0.380} $\pm$ 0.001 & \textbf{0.398} $\pm$ 0.001 \\
\multicolumn{1}{c|}{} & 720 & 0.454 & \underline{0.441} & 0.491 & 0.459 & \underline{0.447} & 0.441 & 0.452 & 0.441 & \textbf{0.435} $\pm$ 0.001 & \textbf{0.436} $\pm$ 0.001 \\
\cmidrule(l){2-12}
\multicolumn{1}{c|}{} & Avg & 0.381 & \underline{0.396} & 0.407 & 0.410 & \underline{0.379} & 0.396 & 0.382 & 0.397 & \textbf{0.367} $\pm$ 0.001 & \textbf{0.389} $\pm$ 0.001 \\ \midrule
\multicolumn{1}{c|}{\multirow{5}{*}{\rotatebox{90}{ETTm2}}} & 96 & 0.175 & 0.258 & 0.180 & 0.264 & \textbf{0.163} & \textbf{0.246} & 0.171 & 0.256 & \underline{0.168} $\pm$ 0.001 & \underline{0.249} $\pm$ 0.001 \\
\multicolumn{1}{c|}{} & 192 & 0.237 & 0.299 & 0.250 & 0.309 & \textbf{0.229} & \textbf{0.290} & 0.237 & 0.299 & \underline{0.232} $\pm$ 0.001 & \underline{0.292} $\pm$ 0.001 \\
\multicolumn{1}{c|}{} & 336 & 0.298 & 0.340 & 0.311 & 0.348 & \textbf{0.284} & \textbf{0.327} & 0.296 & 0.338 & \underline{0.287} $\pm$ 0.001 & \underline{0.330} $\pm$ 0.001 \\
\multicolumn{1}{c|}{} & 720 & 0.391 & 0.396 & 0.412 & 0.407 & \underline{0.389} & \underline{0.391} & 0.392 & 0.394 & \textbf{0.386} $\pm$ 0.001 & \textbf{0.390} $\pm$ 0.001 \\
\cmidrule(l){2-12}
\multicolumn{1}{c|}{} & Avg & 0.275 & 0.323 & 0.288 & 0.332 & \textbf{0.266} & \textbf{0.314} & 0.274 & 0.322 & \underline{0.268} $\pm$ 0.001 & \underline{0.315} $\pm$ 0.001 \\ \midrule
\multicolumn{1}{c|}{\multirow{5}{*}{\rotatebox{90}{Electricity}}} & 96 & 0.153 & 0.247 & 0.148 & 0.240 & \underline{0.136} & \textbf{0.229} & 0.140 & 0.242 & \textbf{0.134} $\pm$ 0.001 & \underline{0.229} $\pm$ 0.001 \\
\multicolumn{1}{c|}{} & 192 & 0.166 & 0.256 & 0.162 & 0.253 & \textbf{0.152} & \textbf{0.244} & 0.157 & 0.256 & \underline{0.152} $\pm$ 0.001 & \underline{0.245} $\pm$ 0.001 \\
\multicolumn{1}{c|}{} & 336 & 0.185 & 0.277 & 0.178 & 0.269 & \textbf{0.170} & \textbf{0.264} & 0.176 & 0.275 & \underline{0.171} $\pm$ 0.001 & \underline{0.264} $\pm$ 0.001 \\
\multicolumn{1}{c|}{} & 720 & 0.225 & 0.310 & 0.225 & 0.317 & 0.212 & \textbf{0.299} & \underline{0.211} & 0.306 & \textbf{0.208} $\pm$ 0.001 & \underline{0.300} $\pm$ 0.001 \\
\cmidrule(l){2-12}
\multicolumn{1}{c|}{} & Avg & 0.182 & 0.273 & 0.178 & 0.270 & \underline{0.168} & \textbf{0.259} & 0.171 & 0.270 & \textbf{0.166} $\pm$ 0.001 & \underline{0.260} $\pm$ 0.001 \\ \midrule
\multicolumn{1}{c|}{\multirow{5}{*}{\rotatebox{90}{Solar-Energy}}} & 96 & \underline{0.189} & 0.259 & 0.203 & \underline{0.237} & 0.190 & 0.247 & 0.215 & 0.295 & \textbf{0.176} $\pm$ 0.004 & \textbf{0.235} $\pm$ 0.008 \\
\multicolumn{1}{c|}{} & 192 & 0.222 & 0.283 & 0.233 & \underline{0.261} & \underline{0.210} & 0.266 & 0.236 & 0.301 & \textbf{0.193} $\pm$ 0.004 & \textbf{0.244} $\pm$ 0.007 \\
\multicolumn{1}{c|}{} & 336 & 0.231 & 0.292 & 0.248 & 0.273 & \underline{0.217} & \underline{0.266} & 0.252 & 0.307 & \textbf{0.201} $\pm$ 0.004 & \textbf{0.250} $\pm$ 0.008 \\
\multicolumn{1}{c|}{} & 720 & \underline{0.223} & 0.285 & 0.249 & 0.275 & 0.223 & \underline{0.266} & 0.244 & 0.305 & \textbf{0.205} $\pm$ 0.004 & \textbf{0.251} $\pm$ 0.003 \\
\cmidrule(l){2-12}
\multicolumn{1}{c|}{} & Avg & 0.216 & 0.280 & 0.233 & 0.262 & \underline{0.210} & \underline{0.261} & 0.237 & 0.302 & \textbf{0.194} $\pm$ 0.004 & \textbf{0.245} $\pm$ 0.006 \\ \midrule
\multicolumn{1}{c|}{\multirow{5}{*}{\rotatebox{90}{Weather}}} & 96 & 0.163 & 0.209 & 0.174 & 0.214 & 0.158 & \underline{0.203} & \underline{0.157} & 0.205 & \textbf{0.154} $\pm$ 0.001 & \textbf{0.200} $\pm$ 0.001 \\
\multicolumn{1}{c|}{} & 192 & 0.208 & 0.250 & 0.221 & 0.254 & 0.207 & \underline{0.247} & \underline{0.204} & 0.247 & \textbf{0.202} $\pm$ 0.001 & \textbf{0.244} $\pm$ 0.001 \\
\multicolumn{1}{c|}{} & 336 & \textbf{0.251} & \textbf{0.287} & 0.278 & 0.296 & 0.262 & 0.289 & 0.261 & 0.290 & \underline{0.259} $\pm$ 0.001 & \underline{0.287} $\pm$ 0.001 \\
\multicolumn{1}{c|}{} & 720 & \textbf{0.339} & \textbf{0.341} & 0.358 & 0.349 & 0.344 & 0.344 & \underline{0.340} & \underline{0.341} & 0.341 $\pm$ 0.001 & 0.342 $\pm$ 0.001 \\
\cmidrule(l){2-12}
\multicolumn{1}{c|}{} & Avg & \underline{0.240} & 0.272 & 0.258 & 0.278 & 0.243 & \underline{0.271} & 0.241 & 0.271 & \textbf{0.239} $\pm$ 0.001 & \textbf{0.268} $\pm$ 0.001 \\ \bottomrule
\end{tabular}
\end{adjustbox}
\end{table*}

\subsection{Full results with longer look-back window}

With the recent development of model light weighting techniques, particularly the adoption of channel-independent strategies (first applied in DLinear~\citep{DLinear} and PatchTST~\citep{PatchTST}), more models have started to experiment with longer look-back windows in pursuit of higher predictive accuracy. For instance, DLinear and PatchTST default to using look-back windows of $T=336$, while RAFT~\citep{RAFT} and SparseTSF~\citep{SparseTSF} default to using $T=720$. To explore GTR's performance with longer look-back windows, we compared GTR with these advanced models using their respective default, longer look-back windows in Table~\ref{tab:lookbacklong}.

\begin{table*}[!htb]
\centering
\caption{Full results of different models with longer look-back lengths \(T \in \{336, 720\}\). The best results are highlighted in \textbf{bold} and the second best are \underline{underlined}.}
\label{tab:lookbacklong}
\begin{adjustbox}{max width=\linewidth}
\begin{tabular}{@{}cc|cccccccc||cccccccc@{}}
\toprule
\multicolumn{2}{c|}{Lookback} & \multicolumn{8}{c||}{\(T=336\)} & \multicolumn{8}{c}{\(T=720\)} \\ \midrule
\multicolumn{2}{c|}{Model} & \multicolumn{2}{c|}{\begin{tabular}[c]{@{}c@{}}DLinear\\ ~\citeyearpar{DLinear} \end{tabular}} & \multicolumn{2}{c|}{\begin{tabular}[c]{@{}c@{}}PatchTST\\ ~\citeyearpar{PatchTST} \end{tabular}} & \multicolumn{2}{c|}{\begin{tabular}[c]{@{}c@{}}CycleNet\\ ~\citeyearpar{CycleNet}\end{tabular}} & \multicolumn{2}{c||}{\begin{tabular}[c]{@{}c@{}}GTR\\ (Ours)\end{tabular}} & \multicolumn{2}{c|}{\begin{tabular}[c]{@{}c@{}}RAFT\\ ~\citeyearpar{RAFT} \end{tabular}} & \multicolumn{2}{c|}{\begin{tabular}[c]{@{}c@{}}SparseTSF\\ ~\citeyearpar{SparseTSF} \end{tabular}} & \multicolumn{2}{c|}{\begin{tabular}[c]{@{}c@{}}CycleNet\\ ~\citeyearpar{CycleNet}\end{tabular}} & \multicolumn{2}{c}{\begin{tabular}[c]{@{}c@{}}GTR\\ (Ours)\end{tabular}} \\ \midrule
\multicolumn{2}{c|}{Metric} & MSE & \multicolumn{1}{c|}{MAE} & MSE & \multicolumn{1}{c|}{MAE} & MSE & \multicolumn{1}{c|}{MAE} & MSE & MAE & MSE & \multicolumn{1}{c|}{MAE} & MSE & \multicolumn{1}{c|}{MAE} & MSE & \multicolumn{1}{c|}{MAE} & MSE & MAE \\ \midrule
\multicolumn{1}{c|}{\multirow{5}{*}{\rotatebox{90}{ETTh1}}}   &   96   & \underline{0.374} & \multicolumn{1}{c|}{0.398} & 0.385 & \multicolumn{1}{c|}{0.405} & \underline{0.374} & \multicolumn{1}{c|}{\underline{0.396}} & \textbf{0.369} & \textbf{0.395} & \underline{0.367} & \multicolumn{1}{c|}{\underline{0.397}} & \textbf{0.362} & \multicolumn{1}{c|}{\textbf{0.388}} & 0.379 & \multicolumn{1}{c|}{0.403} & 0.368 & 0.400 \\
\multicolumn{1}{c|}{}   &   192   & 0.430 & \multicolumn{1}{c|}{0.440} & 0.414 & \multicolumn{1}{c|}{0.421} & \textbf{0.406} & \multicolumn{1}{c|}{\textbf{0.415}} & \underline{0.409} & \underline{0.420} & 0.411 & \multicolumn{1}{c|}{0.427} & \textbf{0.403} & \multicolumn{1}{c|}{\textbf{0.411}} & 0.416 & \multicolumn{1}{c|}{\underline{0.425}} & \underline{0.409} & \underline{0.424} \\
\multicolumn{1}{c|}{}   &   336   & 0.442 & \multicolumn{1}{c|}{0.445} & 0.440 & \multicolumn{1}{c|}{0.440} & \textbf{0.431} & \multicolumn{1}{c|}{\textbf{0.430}} & \underline{0.432} & \underline{0.434} & \underline{0.436} & \multicolumn{1}{c|}{\underline{0.442}} & \textbf{0.434} & \multicolumn{1}{c|}{\textbf{0.428}} & 0.447 & \multicolumn{1}{c|}{0.445} & 0.443 & 0.449 \\
\multicolumn{1}{c|}{}   &   720   & 0.497 & \multicolumn{1}{c|}{0.507} & 0.456 & \multicolumn{1}{c|}{0.470} & \textbf{0.450} & \multicolumn{1}{c|}{\underline{0.464}} & \underline{0.451} & \textbf{0.463} & \underline{0.467} & \multicolumn{1}{c|}{\underline{0.478}} & \textbf{0.426} & \multicolumn{1}{c|}{\textbf{0.447}} & 0.477 & \multicolumn{1}{c|}{0.483} & \underline{0.464} & \underline{0.476} \\ \cmidrule(l){2-18} 
\multicolumn{1}{c|}{}   &   Avg   & 0.436 & \multicolumn{1}{c|}{0.448} & 0.424 & \multicolumn{1}{c|}{0.434} & \textbf{0.415} & \multicolumn{1}{c|}{\textbf{0.426}} & \textbf{0.415} & \underline{0.428} & \underline{0.420} & \multicolumn{1}{c|}{\underline{0.436}} & \textbf{0.406} & \multicolumn{1}{c|}{\textbf{0.419}} & 0.430 & \multicolumn{1}{c|}{0.439} & 0.421 & 0.437 \\ \midrule
\multicolumn{1}{c|}{\multirow{5}{*}{\rotatebox{90}{ETTh2}}}   &   96   & 0.281 & \multicolumn{1}{c|}{0.347} & \underline{0.275} & \multicolumn{1}{c|}{\textbf{0.337}} & 0.279 & \multicolumn{1}{c|}{\underline{0.341}} & \textbf{0.274} & \underline{0.341} & \underline{0.276} & \multicolumn{1}{c|}{\underline{0.344}} & 0.294 & \multicolumn{1}{c|}{0.346} & \textbf{0.271} & \multicolumn{1}{c|}{\textbf{0.337}} & \underline{0.275} & \underline{0.343} \\
\multicolumn{1}{c|}{}   &   192   & 0.367 & \multicolumn{1}{c|}{0.404} & \textbf{0.338} & \multicolumn{1}{c|}{\textbf{0.379}} & 0.342 & \multicolumn{1}{c|}{\underline{0.385}} & \underline{0.340} & \underline{0.385} & 0.347 & \multicolumn{1}{c|}{0.393} & \underline{0.339} & \multicolumn{1}{c|}{\textbf{0.377}} & \textbf{0.332} & \multicolumn{1}{c|}{\underline{0.380}} & 0.340 & 0.388 \\
\multicolumn{1}{c|}{}   &   336   & 0.438 & \multicolumn{1}{c|}{0.454} & \underline{0.365} & \multicolumn{1}{c|}{\textbf{0.398}} & 0.371 & \multicolumn{1}{c|}{0.413} & \textbf{0.363} & \underline{0.409} & 0.376 & \multicolumn{1}{c|}{0.425} & \textbf{0.359} & \multicolumn{1}{c|}{\textbf{0.397}} & \underline{0.362} & \multicolumn{1}{c|}{\underline{0.408}} & 0.377 & 0.416 \\
\multicolumn{1}{c|}{}   &   720   & 0.598 & \multicolumn{1}{c|}{0.549} & \textbf{0.391} & \multicolumn{1}{c|}{\textbf{0.429}} & 0.426 & \multicolumn{1}{c|}{0.451} & \underline{0.405} & \underline{0.439} & 0.436 & \multicolumn{1}{c|}{0.473} & \textbf{0.383} & \multicolumn{1}{c|}{\textbf{0.424}} & 0.415 & \multicolumn{1}{c|}{0.449} & \underline{0.406} & \underline{0.442} \\ \cmidrule(l){2-18} 
\multicolumn{1}{c|}{}   &   Avg   & 0.421 & \multicolumn{1}{c|}{0.439} & \textbf{0.342} & \multicolumn{1}{c|}{\textbf{0.386}} & 0.355 & \multicolumn{1}{c|}{0.398} & \underline{0.345} & \underline{0.393} & 0.359 & \multicolumn{1}{c|}{0.409} & \textbf{0.344} & \multicolumn{1}{c|}{\textbf{0.386}} & \underline{0.345} & \multicolumn{1}{c|}{\underline{0.394}} & 0.349 & 0.397 \\ \midrule
\multicolumn{1}{c|}{\multirow{5}{*}{\rotatebox{90}{ETTm1}}}   &   96   & 0.307 & \multicolumn{1}{c|}{0.350} & \underline{0.291} & \multicolumn{1}{c|}{\underline{0.343}} & 0.299 & \multicolumn{1}{c|}{0.348} & \textbf{0.283} & \textbf{0.337} & \underline{0.302} & \multicolumn{1}{c|}{\underline{0.349}} & 0.312 & \multicolumn{1}{c|}{0.354} & 0.307 & \multicolumn{1}{c|}{0.353} & \textbf{0.290} & \textbf{0.345} \\
\multicolumn{1}{c|}{}   &   192   & 0.340 & \multicolumn{1}{c|}{0.373} & \underline{0.334} & \multicolumn{1}{c|}{0.370} & \underline{0.334} & \multicolumn{1}{c|}{\underline{0.367}} & \textbf{0.323} & \textbf{0.365} & \underline{0.329} & \multicolumn{1}{c|}{\textbf{0.367}} & 0.347 & \multicolumn{1}{c|}{0.376} & 0.337 & \multicolumn{1}{c|}{0.371} & \textbf{0.328} & \underline{0.371} \\
\multicolumn{1}{c|}{}   &   336   & 0.377 & \multicolumn{1}{c|}{0.397} & \underline{0.367} & \multicolumn{1}{c|}{0.392} & 0.368 & \multicolumn{1}{c|}{\underline{0.386}} & \textbf{0.355} & \textbf{0.384} & \textbf{0.355} & \multicolumn{1}{c|}{\textbf{0.383}} & 0.367 & \multicolumn{1}{c|}{\underline{0.386}} & 0.364 & \multicolumn{1}{c|}{0.387} & \underline{0.357} & 0.388 \\
\multicolumn{1}{c|}{}   &   720   & 0.433 & \multicolumn{1}{c|}{0.433} & 0.422 & \multicolumn{1}{c|}{0.426} & \textbf{0.417} & \multicolumn{1}{c|}{\textbf{0.414}} & \underline{0.420} & \underline{0.422} & \textbf{0.406} & \multicolumn{1}{c|}{\underline{0.413}} & 0.419 & \multicolumn{1}{c|}{\underline{0.413}} & \underline{0.410} & \multicolumn{1}{c|}{\textbf{0.411}} & 0.417 & 0.420 \\ \cmidrule(l){2-18} 
\multicolumn{1}{c|}{}   &   Avg   & 0.364 & \multicolumn{1}{c|}{0.388} & \underline{0.354} & \multicolumn{1}{c|}{0.383} & 0.355 & \multicolumn{1}{c|}{\underline{0.379}} & \textbf{0.345} & \textbf{0.377} & \textbf{0.348} & \multicolumn{1}{c|}{\textbf{0.378}} & 0.361 & \multicolumn{1}{c|}{0.382} & 0.355 & \multicolumn{1}{c|}{\underline{0.381}} & \textbf{0.348} & \underline{0.381} \\ \midrule
\multicolumn{1}{c|}{\multirow{5}{*}{\rotatebox{90}{ETTm2}}}   &   96   & 0.165 & \multicolumn{1}{c|}{0.257} & 0.164 & \multicolumn{1}{c|}{0.254} & \textbf{0.159} & \multicolumn{1}{c|}{\textbf{0.247}} & \underline{0.161} & \underline{0.252} & 0.164 & \multicolumn{1}{c|}{0.256} & \underline{0.163} & \multicolumn{1}{c|}{\underline{0.252}} & \textbf{0.159} & \multicolumn{1}{c|}{\textbf{0.249}} & 0.169 & 0.258 \\
\multicolumn{1}{c|}{}   &   192   & 0.227 & \multicolumn{1}{c|}{0.307} & \underline{0.221} & \multicolumn{1}{c|}{0.293} & \textbf{0.214} & \multicolumn{1}{c|}{\textbf{0.286}} & \underline{0.221} & \underline{0.292} & 0.219 & \multicolumn{1}{c|}{0.296} & \underline{0.217} & \multicolumn{1}{c|}{\underline{0.290}} & \textbf{0.214} & \multicolumn{1}{c|}{\textbf{0.289}} & 0.220 & 0.295 \\
\multicolumn{1}{c|}{}   &   336   & 0.304 & \multicolumn{1}{c|}{0.362} & \underline{0.276} & \multicolumn{1}{c|}{0.328} & \textbf{0.269} & \multicolumn{1}{c|}{\textbf{0.322}} & 0.278 & \underline{0.326} & 0.275 & \multicolumn{1}{c|}{0.336} & \underline{0.270} & \multicolumn{1}{c|}{\underline{0.327}} & \textbf{0.268} & \multicolumn{1}{c|}{\textbf{0.326}} & \underline{0.270} & 0.328 \\
\multicolumn{1}{c|}{}   &   720   & 0.431 & \multicolumn{1}{c|}{0.441} & 0.366 & \multicolumn{1}{c|}{\underline{0.383}} & \underline{0.363} & \multicolumn{1}{c|}{\textbf{0.382}} & \textbf{0.359} & 0.388 & 0.359 & \multicolumn{1}{c|}{0.392} & \textbf{0.352} & \multicolumn{1}{c|}{\textbf{0.379}} & \underline{0.353} & \multicolumn{1}{c|}{\underline{0.384}} & 0.355 & 0.389 \\ \cmidrule(l){2-18} 
\multicolumn{1}{c|}{}   &   Avg   & 0.282 & \multicolumn{1}{c|}{0.342} & 0.257 & \multicolumn{1}{c|}{0.315} & \textbf{0.251} & \multicolumn{1}{c|}{\textbf{0.309}} & \underline{0.254} & \underline{0.314} & 0.254 & \multicolumn{1}{c|}{\underline{0.320}} & \underline{0.251} & \multicolumn{1}{c|}{\textbf{0.312}} & \textbf{0.249} & \multicolumn{1}{c|}{\textbf{0.312}} & 0.253 & 0.317 \\ \midrule
\multicolumn{1}{c|}{\multirow{5}{*}{\rotatebox{90}{Electricity}}}   &   96   & 0.140 & \multicolumn{1}{c|}{0.237} & 0.131 & \multicolumn{1}{c|}{0.225} & \underline{0.128} & \multicolumn{1}{c|}{\underline{0.223}} & \textbf{0.127} & \textbf{0.222} & 0.133 & \multicolumn{1}{c|}{0.232} & 0.138 & \multicolumn{1}{c|}{0.233} & \textbf{0.128} & \multicolumn{1}{c|}{\textbf{0.223}} & \textbf{0.128} & \underline{0.226} \\
\multicolumn{1}{c|}{}   &   192   & 0.153 & \multicolumn{1}{c|}{0.250} & 0.148 & \multicolumn{1}{c|}{\underline{0.240}} & \textbf{0.144} & \multicolumn{1}{c|}{\textbf{0.237}} & \underline{0.147} & 0.242 & 0.149 & \multicolumn{1}{c|}{0.247} & 0.151 & \multicolumn{1}{c|}{\underline{0.244}} & \textbf{0.143} & \multicolumn{1}{c|}{\textbf{0.237}} & \underline{0.146} & \underline{0.244} \\
\multicolumn{1}{c|}{}   &   336   & 0.169 & \multicolumn{1}{c|}{0.267} & \underline{0.165} & \multicolumn{1}{c|}{\underline{0.259}} & \textbf{0.160} & \multicolumn{1}{c|}{\textbf{0.254}} & \underline{0.165} & 0.260 & \underline{0.161} & \multicolumn{1}{c|}{\underline{0.259}} & 0.166 & \multicolumn{1}{c|}{0.260} & \textbf{0.159} & \multicolumn{1}{c|}{\textbf{0.254}} & 0.162 & 0.261 \\
\multicolumn{1}{c|}{}   &   720   & 0.203 & \multicolumn{1}{c|}{0.299} & 0.202 & \multicolumn{1}{c|}{\underline{0.291}} & \textbf{0.198} & \multicolumn{1}{c|}{\textbf{0.287}} & \underline{0.199} & 0.292 & \textbf{0.197} & \multicolumn{1}{c|}{0.297} & 0.205 & \multicolumn{1}{c|}{\underline{0.293}} & \textbf{0.197} & \multicolumn{1}{c|}{\textbf{0.287}} & \textbf{0.197} & 0.295 \\ \cmidrule(l){2-18} 
\multicolumn{1}{c|}{}   &   Avg   & 0.166 & \multicolumn{1}{c|}{0.263} & 0.162 & \multicolumn{1}{c|}{\underline{0.254}} & \textbf{0.158} & \multicolumn{1}{c|}{\textbf{0.250}} & \underline{0.159} & \underline{0.254} & 0.160 & \multicolumn{1}{c|}{0.259} & 0.165 & \multicolumn{1}{c|}{0.258} & \textbf{0.157} & \multicolumn{1}{c|}{\textbf{0.250}} & \underline{0.158} & \underline{0.256} \\ \midrule
\multicolumn{1}{c|}{\multirow{5}{*}{\rotatebox{90}{Solar-Energy}}}   &   96   & 0.222 & \multicolumn{1}{c|}{0.292} & \underline{0.190} & \multicolumn{1}{c|}{0.278} & 0.200 & \multicolumn{1}{c|}{\underline{0.250}} & \textbf{0.180} & \textbf{0.237} & \underline{0.192} & \multicolumn{1}{c|}{0.251} & 0.195 & \multicolumn{1}{c|}{\underline{0.243}} & 0.194 & \multicolumn{1}{c|}{0.255} & \textbf{0.178} & \textbf{0.242} \\
\multicolumn{1}{c|}{}   &   192   & 0.249 & \multicolumn{1}{c|}{0.313} & \underline{0.206} & \multicolumn{1}{c|}{\textbf{0.252}} & 0.221 & \multicolumn{1}{c|}{0.261} & \textbf{0.196} & \underline{0.254} & 0.247 & \multicolumn{1}{c|}{0.323} & 0.215 & \multicolumn{1}{c|}{\underline{0.254}} & \underline{0.205} & \multicolumn{1}{c|}{\textbf{0.251}} & \textbf{0.198} & 0.260 \\
\multicolumn{1}{c|}{}   &   336   & 0.268 & \multicolumn{1}{c|}{0.327} & \underline{0.217} & \multicolumn{1}{c|}{\textbf{0.254}} & 0.236 & \multicolumn{1}{c|}{0.272} & \textbf{0.201} & \underline{0.257} & 0.240 & \multicolumn{1}{c|}{0.300} & 0.232 & \multicolumn{1}{c|}{\underline{0.262}} & \underline{0.218} & \multicolumn{1}{c|}{\textbf{0.257}} & \textbf{0.202} & 0.264 \\
\multicolumn{1}{c|}{}   &   720   & 0.271 & \multicolumn{1}{c|}{0.326} & \underline{0.219} & \multicolumn{1}{c|}{\textbf{0.255}} & 0.245 & \multicolumn{1}{c|}{0.277} & \textbf{0.205} & \underline{0.258} & 0.246 & \multicolumn{1}{c|}{0.311} & \underline{0.237} & \multicolumn{1}{c|}{\textbf{0.263}} & 0.239 & \multicolumn{1}{c|}{0.278} & \textbf{0.209} & \underline{0.269} \\ \cmidrule(l){2-18} 
\multicolumn{1}{c|}{}   &   Avg   & 0.253 & \multicolumn{1}{c|}{0.315} & \underline{0.208} & \multicolumn{1}{c|}{\underline{0.260}} & 0.226 & \multicolumn{1}{c|}{0.265} & \textbf{0.196} & \textbf{0.252} & 0.231 & \multicolumn{1}{c|}{0.296} & 0.220 & \multicolumn{1}{c|}{\textbf{0.256}} & \underline{0.214} & \multicolumn{1}{c|}{0.260} & \textbf{0.197} & \underline{0.259} \\ \midrule
\multicolumn{1}{c|}{\multirow{5}{*}{\rotatebox{90}{Weather}}}   &   96   & 0.174 & \multicolumn{1}{c|}{0.235} & \underline{0.155} & \multicolumn{1}{c|}{\underline{0.204}} & 0.167 & \multicolumn{1}{c|}{0.221} & \textbf{0.147} & \textbf{0.199} & 0.165 & \multicolumn{1}{c|}{0.222} & 0.169 & \multicolumn{1}{c|}{0.223} & \underline{0.164} & \multicolumn{1}{c|}{\underline{0.220}} & \textbf{0.146} & \textbf{0.200} \\
\multicolumn{1}{c|}{}   &   192   & 0.219 & \multicolumn{1}{c|}{0.281} & \underline{0.195} & \multicolumn{1}{c|}{\underline{0.242}} & 0.212 & \multicolumn{1}{c|}{0.258} & \textbf{0.192} & \textbf{0.240} & 0.211 & \multicolumn{1}{c|}{0.264} & 0.214 & \multicolumn{1}{c|}{0.262} & \underline{0.209} & \multicolumn{1}{c|}{\underline{0.258}} & \textbf{0.192} & \textbf{0.244} \\
\multicolumn{1}{c|}{}   &   336   & 0.264 & \multicolumn{1}{c|}{0.317} & \underline{0.249} & \multicolumn{1}{c|}{\underline{0.283}} & 0.260 & \multicolumn{1}{c|}{0.293} & \textbf{0.243} & \textbf{0.281} & 0.260 & \multicolumn{1}{c|}{0.302} & 0.257 & \multicolumn{1}{c|}{0.293} & \underline{0.255} & \multicolumn{1}{c|}{\underline{0.292}} & \textbf{0.243} & \textbf{0.283} \\
\multicolumn{1}{c|}{}   &   720   & \underline{0.324} & \multicolumn{1}{c|}{0.363} & \textbf{0.321} & \multicolumn{1}{c|}{\textbf{0.334}} & 0.328 & \multicolumn{1}{c|}{0.339} & \underline{0.324} & \underline{0.337} & 0.327 & \multicolumn{1}{c|}{0.355} & 0.321 & \multicolumn{1}{c|}{0.340} & \underline{0.320} & \multicolumn{1}{c|}{\underline{0.338}} & \textbf{0.317} & \textbf{0.336} \\ \cmidrule(l){2-18} 
\multicolumn{1}{c|}{}   &   Avg   & 0.245 & \multicolumn{1}{c|}{0.299} & \underline{0.230} & \multicolumn{1}{c|}{\underline{0.266}} & 0.242 & \multicolumn{1}{c|}{0.278} & \textbf{0.227} & \textbf{0.264} & 0.241 & \multicolumn{1}{c|}{0.286} & 0.240 & \multicolumn{1}{c|}{0.280} & \underline{0.237} & \multicolumn{1}{c|}{\underline{0.277}} & \textbf{0.225} & \textbf{0.266} \\ 
\midrule
\multicolumn{2}{c|}{$1^{st}$ Count} & 0 & 0 & 4 & 9 & 13 & 13 & \textbf{19} & \textbf{13} & 5 & 3 & 9 & 13 & \underline{11} & \textbf{13} & \textbf{15} & \underline{7}\\
\bottomrule
\end{tabular}
\end{adjustbox}
\end{table*}

It can be seen that GTR maintains a significant performance advantage even when compared against state-of-the-art baselines utilizing extended historical contexts. Specifically, GTR achieves the best MSE performance in 19 out of 32 metrics under the $T=336$ setting and 15 out of 32 metrics under the $T=720$ setting, consistently surpassing strong competitors like PatchTST, SparseTSF, and RAFT. Notably, on the Solar-Energy dataset with $T=336$, GTR reduces MSE by 13.2\% compared to the CycleNet, and on Weather with $T=720$, it outperforms RAFT by 6.7\% in MSE. 

Crucially, although the architectural design of GTR primarily targets scenarios with restricted historical contexts (Figure \ref{fig:lookback}), it exhibits remarkable adaptability when scaled to extended horizons. As evidenced by the results, GTR does not merely function as a remedial solution for short inputs but continues to deliver competitive, state-of-the-art performance under long look-back settings. This confirms that GTR’s mechanism for explicit global cycle retrieval is robust and effective.

\subsection{Comparison with Timestamp-based Approaches}

Distinct from methods that learn periodicity directly from historical data, another prevalent paradigm in long-term forecasting leverages explicit timestamp information (e.g., time-of-day, day-of-week) to model global temporal dependencies~\citep{GLAFF, VH-NBEATS}. These approaches typically operate as modular plugins, designed to augment existing backbones by injecting global guidance derived from calendar features. Specifically, they model timestamps to capture global contexts and adaptively fuse these signals with local observations.

To comprehensively evaluate the effectiveness of our proposed GTR module, we conducted a comparative study using GLAFF~\citep{GLAFF} as a representative baseline. We integrated both our proposed GTR module and the GLAFF module into two distinct backbones, DLinear~\citep{DLinear} and iTransformer~\citep{iTransformer}, and evaluated their performance on the Electricity and Weather datasets. The detailed results are presented in Table \ref{tab:full_ablation}.

\begin{table*}[htbp]
  \centering
  \caption{Ablation study of GTR and GLAFF modules with the look-back length $T=96$. \textit{All results in this table are reproduced using the} \href{https://github.com/ForestsKing/GLAFF}{\textit{official GLAFF implementation}}. We compare the performance (MSE, MAE) and efficiency (Training Time, Memory Cost) across four prediction horizons ($T \in \{96, 192, 336, 720\}$) on Electricity and Weather datasets. The best performance in each setting is highlighted in \textbf{bold}.}
  \label{tab:full_ablation}
  \resizebox{\textwidth}{!}{
    \begin{tabular}{c|c|cc|cccc|cccc}
    \toprule
    \multirow{2}{*}{\textbf{Model}} & \multirow{2}{*}{\textbf{Horizon}} & \multicolumn{2}{c|}{\textbf{Modules}} & \multicolumn{4}{c|}{\textbf{Electricity}} & \multicolumn{4}{c}{\textbf{Weather}} \\
    \cmidrule(lr){3-4} \cmidrule(lr){5-8} \cmidrule(lr){9-12}
     & & GTR & GLAFF & MSE & MAE & Time(s) & Mem(MB) & MSE & MAE & Time(s) & Mem(MB) \\
    \midrule
    
    \multirow{8}{*}{\rotatebox{90}{\textbf{DLinear}}} 
    & \multirow{2}{*}{96} 
    & $\checkmark$ & $\times$ & 0.2073 & 0.2948 & 52.04 & \textbf{0.31} & \textbf{0.1974} & \textbf{0.2572} & \textbf{26.08} & \textbf{0.12} \\
    & & $\times$ & $\checkmark$ & \textbf{0.1653} & \textbf{0.2602} & \textbf{23.85} & 4.85 & 0.2070 & 0.2653 & 38.89 & 4.36 \\
    \cmidrule{2-12}
    
    & \multirow{2}{*}{192} 
    & $\checkmark$ & $\times$ & 0.2042 & 0.2939 & 46.64 & \textbf{0.38} & \textbf{0.2410} & \textbf{0.2963} & \textbf{30.78} & \textbf{0.19} \\
    & & $\times$ & $\checkmark$ & \textbf{0.1851} & \textbf{0.2747} & \textbf{32.47} & 4.92 & 0.2530 & 0.3069 & 39.17 & 4.43 \\
    \cmidrule{2-12}
    
    & \multirow{2}{*}{336} 
    & $\checkmark$ & $\times$ & 0.2197 & 0.3117 & 36.72 & \textbf{0.49} & \textbf{0.2871} & \textbf{0.3347} & \textbf{15.94} & \textbf{0.30} \\
    & & $\times$ & $\checkmark$ & \textbf{0.2179} & \textbf{0.3053} & \textbf{22.35} & 5.03 & 0.2955 & 0.3397 & 37.35 & 4.54 \\
    \cmidrule{2-12}
    
    & \multirow{2}{*}{720} 
    & $\checkmark$ & $\times$ & \textbf{0.2569} & \textbf{0.3448} & \textbf{19.09} & \textbf{0.77} & \textbf{0.3554} & \textbf{0.3887} & \textbf{8.68} & \textbf{0.58} \\
    & & $\times$ & $\checkmark$ & 0.2904 & 0.3553 & 19.13 & 5.31 & 0.4542 & 0.4180 & 27.99 & 4.82 \\
    
    \midrule
    \midrule
    
    \multirow{8}{*}{\rotatebox{90}{\textbf{iTransformer}}} 
    & \multirow{2}{*}{96} 
    & $\checkmark$ & $\times$ & \textbf{0.1476} & \textbf{0.2411} & 86.33 & \textbf{48.72} & \textbf{0.1731} & \textbf{0.2168} & \textbf{61.77} & \textbf{48.53} \\
    & & $\times$ & $\checkmark$ & 0.1504 & 0.2434 & \textbf{45.34} & 53.26 & 0.2028 & 0.2410 & 78.12 & 52.77 \\
    \cmidrule{2-12}
    
    & \multirow{2}{*}{192} 
    & $\checkmark$ & $\times$ & \textbf{0.1680} & \textbf{0.2610} & 74.71 & \textbf{48.91} & \textbf{0.2204} & \textbf{0.2615} & 55.42 & \textbf{48.72} \\
    & & $\times$ & $\checkmark$ & 0.1922 & 0.2685 & \textbf{36.50} & 53.45 & 0.2476 & 0.2797 & \textbf{54.04} & 52.96 \\
    \cmidrule{2-12}
    
    & \multirow{2}{*}{336} 
    & $\checkmark$ & $\times$ & \textbf{0.1838} & \textbf{0.2781} & 36.83 & \textbf{49.19} & \textbf{0.2829} & \textbf{0.3061} & \textbf{29.95} & \textbf{49.00} \\
    & & $\times$ & $\checkmark$ & 0.2343 & 0.2968 & \textbf{34.68} & 53.73 & 0.3398 & 0.3322 & 51.46 & 53.24 \\
    \cmidrule{2-12}
    
    & \multirow{2}{*}{720} 
    & $\checkmark$ & $\times$ & \textbf{0.2178} & \textbf{0.3047} & \textbf{34.97} & \textbf{49.95} & \textbf{0.3711} & \textbf{0.3614} & \textbf{27.25} & \textbf{49.75} \\
    & & $\times$ & $\checkmark$ & 0.5529 & 0.4208 & 103.07 & 54.00 & 0.6578 & 0.4316 & 51.02 & 54.00 \\
    
    \bottomrule
    \end{tabular}
  }
\end{table*}

\textbf{Prediction Accuracy and Robustness.} As shown in Table \ref{tab:full_ablation}, the GTR module demonstrates superior performance and robustness compared to the timestamp-based GLAFF, particularly in long-term forecasting scenarios. While timestamp-based methods can capture explicit cyclic patterns in shorter horizons (e.g., DLinear on Electricity at $T=96$), they struggle to generalize to longer prediction windows due to the potential overfitting of rigid calendar features. Notably, at the prediction horizon of $T=720$, the performance of GLAFF degrades significantly on both datasets (e.g., on iTransformer, the MSE on Electricity spikes to 0.5529). In contrast, GTR maintains stable and accurate predictions (MSE 0.2178), indicating that learning periodicity directly from time-series data—rather than relying on external timestamps—yields better robustness for long-term modeling.

\textbf{Computational Efficiency.} Beyond prediction accuracy, GTR exhibits a significant advantage in computational efficiency. Timestamp-based approaches like GLAFF typically require additional embedding layers and complex attention or mixing mechanisms to process high-dimensional timestamp features, leading to increased memory usage and training time. GTR incurs negligible memory overhead compared to the backbone models (e.g., $\sim$0.3 MB vs. $\sim$4.8 MB on DLinear). This lightweight characteristic makes GTR a more practical solution for real-world deployments where computational resources are constrained, achieving a better trade-off between performance and cost.

\subsection{Visualization Results}
To provide a clear and comprehensive comparison among different models, we present supplementary prediction visualizations on the PEMS datasets in the accompanying figures (\ref{fig:pems03}; \ref{fig:pems04}; \ref{fig:pems07}; \ref{fig:pems08};). Among all evaluated models, GTR consistently demonstrates the highest precision in capturing future traffic series variations and exhibits superior overall forecasting performance.

Furthermore, we conduct a direct visual comparison between GTR and its underlying MLP backbone model. The results reveal a striking improvement: GTR significantly enhances the fitting accuracy of the original MLP, effectively mitigating its tendency to oversmooth temporal dynamics and miss subtle yet critical traffic patterns. Notably, across multiple visualization scenarios, the input sequences exhibit little to no pronounced periodicity—yet GTR is still able to accurately capture and forecast the underlying future trends. This highlights GTR’s robustness in modeling complex, non-stationary traffic dynamics even in the absence of strong periodic patterns, further underscoring the effectiveness of its architectural design in learning global temporal representations.

\begin{figure}[htbp]
  \centering
  \includegraphics[width=1.0\textwidth]{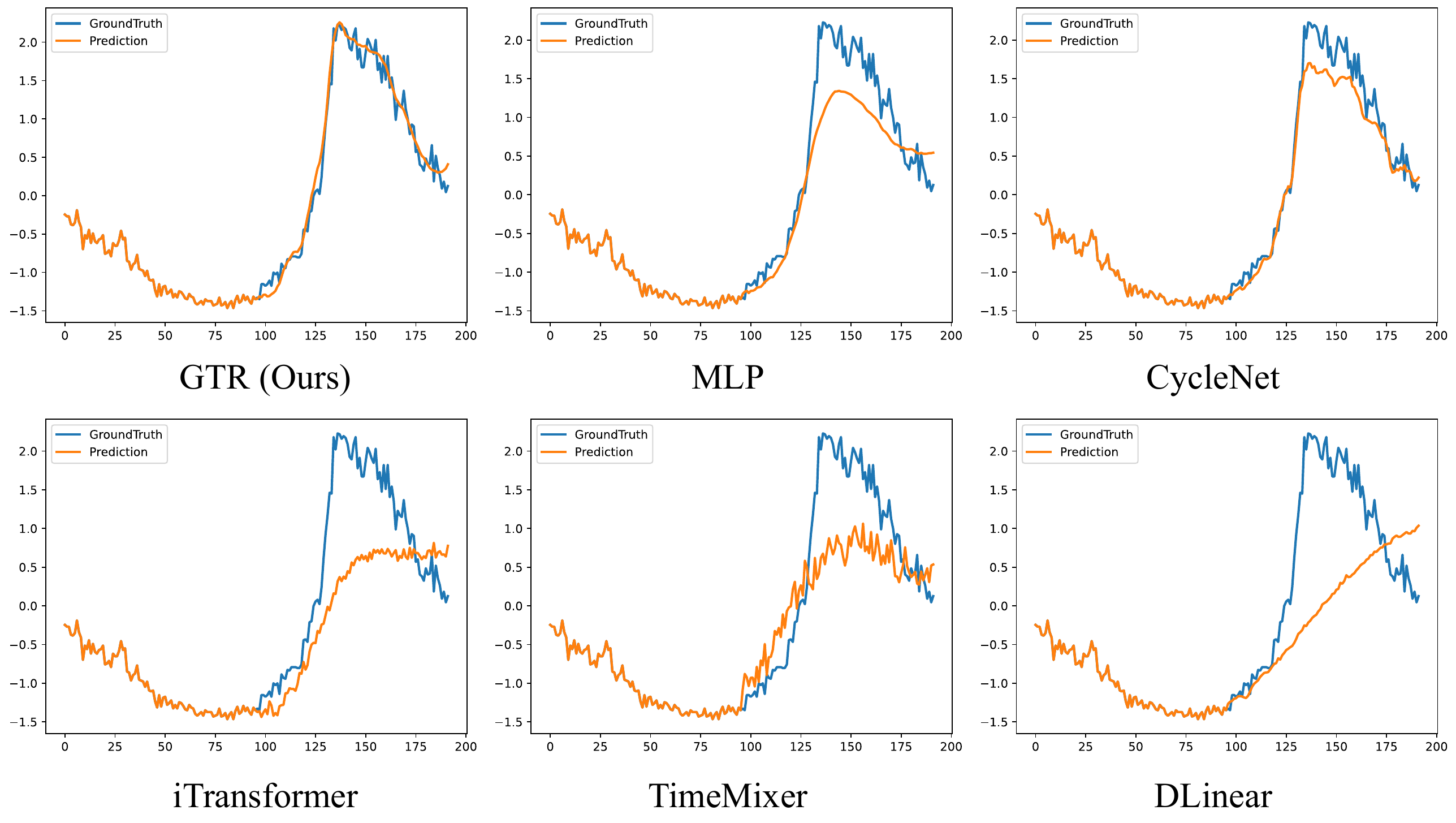}
  \caption{Visualization of input-96-predict-96 results on the PEMS03 dataset.}
  \label{fig:pems03}
  \includegraphics[width=1.0\textwidth]{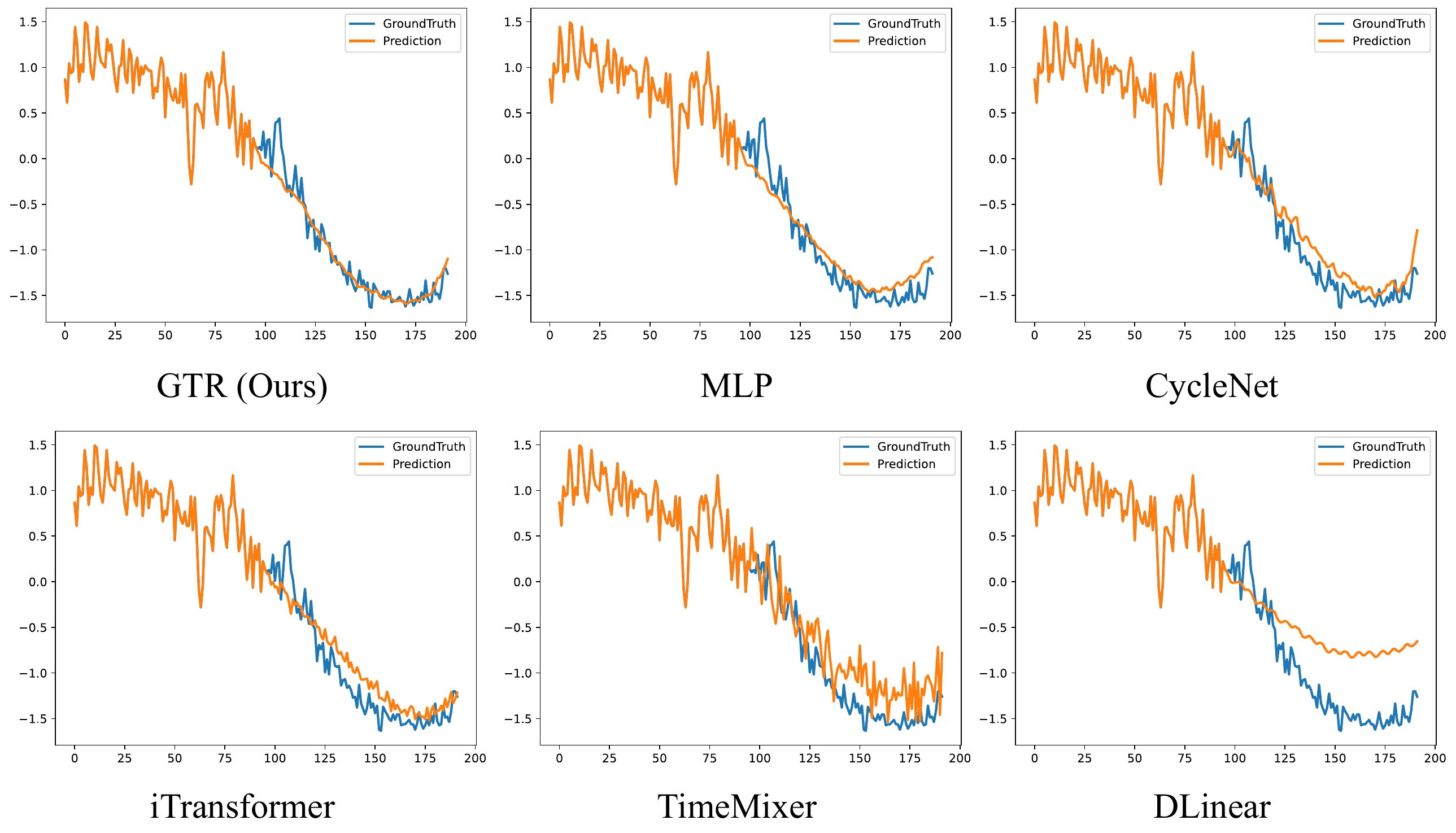}
  \caption{Visualization of input-96-predict-96 results on the PEMS04 dataset.}
  \label{fig:pems04}
\end{figure}

\begin{figure}[htbp]
  \centering
  \includegraphics[width=1.0\textwidth]{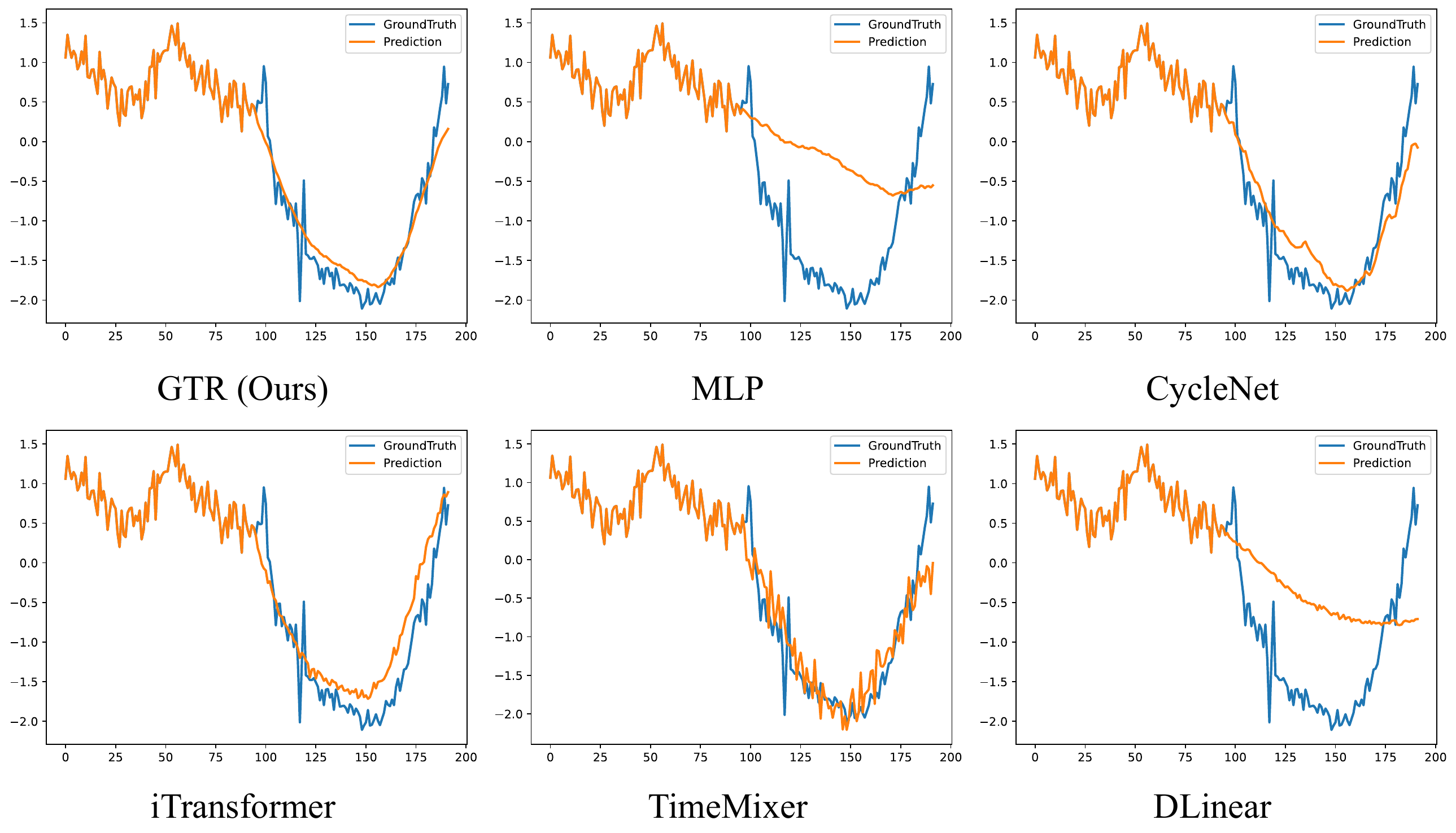}
  \caption{Visualization of input-96-predict-96 results on the PEMS07 dataset.}
  \label{fig:pems07}
  \centering
  \includegraphics[width=1.0\textwidth]{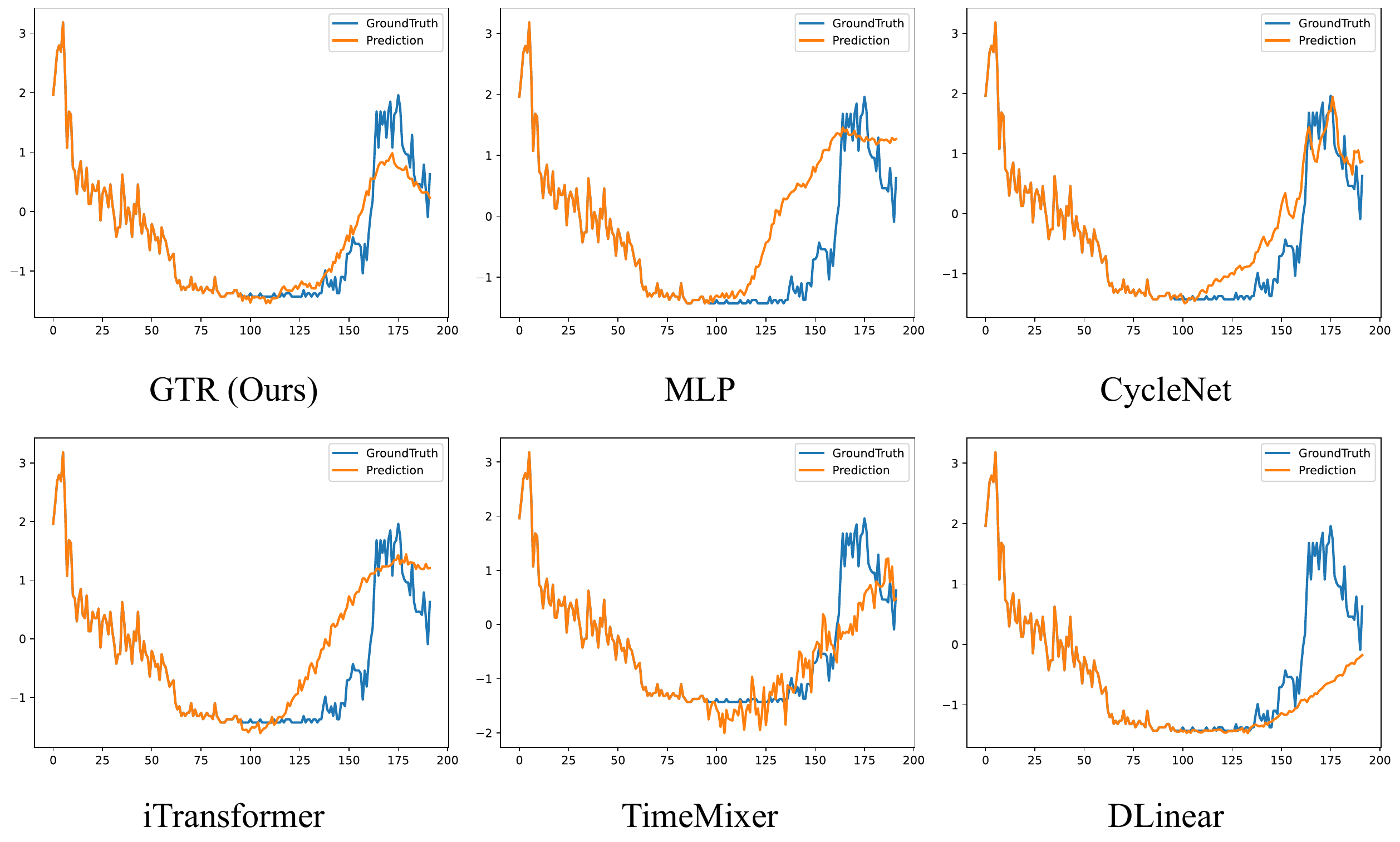}
  \caption{Visualization of input-96-predict-96 results on the PEMS08 dataset.}
  \label{fig:pems08}
\end{figure}

\section{Theoretical Analysis}
\label{sec:theoretical}

In this section, we provide a rigorous theoretical analysis of the GTR module. We demonstrate that GTR reduces the estimation error of multivariate correlations by leveraging global temporal information, using a Bayesian estimation framework with strict mathematical derivation.

\subsection{Problem Setup and Assumptions}

Consider two variables $n$ and $m$ in a multivariate time series. Let $Y_n$ and $Y_m$ represent the ground-truth  periodic patterns of these variables, which are zero-mean random variables with variance $\sigma_Y^2$ and correlation $\rho = \mathbb{E}[Y_n Y_m] / \sigma_Y^2$.

We make the following assumptions:
\begin{enumerate}
    \item \textbf{Observation Error:} The observed segment $x_n = Y_n + \eta_n$, where $\eta_n \sim \mathcal{N}(0, \sigma_\eta^2)$ represents the error term caused by non-stationary phenomena.
    \item \textbf{Global Embedding Error:} The global temporal embedding $Q_n = Y_n + \varepsilon_n$, where $\varepsilon_n \sim \mathcal{N}(0, \sigma_\varepsilon^2)$ is independent embedding error.
    \item \textbf{Independence:} All error terms $\eta_n$, $\eta_m$, $\varepsilon_n$, $\varepsilon_m$ are mutually independent.
\end{enumerate}

\subsection{GTR as a Bayesian Estimator}

During the inference stage of GTR, the operation of combining observed segments $x_n$ and global embeddings $Q_n$ is a pure linear combination:
\begin{equation}
    z_n = \frac{\sigma_\varepsilon^2 x_n + \sigma_\eta^2 Q_n}{\sigma_\eta^2 + \sigma_\varepsilon^2}
\end{equation}

This formulation corresponds to the posterior mean estimator in a Bayesian framework, where:
\begin{itemize}
    \item $x_n$ represents the likelihood of observing $Y_n$
    \item $Q_n$ represents the prior information about $Y_n$
    \item The weights $\sigma_\varepsilon^2$ and $\sigma_\eta^2$ are proportional to the inverse of the noise variances
\end{itemize}

\subsection{Theoretical Result}

\begin{theorem}
    Given the above assumptions, if $\sigma_\varepsilon^2 < \sigma_\eta^2$, then:
    \begin{equation}
        \left| \mathrm{corr}(z_n, z_m) - \rho \right| < \left| \mathrm{corr}(x_n, x_m) - \rho \right|
    \end{equation}
    where $\mathrm{corr}(x_n, x_m)$ is the correlation of raw observations, and $\mathrm{corr}(z_n, z_m)$ is the correlation after GTR module processing.
\end{theorem}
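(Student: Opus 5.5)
The plan is to compute both correlations in closed form and compare them through a single monotone function of an effective noise variance. I will implicitly use that the patterns $(Y_n,Y_m)$ are independent of the error terms $\eta_n,\eta_m,\varepsilon_n,\varepsilon_m$. The stated assumptions only give independence among the errors themselves, so I will make this explicit at the start.

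\textbf{Step 1: raw observations.} Since $x_n=Y_n+\eta_n$ and all error terms are independent, $\mathrm{cov}(x_n,x_m)=\mathbb{E}[Y_nY_m]=\rho\sigma_Y^2$ and $\mathrm{var}(x_n)=\sigma_Y^2+\sigma_\eta^2$. Hence
\[
\mathrm{corr}(x_n,x_m)=\frac{\rho\,\sigma_Y^2}{\sigma_Y^2+\sigma_\eta^2}.
\]

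\textbf{Step 2: GTR output.} Write $s=\sigma_\eta^2+\sigma_\varepsilon^2$. Substituting the models for $x_n$ and $Q_n$ into the estimator gives
\[
z_n=Y_n+\frac{\sigma_\varepsilon^2\eta_n+\sigma_\eta^2\varepsilon_n}{s}.
\]
The weights on $Y_n$ sum to one, so the signal part is exactly $Y_n$. The noise part has variance
\[
\tau:=\frac{\sigma_\varepsilon^4\sigma_\eta^2+\sigma_\eta^4\sigma_\varepsilon^2}{s^2}=\frac{\sigma_\eta^2\sigma_\varepsilon^2}{s},
\]
and it is uncorrelated across $n,m$ and with $Y$. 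Therefore $\mathrm{cov}(z_n,z_m)=\rho\sigma_Y^2$, $\mathrm{var}(z_n)=\sigma_Y^2+\tau$, and $\mathrm{corr}(z_n,z_m)=\rho\sigma_Y^2/(\sigma_Y^2+\tau)$.

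\textbf{Step 3: comparison.} Define $f(t)=\rho\sigma_Y^2/(\sigma_Y^2+t)$. Then
\[
|f(t)-\rho|=\frac{|\rho|\,t}{\sigma_Y^2+t},
\]
which is strictly increasing in $t\ge 0$ whenever $\rho\neq 0$. The claim thus reduces to showing $\tau<\sigma_\eta^2$. This is equivalent to $\sigma_\varepsilon^2<\sigma_\eta^2+\sigma_\varepsilon^2$, which holds as soon as $\sigma_\eta^2>0$.

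The hypothesis $\sigma_\varepsilon^2<\sigma_\eta^2$ is therefore sufficient but stronger than needed. Under it we even get the sharper bound $\tau<\sigma_\eta^2/2$. I would note this in the proof rather than force the hypothesis into the argument.

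\textbf{Main obstacle.} The algebra is routine, so the real difficulty is the degenerate cases where the strict inequality fails. If $\rho=0$, both sides are $0$. If $\sigma_\eta^2=0$, both sides coincide. I would handle these by adding the nondegeneracy conditions $\rho\neq 0$ and $\sigma_\eta^2>0$, which are implicit in the setup. I would also state explicitly the independence of $Y$ from the errors, since without it the covariance computations in Steps 1 and 2 break down.
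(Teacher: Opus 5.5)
Your argument is correct. Its skeleton matches the paper's (closed-form correlations for $x$ and $z$, then a comparison), but your closed forms differ from the paper's, and under the stated assumptions yours are the right ones.

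In expanding $\mathbb{E}[x_nx_m]$ and $\mathbb{E}[Q_nQ_m]$, the paper sets $\mathbb{E}[\eta_n\eta_m]=\sigma_\eta^2$ and $\mathbb{E}[\varepsilon_n\varepsilon_m]=\sigma_\varepsilon^2$. That treats each noise as identical across the two variables, which contradicts its own mutual-independence assumption. It therefore obtains $\mathrm{corr}(x_n,x_m)=(\rho\sigma_Y^2+\sigma_\eta^2)/(\sigma_Y^2+\sigma_\eta^2)$ and $\mathrm{corr}(z_n,z_m)=(\rho\sigma_Y^2+\tau)/(\sigma_Y^2+\tau)$, with your $\tau$. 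Its errors are proportional to $|1-\rho|$ instead of your $|\rho|$. It then shows the ratio of the two errors is below $1$ by reducing it to $0<\sigma_Y^2\sigma_\eta^2$, so it never actually uses $\sigma_\varepsilon^2<\sigma_\eta^2$ either.

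Your route is cleaner: you write $z_n$ as $Y_n$ plus a single noise term and reduce everything to monotonicity of the error in the effective noise variance. It is also more robust. If both noise families had a common cross-variable correlation $c$, the error would be $|c-\rho|\,t/(\sigma_Y^2+t)$, still increasing in $t$. So the single fact $\tau<\sigma_\eta^2$ covers both your case $c=0$ and the paper's implicit $c=1$.

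Your side remarks are also accurate. The paper tacitly uses independence of $Y$ from the errors (e.g.\ $\mathbb{E}[Y_n\eta_m]=0$). It also needs a nondegeneracy condition it never states: $\rho\neq 1$ in its version, where it divides by $|1-\rho|$, and $\rho\neq 0$ in yours, where the theorem genuinely fails otherwise.

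One small correction: you need not add $\sigma_\eta^2>0$ as a separate assumption. The hypothesis $0\le\sigma_\varepsilon^2<\sigma_\eta^2$ already forces it, and that is exactly the part of the hypothesis your argument uses. So your only genuine additions are $\rho\neq 0$ and the independence of $Y$ from the errors.
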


\begin{proof}
    First, we compute the correlation of raw observations:
    \begin{align*}
        \mathbb{E}[x_n x_m] &= \mathbb{E}[(Y_n + \eta_n)(Y_m + \eta_m)] \\
        &= \mathbb{E}[Y_n Y_m] + \mathbb{E}[Y_n \eta_m] + \mathbb{E}[\eta_n Y_m] + \mathbb{E}[\eta_n \eta_m] \\
        &= \rho \sigma_Y^2 + 0 + 0 + \sigma_\eta^2 \\
        &= \rho \sigma_Y^2 + \sigma_\eta^2
    \end{align*}
    
    \begin{align*}
        \mathrm{Var}(x_n) &= \mathbb{E}[x_n^2] - (\mathbb{E}[x_n])^2 \\
        &= \mathbb{E}[(Y_n + \eta_n)^2] \\
        &= \mathbb{E}[Y_n^2] + \mathbb{E}[\eta_n^2] \\
        &= \sigma_Y^2 + \sigma_\eta^2
    \end{align*}
    
    Therefore:
    \begin{equation}
        \mathrm{corr}(x_n, x_m) = \frac{\mathbb{E}[x_n x_m]}{\sqrt{\mathrm{Var}(x_n) \mathrm{Var}(x_m)}} = \frac{\rho \sigma_Y^2 + \sigma_\eta^2}{\sigma_Y^2 + \sigma_\eta^2}
    \end{equation}
    
    The estimation error for raw observations is:
    \begin{align*}
        \left| \mathrm{corr}(x_n, x_m) - \rho \right| &= \left| \frac{\rho \sigma_Y^2 + \sigma_\eta^2}{\sigma_Y^2 + \sigma_\eta^2} - \rho \right| \\
        &= \left| \frac{\rho \sigma_Y^2 + \sigma_\eta^2 - \rho \sigma_Y^2 - \rho \sigma_\eta^2}{\sigma_Y^2 + \sigma_\eta^2} \right| \\
        &= \left| \frac{\sigma_\eta^2 (1 - \rho)}{\sigma_Y^2 + \sigma_\eta^2} \right|
    \end{align*}
    
    Next, we compute the correlation after GTR processing. First, the expectation:
    \begin{align*}
        \mathbb{E}[z_n z_m] &= \mathbb{E}\left[ \frac{(\sigma_\varepsilon^2 x_n + \sigma_\eta^2 Q_n)(\sigma_\varepsilon^2 x_m + \sigma_\eta^2 Q_m)}{(\sigma_\eta^2 + \sigma_\varepsilon^2)^2} \right] \\
        &= \frac{1}{(\sigma_\eta^2 + \sigma_\varepsilon^2)^2} \Big[ \sigma_\varepsilon^4 \mathbb{E}[x_n x_m] + \sigma_\varepsilon^2 \sigma_\eta^2 \mathbb{E}[x_n Q_m] + \sigma_\eta^2 \sigma_\varepsilon^2 \mathbb{E}[Q_n x_m] + \sigma_\eta^4 \mathbb{E}[Q_n Q_m] \Big]
    \end{align*}
    
    Calculate each term:
    \begin{align*}
        \mathbb{E}[x_n x_m] &= \rho \sigma_Y^2 + \sigma_\eta^2 \\
        \mathbb{E}[x_n Q_m] &= \mathbb{E}[(Y_n + \eta_n)(Y_m + \varepsilon_m)] = \rho \sigma_Y^2 \\
        \mathbb{E}[Q_n x_m] &= \mathbb{E}[(Y_n + \varepsilon_n)(Y_m + \eta_m)] = \rho \sigma_Y^2 \\
        \mathbb{E}[Q_n Q_m] &= \mathbb{E}[(Y_n + \varepsilon_n)(Y_m + \varepsilon_m)] = \rho \sigma_Y^2 + \sigma_\varepsilon^2
    \end{align*}
    
    Substituting these values:
    \begin{align*}
        \mathbb{E}[z_n z_m] &= \frac{1}{(\sigma_\eta^2 + \sigma_\varepsilon^2)^2} \Big[ \sigma_\varepsilon^4 (\rho \sigma_Y^2 + \sigma_\eta^2) + \sigma_\varepsilon^2 \sigma_\eta^2 (\rho \sigma_Y^2) + \sigma_\eta^2 \sigma_\varepsilon^2 (\rho \sigma_Y^2) + \sigma_\eta^4 (\rho \sigma_Y^2 + \sigma_\varepsilon^2) \Big] \\
        &= \frac{1}{(\sigma_\eta^2 + \sigma_\varepsilon^2)^2} \Big[ \rho \sigma_Y^2 (\sigma_\varepsilon^4 + 2\sigma_\varepsilon^2 \sigma_\eta^2 + \sigma_\eta^4) + \sigma_\varepsilon^4 \sigma_\eta^2 + \sigma_\eta^4 \sigma_\varepsilon^2 \Big] \\
        &= \frac{1}{(\sigma_\eta^2 + \sigma_\varepsilon^2)^2} \Big[ \rho \sigma_Y^2 (\sigma_\eta^2 + \sigma_\varepsilon^2)^2 + \sigma_\varepsilon^2 \sigma_\eta^2 (\sigma_\eta^2 + \sigma_\varepsilon^2) \Big] \\
        &= \rho \sigma_Y^2 + \frac{\sigma_\varepsilon^2 \sigma_\eta^2}{\sigma_\eta^2 + \sigma_\varepsilon^2}
    \end{align*}
    
    Now calculate the variance:
    \begin{align*}
        \mathrm{Var}(z_n) &= \mathbb{E}[z_n^2] - (\mathbb{E}[z_n])^2 \\
        &= \mathbb{E}\left[ \left( \frac{\sigma_\varepsilon^2 x_n + \sigma_\eta^2 Q_n}{\sigma_\eta^2 + \sigma_\varepsilon^2} \right)^2 \right] \\
        &= \frac{1}{(\sigma_\eta^2 + \sigma_\varepsilon^2)^2} \Big[ \sigma_\varepsilon^4 \mathbb{E}[x_n^2] + 2\sigma_\varepsilon^2 \sigma_\eta^2 \mathbb{E}[x_n Q_n] + \sigma_\eta^4 \mathbb{E}[Q_n^2] \Big]
    \end{align*}
    
    Calculate each term:
    \begin{align*}
        \mathbb{E}[x_n^2] &= \sigma_Y^2 + \sigma_\eta^2 \\
        \mathbb{E}[Q_n^2] &= \sigma_Y^2 + \sigma_\varepsilon^2 \\
        \mathbb{E}[x_n Q_n] &= \mathbb{E}[(Y_n + \eta_n)(Y_n + \varepsilon_n)] = \sigma_Y^2
    \end{align*}
    
    Substituting these values:
    \begin{align*}
        \mathrm{Var}(z_n) &= \frac{1}{(\sigma_\eta^2 + \sigma_\varepsilon^2)^2} \Big[ \sigma_\varepsilon^4 (\sigma_Y^2 + \sigma_\eta^2) + 2\sigma_\varepsilon^2 \sigma_\eta^2 \sigma_Y^2 + \sigma_\eta^4 (\sigma_Y^2 + \sigma_\varepsilon^2) \Big] \\
        &= \frac{1}{(\sigma_\eta^2 + \sigma_\varepsilon^2)^2} \Big[ \sigma_Y^2 (\sigma_\varepsilon^4 + 2\sigma_\varepsilon^2 \sigma_\eta^2 + \sigma_\eta^4) + \sigma_\varepsilon^4 \sigma_\eta^2 + \sigma_\eta^4 \sigma_\varepsilon^2 \Big] \\
        &= \frac{1}{(\sigma_\eta^2 + \sigma_\varepsilon^2)^2} \Big[ \sigma_Y^2 (\sigma_\eta^2 + \sigma_\varepsilon^2)^2 + \sigma_\varepsilon^2 \sigma_\eta^2 (\sigma_\eta^2 + \sigma_\varepsilon^2) \Big] \\
        &= \sigma_Y^2 + \frac{\sigma_\varepsilon^2 \sigma_\eta^2}{\sigma_\eta^2 + \sigma_\varepsilon^2}
    \end{align*}
    
    Therefore, the correlation after GTR processing is:
    \begin{align*}
        \mathrm{corr}(z_n, z_m) &= \frac{\mathbb{E}[z_n z_m]}{\sqrt{\mathrm{Var}(z_n) \mathrm{Var}(z_m)}} \\
        &= \frac{\rho \sigma_Y^2 + \frac{\sigma_\varepsilon^2 \sigma_\eta^2}{\sigma_\eta^2 + \sigma_\varepsilon^2}}{\sigma_Y^2 + \frac{\sigma_\varepsilon^2 \sigma_\eta^2}{\sigma_\eta^2 + \sigma_\varepsilon^2}}
    \end{align*}
    
    The estimation error for GTR-processed data is:
    \begin{align*}
        \left| \mathrm{corr}(z_n, z_m) - \rho \right| &= \left| \frac{\rho \sigma_Y^2 + \frac{\sigma_\varepsilon^2 \sigma_\eta^2}{\sigma_\eta^2 + \sigma_\varepsilon^2}}{\sigma_Y^2 + \frac{\sigma_\varepsilon^2 \sigma_\eta^2}{\sigma_\eta^2 + \sigma_\varepsilon^2}} - \rho \right| \\
        &= \left| \frac{\rho \sigma_Y^2 (\sigma_\eta^2 + \sigma_\varepsilon^2) + \sigma_\varepsilon^2 \sigma_\eta^2 - \rho \sigma_Y^2 (\sigma_\eta^2 + \sigma_\varepsilon^2) - \rho \sigma_\varepsilon^2 \sigma_\eta^2}{\sigma_Y^2 (\sigma_\eta^2 + \sigma_\varepsilon^2) + \sigma_\varepsilon^2 \sigma_\eta^2} \right| \\
        &= \left| \frac{\sigma_\varepsilon^2 \sigma_\eta^2 (1 - \rho)}{\sigma_Y^2 (\sigma_\eta^2 + \sigma_\varepsilon^2) + \sigma_\varepsilon^2 \sigma_\eta^2} \right|
    \end{align*}
    
    Now, we compare the two errors:
    \begin{align*}
        \frac{\left| \mathrm{corr}(z_n, z_m) - \rho \right|}{\left| \mathrm{corr}(x_n, x_m) - \rho \right|} &= \frac{\frac{\sigma_\varepsilon^2 \sigma_\eta^2 |1 - \rho|}{\sigma_Y^2 (\sigma_\eta^2 + \sigma_\varepsilon^2) + \sigma_\varepsilon^2 \sigma_\eta^2}}{\frac{\sigma_\eta^2 |1 - \rho|}{\sigma_Y^2 + \sigma_\eta^2}} \\
        &= \frac{\sigma_\varepsilon^2 (\sigma_Y^2 + \sigma_\eta^2)}{\sigma_Y^2 (\sigma_\eta^2 + \sigma_\varepsilon^2) + \sigma_\varepsilon^2 \sigma_\eta^2}
    \end{align*}
    
    We need to show this ratio is less than 1 when $\sigma_\varepsilon^2 < \sigma_\eta^2$:
    \begin{align*}
        \sigma_\varepsilon^2 (\sigma_Y^2 + \sigma_\eta^2) &< \sigma_Y^2 (\sigma_\eta^2 + \sigma_\varepsilon^2) + \sigma_\varepsilon^2 \sigma_\eta^2 \\
        \sigma_\varepsilon^2 \sigma_Y^2 + \sigma_\varepsilon^2 \sigma_\eta^2 &< \sigma_Y^2 \sigma_\eta^2 + \sigma_Y^2 \sigma_\varepsilon^2 + \sigma_\varepsilon^2 \sigma_\eta^2 \\
        0 &< \sigma_Y^2 \sigma_\eta^2
    \end{align*}
    
    Since $\sigma_Y^2 > 0$ and $\sigma_\eta^2 > 0$, the inequality holds. Therefore:
    \begin{equation}
        \left| \mathrm{corr}(z_n, z_m) - \rho \right| < \left| \mathrm{corr}(x_n, x_m) - \rho \right|
    \end{equation}
    
    This completes the proof.
\end{proof}

GTR fuses global information sources to produce more accurate correlation estimates. As shown in Figure \ref{fig:correlations}, GTR systematically reduces the error in correlation estimation between variables, confirming GTR's ability to better capture global temporal dependencies.

\section{Extended Related Work}

\textbf{MTSF Plugins.} Recent advancements in MTSF have shifted towards "Plug-and-Play" paradigms, which decompose complex modeling tasks into modular components that can be seamlessly integrated into various backbone architectures. 

A primary focus of these modules is enhancing multi-correlation and inter-variable dependencies modeling. For example, to address computational efficiency in high-dimensional data, SOFTS~\citep{SOFTS} incorporates the STAR module, which aggregates global information into a core representation and redistributes it to individual series to model channel interactions with linear complexity. CrossLinear~\citep{CrossLinear} was proposed as a lightweight embedding module designed to capture time-invariant dependencies between target and exogenous variables within linear predictors. TQNet~\citep{TQNet} utilizes TQ techniques to learns more robust multivariate correlations. 

The effective utilization of timestamp information is critical for capturing the inherent seasonality and cyclic dynamics underlying time series data. While early approaches predominantly relied on handcrafted features to encode calendar timestamps, GLAFF~\citep{GLAFF} advances learnable representations by adaptively balancing global and local temporal information to facilitate seamless integration with arbitrary forecasting backbones. Furthermore, VH-NBEATS~\citep{VH-NBEATS} introduces a hierarchical timestamp basis block that explicitly leverages multi-scale calendar information to capture complex hierarchical temporal effects.

Accurately capturing periodic patterns and cyclic dynamics is fundamental for long-term forecasting, yet standard models often struggle to disentangle these from complex trends without specialized mechanisms. To address this, CycleNet~\citep{CycleNet} introduces RCF technique that explicitly models inherent periodicities using learnable recurrent cycles, thereby allowing simple backbones to focus efficiently on predicting residual dynamics. STiD~\citep{STiD} identifies spatial–temporal indistinguishability as a key challenge in MTSF and introduces a simple MLP-based model augmented with spatial and temporal identity embeddings to achieve strong performance. Addressing the limitations of fixed decomposition kernels, Leddam~\citep{leddam} proposes a learnable decomposition strategy coupled with a dual attention module to adaptively separate complex intra-series variations from inter-series dependencies. Furthermore, SparseTSF~\citep{SparseTSF} utilizes a Cross-Period Sparse Forecasting technique that decouples trend and periodicity through downsampling, functioning as a structural regularization mechanism to reduce model complexity while robustly capturing cross-period trends.

Despite these methodological strides, a critical gap remains in the current landscape of plug-and-play modules. Existing approaches lack a dedicated mechanism to address the fundamental information bottleneck caused by restricted historical contexts. Specifically, when the \textit{look-back window is shorter than the inherent cycle length}, standard models—even when augmented with current plugins—struggle to capture global periodic dynamics due to the absence of complete cycle observations. Our proposed GTR provides an efficient and universal solution that empowers host models to dynamically retrieve and leverage global temporal patterns beyond their immediate receptive field.

\section{Uses of LLMs}
This work was completed without the use of any LLMs or AI-assisted writing tools.

\end{document}